%% file: colt2026.tex
\documentclass[12pt,cleveref]{colt2026} %

\usepackage{hyperref} 
\hypersetup{ hidelinks }%
\usepackage{amsfonts}   
\usepackage{nicefrac}    
\usepackage{stmaryrd}
\SetSymbolFont{stmry}{bold}{U}{stmry}{m}{n}
\usepackage{xspace}		%

\usepackage{mathtools}
\SetKwInput{KwInit}{Initialization}
\SetKwInput{KwDef}{Define}
\usepackage{thmtools,thm-restate}
\usepackage{bm}
\usepackage{nicefrac}
\usepackage{etoolbox}
\usepackage{tikz}
\usepackage{mathrsfs}

\AtBeginEnvironment{algorithm}{\setcounter{AlgoLine}{0}}

\newtheorem{condition}{Condition}
\crefname{condition}{condition}{conditions}

\usepackage{macros}

\makeatletter
\expandafter\let\csname enumerate*\endcsname\relax
\expandafter\let\csname endenumerate*\endcsname\relax
\makeatother
\usepackage[inline,shortlabels]{enumitem}	%
\setlist[1]{topsep=\smallskipamount,itemsep=\smallskipamount,left=\parindent}
\setlist[2]{left=0pt}
\setlist[enumerate,1]{label=\upshape(\itshape\alph*\hspace*{1pt}\upshape)}

\DeclarePairedDelimiterXPP{\dnorm}[1]{}{\lVert}{\rVert}{_{\ast}}{#1}		%

\DeclarePairedDelimiterX{\braket}[2]{\langle}{\rangle}{#1,#2}		%
\DeclarePairedDelimiterX{\product}[2]{\langle}{\rangle}{#1,#2}		%

\DeclarePairedDelimiterX{\setdef}[2]{\{}{\}}{#1:#2}		%
\DeclarePairedDelimiterXPP{\exclude}[1]{\mathopen{}\setminus}{\{}{\}}{}{#1}

\newcommand{\debug}[1]{#1}		%

\newcommand{\refines}{\preccurlyeq}
\newcommand{\refined}{\succcurlyeq}

\newcommand{\class}{\debug C}		%

\newcommand{\trait}{\debug T}
\newcommand{\traitalt}{\trait'}
\newcommand{\traitaltalt}{\trait''}
\newcommand{\traits}{\mathscr{\trait}}
\newcommand{\subtraits}{\mathcal{\debug S}}
\newcommand{\iTrait}{\debug i}
\newcommand{\jTrait}{\debug j}
\newcommand{\nTraits}{\debug n}

\newcommand{\iLvl}{\debug j}

\DeclareMathOperator{\req}{\mathtt{\debug{prim}}}

\newcommand{\requires}{\vdash}
\newcommand{\nrequires}{\nvdash}

\newcommand{\higher}{>}

\newcommand{\lattice}{\debug \Lambda}

\newcommand{\elem}{\debug a}		%
\newcommand{\nElems}{\debug K}		%
\newcommand{\elems}{\mathcal{\debug A}}		%

\newcommand{\alt}[1]{#1'}		%

\DeclareMathOperator*{\union}{\bigcup}		%

\newcommand{\txs}{\textstyle}		%

\DeclareMathOperator{\conj}{\mathtt{\debug{combine}}}
\DeclareMathOperator{\cspan}{\mathsf{\debug \Lambda}}

\newcommand{\one}{\mathsf{\debug 1}}
\newcommand{\zero}{\mathsf{\debug 0}}

\newcommand{\new}[1]{#1^{+}}

\newcommand{\brand}{\debug{\ensuremath{\mathtt{brand}}}\xspace}
\newcommand{\kolor}{\debug{\ensuremath{\mathtt{color}}}\xspace}
\newcommand{\model}{\debug{\ensuremath{\mathtt{model}}}\xspace}
\newcommand{\type}{\debug{\ensuremath{\mathtt{type}}}\xspace}
\newcommand{\yeer}{\debug{\ensuremath{\mathtt{year}}}\xspace}

\title[Leveraging Similarities in Multi-Armed Bandits]{Leveraging Similarities in Multi-Armed Bandits}
\usepackage{times}

\coltauthor{%
 \Name{Khaled Eldowa} \Email{khaled.eldowa@inria.fr}\\
 \addr Univ. Grenoble Alpes, 
 Inria, CNRS, Grenoble INP, LJK,
38000 Grenoble, France
 \AND
  \Name{Thibaud Rahier} \Email{t.rahier@criteo.com}\\
 \addr Criteo AI Lab, Paris, France
 \AND
  \Name{Augustin Cablant} \Email{augustin.cablant@inria.fr}\\
 \addr Univ. Grenoble Alpes, 
Inria, CNRS, Grenoble INP, LJK,
38000 Grenoble, France
 \AND
  \Name{Panayotis Mertikopoulos} \Email{panayotis.mertikopoulos@imag.fr}\\
 \addr Univ. Grenoble Alpes, Inria, CNRS,  Grenoble INP, LIG, 38000 Grenoble, France
 \AND
 \Name{Pierre Gaillard} \Email{pierre.gaillard@inria.fr}\\
 \addr Univ. Grenoble Alpes, 
 Inria, CNRS, Grenoble INP, LJK,
38000 Grenoble, France%
}

\begin{document}
\maketitle

\begin{abstract}%
In many online learning and bandit problems, the actions we consider possess inherent similarities\textendash for instance because they share latent traits, tags, or hierarchical structure. We study online learning with a similarity-structured action set, encoded by a rooted tree whose leaves are the actions and whose levels quantify how closely two actions are related. The loss sequence is assumed tree-compatible: losses of similar actions are constrained to be close. We establish an impossibility result showing that usual one-point bandit feedback cannot, in general, leverage range or tree-induced similarity, even under very strong similarity constraints. We then provide a unified set of algorithms which adapt to a wide range of richer feedback models, from semi-bandit feedback down to multi-point bandit protocols, including the minimal two-point feedback setting. We show these algorithms exhibit best-of-both-worlds guarantees and provably exploit action similarities by replacing the number of actions $K$ by a similarity-aware effective number of actions $K_{\mathrm{eff}}$ in the regret bounds. As an application, we show that under two-point feedback, it is possible to achieve $\sqrt{T}$ regret in Lipschitz bandits when $d \leq 2$.
\end{abstract}

\begin{keywords}%
  Online Learning, Structured Bandits, Multi-Point Feedback, Best-of-Both-Worlds.
\end{keywords}

\section{Introduction}

\input{1_Introduction}

\section{Preliminaries} \label{sec:preliminaries}

\input{2_Preliminaries}

\section{Lower Bound: The Insufficiency of One-Point Feedback} \label{sec:lowerbound}
\input{5_Alt_LowerBound}

\section{Basic Results for the Standard Bandit Problem}
To properly contextualize the techniques used in the coming sections, we lay out here a brief summary of some basic tools and results pertaining to the standard bandit problem.
A standard approach for tackling said problem is through a reduction to online convex optimization, employing an algorithm like follow-the-regularized-leader (FTRL) or mirror descent over the probability simplex of the actions as a decision set, see, e.g., \citep{orabona2025}.
In particular, if we fix a sequence $(z_t)_{t \geq 1}$ of vectors in $\R^K$ and a sequence of regularizers $(\psi_t)_{t \geq 1}$ with $\psi_t \colon \R^K \rightarrow \R$ being a convex function whose domain includes $\Delta_K$, the FTRL algorithm produces a sequence of distributions over actions $(p_t)_{t \geq 1}$ given by
\[
    p_1 = \argmin_{p \in \Delta_K} \psi_1(p) \quad \text{and} \quad p_{t} = \argmin_{p \in \Delta_K} \ban{\summ_{s \leq t-1} z_s,p} + \psi_{t}(p) \:\:\text{for}\, t\geq 2\,,
\]
which one can utilize by sampling $A_t$ from $p_t$.
A standard choice for $z_t(a)$ is %
$\indicator{a = A_t} y_t(a) /  p_t(a)$, which is an unbiased estimate of $y_t(a)$.
The last ingredient to specify is the regularizer $\psi_t$. 
A common form in bandit problems is $\psi_t(p) \coloneqq (1/\eta_t) \sum_{a \in \cA} f(p(a))$ for $p \in \Delta_K$, where $f$ is a one-dimensional convex function.
Assuming $f$ is Legendre (see \citealt[Chapter 26]{rockafellar1970}), it holds for any $q \in \Delta_K$ that 
\begin{multline} \label{eq:ftrl-bias-variance-bound} \textstyle
        \sum_{t=1}^T \inprod{z_t, p_t - q} \leq \psi_{T+1}(q)-\psi_{1}(p_1) + \sum_{t=1}^T \psi_{t}(p_{t+1}) - \psi_{t+1}(p_{t+1})
        \\\textstyle + \sum_{t=1}^T ({1}/{\eta_t}) \sum_{a \in \cA} D_{f^*}\brb{f'(p(a)) -  \eta_t (z_t(a) - y_t(A_t))  \big\|f'(p(a))}\,,
\end{multline}
where $D_{f^*}$ is the Bregman divergence induced by the convex conjugate of $f$.
Note that the expectation of the left-hand side is the regret of the learner that samples $A_t$ from $p_t$.
The terms in the first line of the bound are collectively referred to as the bias term, which depends primarily on the magnitude of the regularizer; while the last sum is usually referred to as the variance term, this is a measure of the stability of the sequence $(p_t)_{t \geq 1}$, which depends on the magnitude of the losses and the curvature of the regularizer.

Setting $\eta_t = \eta / \sqrt{t}$ with $\eta > 0$ and $f(x) = x \ln x$ recovers the well-known EXP3 algorithm, whose variance and bias terms are respectively bounded in expectation by $\eta K \sum_{t=1}^T 1/\sqrt{t}$ and $(1/\eta) \E \sum_{t=1}^T (1/\sqrt{t}) \sum_{a} p_t(a) \ln (1/p_t(a))$ up to constants, assuming $y_t \in [0,1]^K$.
A suitable tuning of $\eta$ yields a regret bound of order $\sqrt{KT \ln K}$.
Note that the regularizer used by EXP3 is the negative Shannon entropy function.
Alternatively, one can choose $f(x) = 2 (x - \sqrt{x})$, which yields the negative $1/2$-Tsallis entropy as the regularizer.
Using this function with $\eta_t = 1 / \sqrt{t}$, the bias and variance terms are nicely balanced, both being bounded in expectation by $\E \sum_{t=1}^T (1/\sqrt{t}) \sum_{a}(\sqrt{p_t(a)} - p_t(a))$ up to constants.
This yields an improved regret bound of order $\sqrt{K T}$, which is unimprovable in the minimax sense \citep{auer2002finite}. 

Besides a minor improvement in the worst case regime, this algorithm enjoys a major advantage against easy loss sequences.
In particular, since its regret scales with the sum of the Tsallis entropy of $p_t$ at each round, one expects improved performance if the sequence $(p_t)_{t \geq 1}$ rapidly concentrates mass on a single action.
One such scenario is when the losses are drawn at every round in an i.i.d. manner from a fixed distribution.
More generally, under \Cref{condition:stochastic},
the algorithm enjoys a regret bound of order $\sum_{a \colon \Delta(a) > 0} \ln (T) / \Delta(a)$, see \citep{zimmert2021tsallis}.
It is worth noting that one can also achieve a logarithmic regret bound in the stochastic setting by equipping EXP3 with a certain adaptive sequence of learning rates $(\eta_t)_{t \geq 1}$, even if it is an inferior bound of order $K \ln (T)^2 / \min_{a \colon \Delta(a) > 0 } \Delta(a)$, see \citep{ito-bobw-graphs}.

\section{Warm-up: Full-Path Feedback}
\label{sec:semi-bandit-feedback}

In this section, we adopt the more structured model of \cite{martin2022nested}.
The purpose here is to illustrate our results, specifically, our best-of-both-worlds bounds, in a simpler setting and provide an algorithmic template to build upon in the coming sections.
To recall, compared to ours, the setting of \cite{martin2022nested} involves two interdependent restrictions.
The first, posits that the adversary selects at every round a function $\ell_t \colon \cV \rightarrow [0,1]$ assigning a loss to every node in the tree such that $\ell_t(v) \leq R_j$ for all $v \in V_j$, where $(R_j)_{j \in [L]}$ are fixed positive weights in $[0,1]$. 
The loss $y_t(a)$ of any action $a \in \cA$ is then taken as $\sum_{\ancof{v}{a}} \ell_t(v)$.
Note that our level-wise smoothness property can be imposed on this model taking $R_j = \scale_j - \scale_{j+1}$, though the class of admissible loss functions is now smaller.
The second and more crucial restriction is that the learner gets to observe all the values $(\ell_t(v))_{\ancof{v}{a}}$ upon choosing $A_t = a$.
In practical scenarios, it can be difficult to specify the $\ell_t$ function and unrealistic to assume we can access its values along the whole path from the root to the played action.
Nevertheless, we entertain this model for the moment as a starting point, providing in the process improved results compared to \cite{martin2022nested}.

In the worst case (adversarial) regime, \cite{martin2022nested} showed that it is possible to leverage the similarity structure of the problem and the extra feedback to improve upon the minimax optimal $\sqrt{K T}$ regret rate of the standard bandit problem.
In particular, they achieve a bound of order $\sqrt{\keff(R) T \log K}$, where
$
    \keff(R) \coloneqq \brb{\sum_{j \in [L]} {R}_j \sqrt{|V_j|}}^2 
$
is seen as the effective number of actions.
When $\sum_{j \in [L]} R_j = 1$, as assumed by \cite{martin2022nested}, $\keff$ is never worse than $K$, and can be much smaller when the tree has a favorable structure, an example is provided at the end of this section.
The algorithm they use to achieve this result is a nested version of the EXP3 algorithm, which can be seen as an instance of FTRL with two main features that distinguish it from the examples provided in the previous section.
Firstly, they leverage the semi-bandit feedback by using $\hat{\ell}_t(v) \coloneqq \indicator{\ancof{v}{A_t}} \ell_t(v) / p_t[v]$ as an unbiased estimate of $\ell_t(v)$, leading to the choice of $z_t(a) = \sum_{\ancof{v}{a}} \hat{\ell}_t(v)$.\footnote{We use $\indicator{\cdot}$ to denote the indicator function.}
The second feature is the use of a nested Shannon entropy regularizer defined as
$
   \psi_t(p) = ({1}/{\eta_t}) \sum_{j \in [L]} \alpha_j \sum_{v \in V_j} p[v] \ln p[v] \,,
$
which is a weighted sum (with non-negative weights $(\alpha_j)_{j \in [L]}$) of the Shannon entropy of the aggregated version of $p$ at every level.

Ultimately though, the use of this special form of regularizer is not fully justified by the results of \cite{martin2022nested}. 
In particular, they present an upper bound of order $(\keff(R) / \alpha_L) \sum_{t=1}^T \eta_t$ on the variance term of their algorithm, which holds even if $\alpha_1 = \dots = \alpha_{L-1} = 0$, i.e., the standard version of EXP3.
Their bound on the bias term takes a complex form that does involve all the weights, but is less significant as its dependence on the number of actions is logarithmic.
To gain more insight, we show that for any regularizer of the form $\psi_t(p) = ({1}/{\eta_t}) \sum_{j \in [L]} \alpha_j \sum_{v \in V_j} f(p[v])$ with $f$ being Legendre, the bound in \eqref{eq:ftrl-bias-variance-bound} holds with the last sum replaced by (see \Cref{lem:simple-variance-generic})
\begin{align*} \textstyle
    \sum_{t=1}^T {1}/{\eta_t}  \sum_{j \in [L]} \alpha_j \sum_{v \in V_j} D_{f^*}\brb{f'(p[v]) -  \eta_t \zeta_t(v) / \alpha_j  \big\|f'(p[v])} \,,
\end{align*}
where $\zeta_t \colon \cV \rightarrow \R$ is any function satisfying $\sum_{\ancof{v}{a}} \zeta_t(v) - \sum_{\ancof{w}{a'}} \zeta_t(w) = z_t(a) - z_t(a')$ for all $a,a' \in \cA$.
When $f(x) = x \ln x$, one can show that choosing $\zeta_t(v) = \indicator{v \in \cA}z_t(a)$, i.e., pushing all the weights to the last level, allows recovering the aforementioned bound of $(\keff(R) / \alpha_L) \sum_{t=1}^T \eta_t$. 
It is unclear if this can be significantly improved upon; moreover, one can safely neglect the tradeoff with the bias term (considering the tuning of the $\alpha$ weights) due to the great disparity of magnitude between bias and variance in this case (logarithmic vs linear in the number of actions).

On the other hand, picking $f(x) = 2 (x - \sqrt{x})$ and making the more natural choice of $\zeta_t(v) = \hat{\ell}_t(v)$ (up to a constant shift) balances the bias and variance bounds, as long as $\alpha_j \propto R_j$. %
Specifically, one obtains a bound of order  
$ \sum_{t=1}^{T} {1}/{\sqrt{t}} \, \E \sum_{j \in [L]} {R}_j \sum_{v \in V_j}  \sqrt{p_t[v]} - p_t[v] \,,
$
which %
is an analogue of the entropy-dependent bound discussed in the previous section for the unstructured bandit problem.
The advantage of this result is, again analogously to the discussion in the previous section, that it can be used to obtain improved bounds in more benign scenarios where the losses are drawn at every round from a fixed distribution, which was not addressed by \cite{martin2022nested}.
Crucially, the appearance of the scale parameters in this bound allows leveraging the structure of the problem to obtain improved bounds (compared to unstructured bandits) also in the stochastic case.
Imperative to this is the nested form of the regularizer, which injects the scale parameters also into the bias term.
We present then in the following theorem a best-of-both-worlds: a slight improvement of the worst-case bound of \cite{martin2022nested} and a new structure-aware logarithmic regret bound holding in the stochastic setting. Recall that $\keff(R) \coloneqq \brb{\sum_{j \in [L]} {R}_j \sqrt{|V_j|}}^2$, and define
\[
\textstyle \keffsto(R) \coloneqq \brb{ \sum_{j \in [L]} R_j \sqrt{ \Gamma(j,\Delta) }}^2 \quad  \text{and} \quad  \displaystyle
        \Gamma(j,\Delta) \coloneqq \sum_{v \in V_j \colon \min_{\descof{a}{v}} \Delta(a) \neq 0}  \frac{1}{\min_{\descof{a}{v}} \Delta(a)} \,.
\]    
\vspace*{-10pt}
\begin{restatable}{rethm}{thmsemi}\label{thm:semi-bandit-feedback}
    FTRL with $
        \psi_t(p) = \sqrt{ 5 \max\{t, 20\} } \sum_{j \in [L]} R_j \sum_{v \in V_j} p[v] - \sqrt{p[v]}  
    $ and $z_t(a) = \sum_{\ancof{v}{a}} \indicator{\ancof{v}{A_t}} {\ell_t(v)} /{p_t[v]}$
    satisfies
    $$
        \reg{T}{1} \leq \left\{ 
        \begin{array}{ll}
        15 \sqrt{\keff(R)} + 4 \sqrt{5  \keff(R) T } & \\
        30 \sqrt{\keff(R)} + 20 \keffsto(R) \ln(eT) & \text{(under \Cref{condition:stochastic})} 
    \end{array} \right. 
    $$
    
\end{restatable}
The proof can be found in \Cref{app:semibandit}.
For comparison, algorithms like UCB and Tsallis-INF enjoy in the stochastic setting a regret bound of order $\sum_{a \colon \Delta(a) > 0} \ln (T) / \Delta(a)$, which is not improvable in general in the unstructured bandit problem. 
Defining $K_\Delta \coloneqq \sum_{a \colon \Delta(a) > 0} 1 / \Delta(a)$, it is easy to see that $\keffsto(R)$ improves upon $K_\Delta$, assuming as before that $\sum_{j=1}^L R_j = 1$.
In particular, Jensen's inequality gives that $\keffsto(R) \leq \sum_{j\in[L]} R_j \Gamma(j,\Delta)$, and clearly, $\Gamma(j,\Delta) \leq K_\Delta$ at any level $j$ since $|V_j| \leq K$ and every action has a unique ancestor in $V_j$.
In fact, it is usually strictly better; we demonstrate now that $\keffsto(R)$ may be exponentially smaller than $K_\Delta$ on a simple example with $\cA \subset [0,1]$ and a dyadic partitioning of the space that correlates the losses of the actions. Let $L \geq 1$ and consider for each level $V_\lvl := \big\{ i 2^{-\lvl}; i=0,\dots,2^{\lvl}-1\big\}$, $\cA := V_L$ and $K = 2^L$. For each node $v \in V_\lvl$, we set $\prt{v} = \lfloor v 2^{\lvl -1}\rfloor2^{1-\lvl}$%
and $\ell_t(v) = 0.5(v - \prt{v})+2^{-(j+1)} \epsilon_t(v)$ with $\epsilon_t(v) \sim \text{Bernoulli}(1/2)$ (we assign a loss 0 to the root). This implies that $\ell_t(v) \in [0,2^{-j}]$ and $\min_{\descof{a}{v}} \Delta(a) = v/2$.
Then, for all $j \in [L]$, 
$
   \textstyle \smash{\Gamma(j,\Delta) 
   = \sum_{i=1}^{2^j-1} \frac{2^{j+1}}{i} \leq (j+1) 2^{j+1}} \,,
$
which, using $R_j \leq 2^{-j}$, yields
$
     \textstyle \smash{\keff(R) 
    =  \brb{\sum_{j \in [L]} 2^{-\frac{j}{2}}}^2 \leq 6}
$
and
\[
   \textstyle  \keffsto(R) \coloneqq \brb{ \sum_{j \in [L]} R_j \sqrt{ \Gamma(j,\Delta) }}^2 \leq  \brb{ \sum_{j \in [L]} 2^{\frac{1-j}{2}} \sqrt{j+1} }^2\leq 6 (L+1) = 12(\log_2 K +1).
\]
On the other hand, we have $K_\Delta = \Gamma(L,\Delta) \geq 2^{L+1} L \ln 2 = 2K \ln K$. 

\section{Multi-Point Bandit Feedback}\label{sec:bandit-feedback}
Having seen that exploiting the structure of our problem is impossible under (one-point) bandit feedback, we seek to demonstrate in this section that one can skirt this obstacle by executing more than one action in each round and observing all their losses.
Precisely, just making two observations per round is already sufficient, but in some case observing more (up to $L+1$) can lead to better performance. 
Importantly, the learner does not make these observations for free, but pays the average loss of the executed actions (some of which can be duplicates).
This is decidedly milder than the feedback model considered in \Cref{sec:semi-bandit-feedback}; not least of all because the environment (its feedback mechanism in particular) is not strongly coupled to the structure of the tree.%

Nevertheless, the manner in which we exploit this limited feedback is to simulate the full-path feedback of \Cref{sec:semi-bandit-feedback} and use the same algorithmic template.
There, we leveraged the fact that at every round, there exists a loss function $\ell_t \colon \cV \rightarrow \R$ such that $\textbf{(i)}$ $\sum_{\ancof{v}{a}} \ell_t(v) = y_t(a)$ for all $a \in \cA$; $\textbf{(ii)}$ one observes $(\ell_t(v))_{\ancof{v}{A_t}}$ after executing $A_t$; and $\textbf{(iii)}$ $\ell_t(v) \leq R_j$ for all $v \in V_j$.
Here, however, we will build our own $\ell_t$ function satisfying similar properties.
Firstly, we associate with every node $v \in \overbar{\cV}$ an action $\omega_t(v)$ that is sampled from $p_t$ conditioned on having $v$ as an ancestor.
Then, define
\[
    \ell_t(v) \coloneqq y_t(\omega_t(v)) - y_t(\omega_t(\prt{v})) \,.
\]
Now, it is easy to see that Property $\mathbf{(i)}$ holds up to a constant shift (by $y_t(w_t(v_0))$), which suffices.
Moreover, an analogue of Property $\mathbf{(iii)}$ holds via the loss smoothness condition \eqref{eq:smoothness condition}; in particular, $|\ell_t(v)| \leq \scale_j$ for all $v \in V_j$.
Finally, for Property $\mathbf{(ii)}$, we sample an action $A_{t,1} \sim p_t$ and execute it along with $\omega_t(v)$ for all ancestors $v$ of $A_{t,1}$ (including $v_0$), thus we observe $(\ell_t(v))_{\ancof{v}{A_{t,1}}}$.
Note that in this manner, we execute $L+1$ actions, the marginal law of each is $p_t$, conditioned on the events of past rounds.
In practice, of course, one only needs to sample $\omega_t(\cdot)$ at the ancestors of $A_{t,1}$.

Using this reduction, we can recover the bounds of \Cref{thm:semi-bandit-feedback}, but now for $\reg{T}{L+1}$ and with $R$ swapped for $\sigma$.
Going a step further, at a slight cost, we can get away with observing $\ell_t$ at only one node, which we can achieve by executing just two actions.
In particular, we sample a level $j_t \in [L]$ from a fixed distribution $\delta$ and a node $v_t$ with probability $\propto \indicator{\cdot \in V_{j_t}}p_t[\cdot]$.
Then, we simply execute $\omega_t(v_t)$ and $\omega_t(\prt{v_t})$, giving us access to $\ell_t(v_t)$.
A more direct version of this procedure is given in \Cref{alg:2-point-feedback}.
Note that we account for the reduced feedback by correcting the loss estimate with a factor of $1/\delta_{j_t}$.
Additionally, we shift $\ell_t$ by a level-dependent constant to enforce non-negativity for technical reasons.
Ultimately, we arrive at the following theorem (see \Cref{app:bandit}).

\LinesNumbered 
\begin{algorithm}[t] 
 \KwIn{sequence of regularizers $(\psi_t)_{t \geq 1}$, sequence of constants $(b_j)_{j=1}^L$, distribution $\delta \in \Delta_{L}$}
 \KwInit{$p_1 = \argmin_{p \in \Delta_K} \psi_1(p)$}
 \For{$t \geq 1$}{
   
    sample $A_{t,1} \sim p_t\,,$
    $j_t \sim \delta\,,$ and 
    $A_{t,2} \sim p_t \mid A_{t,2} \comanc_{j_t-1} A_{t,1}$
    
    for all $a \in \cA$, set 
    $\displaystyle
        z_t(a) = \indicator{ a \comanc_{j_t} {A_{t,1}} } \frac{y_t(A_{t,1}) - y_t(A_{t,2}) + b_{j_t}} {\delta_{j_t} p_t[\anc{A_{t,1}}{j_t}]}
    $

    set $\displaystyle p_{t+1} = \argmin_{p \in \Delta_K} \ban{\summ_{s \leq t} z_s,p} + \psi_{t+1}(p)$
 }
 \caption{FTRL with 2-Point Bandit Feedback} \label{alg:2-point-feedback}
\end{algorithm}

\begin{restatable}{rethm}{thmtwopoint}\label{thm:2-point-feedback}
    \Cref{alg:2-point-feedback} with $ \psi_t(p) = 2\sqrt{6} \sum_{j \in [L]} (\scale_j/\sqrt{\delta_j})  \sqrt{\max\bcb{t, {6}/{\delta_j}}} \sum_{v \in V_j} x[v] - \sqrt{x[v]}
    $
    and $b_\lvl = \scale_\lvl$ for all $\lvl \in [L]$ satisfies \vspace*{-5pt}
    $$ 
        \reg{T}{2} \leq \left\{\begin{array}{ll} 
        18 \sqrt{\keff(\sigma/{\delta})} + 8 \sqrt{6}  \sqrt{\smash[b]{\keff(\sigma/\sqrt{\delta})} T}  \\
          36 \sqrt{\keff(\sigma/{\delta})} + 96 \keffsto(\sigma/\sqrt{\delta})\ln(eT) & \text{(under~\Cref{condition:stochastic})}
    \end{array} \right. \,.
    $$
\end{restatable}

We used here the notation of \Cref{thm:semi-bandit-feedback}, specifically, $\keff(\sigma/\sqrt{\delta}) \coloneqq \brb{\sum_{j} ({\scale_j}/{\sqrt{\delta_j}}) \textstyle \sqrt{|V_j|}}^2$
and $\keffsto(\sigma/\sqrt{\delta}) \coloneqq \brb{\sum_{j} ({\scale_j}/{\sqrt{\delta_j})} \textstyle \sqrt{\Gamma(j,\Delta)}}^2$.
The choice of $\delta$ was left open in the bounds above. 
At least for the adversarial regime, one can choose $\delta$ so as to minimize $\keff(\sigma/\sqrt{\delta})$.
In favorable cases, carefully choosing $\delta$ can render its effect on the bound negligible, see \Cref{sec:build-tree}.
In general, choosing the uniform distribution is a safe choice that results in the bound scaling with an extra factor of $\sqrt{L}$ in the adversarial regime and $L$ in the stochastic one.
If one has access to $m$-point feedback, where $2\leq m \leq L+1$, then \Cref{alg:2-point-feedback} can be generalized by randomly drawing $m-1$ levels at every round.
We show in \Cref{app:bandit} that the bounds of \Cref{thm:2-point-feedback} continue to hold in that case, with $\delta_j$ denoting the probability that level $j$ is amongst the chosen levels at any round.
Drawing the $m-1$ levels uniformly at random without replacement results in the bounds scaling with $\sqrt{L/(m-1)}$ and $L/(m-1)$ in the adversarial and the stochastic regimes respectively.

The issue of $\delta$ aside, the utility of the bounds presented here (compared to standard bandit bounds) relies on the tree being carefully constructed; specifically, that the values of $\sigma_j$ decay quickly relative to the increase in $|V_j|$.
For instance, in the example presented in \Cref{sec:semi-bandit-feedback}, $\sigma_j \leq 2^{-j+1}$, hence the bounds derived there continue to hold up to constant factors, illustrating again the huge advantage that $\keff$ and $\keffsto$ can bring.
In general, these parameters are to be used as an efficiency criteria when constructing a tree from a similarity structure.
We provide more insights on this issue in the next section.

\section{Application: Lipschitz Bandits with Two-Point Feedback}
\label{sec:build-tree}
\input{BuildingTheTree}

\acks{This work was supported by funding from the French
government, managed by the National Research Agency (ANR), under the France 2030 program, reference ANR-23-IACL-0006. Additionally, this work was supported in part by ANR in the framework of the PEPR IA FOUNDRY project (ANR-23-PEIA-0003). 
PM is also a member of the Archimedes Research Unit/Athena RC, and was partially supported by project MIS 5154714 of the National Recovery and Resilience Plan Greece 2.0 funded by the European Union under the NextGenerationEU Program.}

\bibliography{colt2026}

\newpage

\appendix
\crefalias{section}{appendix} %
\crefalias{subsection}{appendix} %

\section{Motivation: Similarity structures and related notions}
\label{app:similarities}
\input{App-Similarities}

\section{Proofs of \texorpdfstring{\Cref{sec:lowerbound}}{Section~\ref{sec:lowerbound}}}\label{app:lowerbound}
We provide here the proof of \Cref{prop:one-point-tree-lb}, which is restated below.
\proplowerbound*
\begin{proof}
Let $\rho\coloneqq \scale_L$. We define

\[
C_\rho \coloneqq \Big\{ x\in[0,1]^K : \max_{a,b\in\cA}\abs{x(a)-x(b)} \le \rho \Big\}
\qquad \text{and} \qquad
C_\rho^T \coloneqq \Big\{ (y_t)_{t=1}^T : \forall t,\ y_t\in C_\rho \Big\}.
\]

By \citet[Corollary~4]{gerchinovitz2016refined}, for $K>2$, $T>32(K-1)\log(14)$, and
$\rho > 0.22\sqrt{(K-1)/T}$, any randomized one-point bandit algorithm satisfies
\[
\sup_{(y_t)_{t=1}^T \in C_{\rho}^T}\ \E\!\left[\reg{T}{1}(y_{1:T})\right]
\;>\; \frac{1}{504}\sqrt{T(K-1)}.
\]
It remains to show that $C_{\rho}^T \subseteq \cY^T_\scale$.
Fix any $x\in C_{\rho}$ and any $a,b\in\cA$. Then $\abs{x(a)-x(b)}\le \rho=\scale_L$.
Since $s(a,b)\in[L]$ for $a\neq b$ and $(\scale_j)$ is nonincreasing, we have $\scale_L \le \scale_{s(a,b)}$.
Thus $\abs{x(a)-x(b)} \le \scale_{s(a,b)}$, so $x$ satisfies the tree-compatibility constraint.
Therefore $C_{\rho}^T \subseteq \cY^T_\scale$, and taking the supremum over $\cY^T_\scale$ yields the proposition.
\end{proof}

\section{Proofs of \texorpdfstring{\Cref{sec:semi-bandit-feedback}}{Section~\ref{sec:semi-bandit-feedback}}}\label{app:semibandit}
\LinesNumbered 
\begin{algorithm}[t] 
 \KwIn{sequence of regularizers $(\psi_t)_{t \geq 1}$}
 \KwInit{$p_1 = \argmin_{p \in \Delta_K} \psi_1(p)$}
 \For{$t \geq 1$}{
    sample $A_{t} \sim p_t$ and set 
    $
        z_t(a) = \sum_{j \in [L]} \indicator{a \comanc_{j} A_t} \frac{\ell_t(\anc{A_t}{j})} {p_t[\anc{A_t}{j}]}
    $
    for all $a \in \cA$
    
    set $\displaystyle p_{t+1} = \argmin_{p \in \Delta_K} \ban{\summ_{s \leq t} z_s,p} + \psi_{t+1}(p)$
 }
 \caption{FTRL with Semi-Bandit Feedback} \label{alg:semi-bandit-feedback}
\end{algorithm}

For convenience, we summarize the algorithm template used in \Cref{sec:semi-bandit-feedback} in \Cref{alg:semi-bandit-feedback}.
We prove the following proposition before moving on to the proof of \Cref{thm:semi-bandit-feedback}, which is restated afterwards.
\begin{restatable}{reprop}{propsemi}\label{prop:semi-bandit-feedback}
    \Cref{alg:semi-bandit-feedback} with %
    $\displaystyle
        \psi_t(p) = \sqrt{ 5 \max\{t, 20\} } \sum_{j \in [L]} R_j \sum_{v \in V_j} p[v] - \sqrt{p[v]}  
    $
    satisfies that
    \[ \displaystyle
        \reg{T}{1} \leq 15 \sum_{j \in [L]} {R}_j \textstyle \sqrt{|V_j|} \displaystyle + 2 \sqrt{5} \sum_{t=1}^{T} \frac{1}{\sqrt{t}} \E \sum_{j \in [L]} {R}_j \sum_{v \in V_j}  \sqrt{p_t[v]} - p_t[v] \,. 
    \] 
\end{restatable}
\begin{proof} 
    This result is an application of \Cref{lem:ftrl-nested-tsallis-sparse} with $c=5$; $\xi_j = 1$ for all $j \in [L]$; $M_t = [L]$ for all $t \geq 1$; and filtration $(\cF_t)_{t \geq 0}$ with $\cF_t$ being the $\sigma$-algebra generated by $(A_s)_{s \leq t}$ and $(\ell_s)_{s \leq t}$.
    In particular, as \Cref{lem:ftrl-nested-tsallis-sparse} requires, it holds that $\ell_t(v) \in [0, R_j]$ for all $j\in[L]$, $v \in V_j$, and $t \geq 1$; $p_t$ is measurable \wrt $\cF_{t-1}$; the law of $A_t$ conditioned on $\cF_{t-1}$ is $p_t$; and for all $a,a' \in \cA$,
    \[\sum_{\ancof{v}{a}} \E_t[\ell_t(v) \mid \ancof{v}{A_t}] - \sum_{\ancof{w}{a'}} \E_t[\ell_t(w) \mid \ancof{w}{A_t}] = \E_t[y_t(a) - y_t(a')]\]
    simply since $\sum_{\ancof{v}{a}} \ell_t(v) = y_t$ and $\ell_t(v)$ is independent of $\indicator{\ancof{v}{A_t}}$ conditioned on $\cF_{t-1}$ as the adversary can only react to the learner's decisions that were made up to the previous round.
\end{proof}

\thmsemi*
\begin{proof}
    Starting from the bound of \Cref{prop:semi-bandit-feedback}, one can use the facts that $\sum_{v \in V_j}  \sqrt{p_t[v]} - p_t[v] \leq \sqrt{|V_j|}$ for any $j \in [L]$ and that $\sum_{t=1}^T 1/\sqrt{t} \leq 2 \sqrt{T}$
    to directly obtain the first result.
    Concerning the second result, \Cref{lem:variational-bound-tsallis} implies that 
    \begin{align*}
        2 \sum_{j \in [L]} \frac{2 \sqrt{5} R_j}{\sqrt{t}} \sum_{v \in V_j}  \sqrt{p_t[v]} - p_t[v]   \leq \inprod{\Delta,p_t} + \frac{20}{t} \sum_{j \in [L]} {R_j^2} \sum_{v \in V_j \colon \tilde{\Delta}(v) \neq 0} \frac{1}{ \tilde{\Delta}(v)} 
    \end{align*}
    for any $\tilde{\Delta} \colon \cV \rightarrow \R$ such that $(i)$ $\tilde{\Delta}(v) \geq 0$ $\forall v \in V$, $(ii)$ $\Delta(a) \geq \sum_{\ancof{v}{a}} \tilde{\Delta}(v)$ $\forall a \in \cA$, and $(iii)$ $|\{v \in V_j \mid \tilde{\Delta}(v) = 0\}| \leq 1$ for all $j \in [L]$.
    Then, using that $\E\Delta(A_t) = \E \inprod{\Delta,p_t}$, we obtain that
    \begin{align*}
        \reg{T}{1} \leq  15 \sum_{j \in [L]} {R}_j \sqrt{|V_j|} + \frac{1}{2} \E \lsb{ \sum_{t=1}^T \Delta(A_t) } + 10 \sum_{t=1}^T \frac{1}{t} \sum_{j \in [L]} {R_j^2} \sum_{v \in V_j \colon \tilde{\Delta}(v) \neq 0} \frac{1}{ \tilde{\Delta}(v)} \,.
    \end{align*}
    Then, using that $\sum_{t=1}^T 1/t \leq \ln(eT)$ and that, by assumption, the second term is at most $\reg{T}{1} / 2$, we get that
    \begin{align} \label{eq:stochastic-bound-general}
        \reg{T}{1} \leq  30 \sum_{j \in [L]} {R}_j \sqrt{|V_j|} + 20 \sum_{j \in [L]} {R_j^2} \sum_{v \in V_j \colon \tilde{\Delta}(v) \neq 0} \frac{\ln(e T)}{ \tilde{\Delta}(v)} \,.
    \end{align}
    To reach a more concrete bound we impose now a simple form on $\tilde{\Delta}$.
    In particular, let $\hat{\Delta}(v) \coloneqq \min_{\descof{a}{v}} \Delta(a)$, and fixing any distribution $q \in \Delta_{L}$ (with strictly positive weights) over the levels of the tree, we take $\tilde{\Delta}(v) = q(j) \hat{\Delta}(v)$ for $v \in V_j$.
    This is a valid choice since it is non-negative, satisfies
    \[
        \sum_{\ancof{v}{a}} \tilde{\Delta}(v) = \sum_{j \in [L]} q(j) \hat{\Delta}(\anc{a}{j}) \leq \sum_{j \in [L]} q(j) {\Delta}(a) = \Delta(a) 
    \]
    for any $a \in \cA$, and can only vanish at one node per level by the assumption that $|\{a\in \cA \colon \Delta(a)=0\}| \leq 1$.
    It then follows from \eqref{eq:stochastic-bound-general} that
    \begin{align*}
        \reg{T}{1} \leq  30 \sum_{j \in [L]} {R}_j \sqrt{|V_j|} + 20 \sum_{j \in [L]} \frac{R_j^2}{q(j)} \sum_{v \in V_j \colon \hat{\Delta}(v) \neq 0} \frac{\ln(e T)}{ \hat{\Delta}(v)} \,. 
    \end{align*}
    The sought bound now follows by choosing $q(j) \propto R_j \sqrt{\sum_{v \in V_j \colon \hat{\Delta}(v) \neq 0} 1/{ \hat{\Delta}(v)}}$.
    
\end{proof}

\section{Proofs of \texorpdfstring{\Cref{sec:bandit-feedback}}{Section~\ref{sec:bandit-feedback}}}\label{app:bandit}

\LinesNumbered 
\DontPrintSemicolon
\begin{algorithm}[t] 
 \KwIn{sequence of regularizers $(\psi_t)_{t \geq 1}$, sequence of constants $(b_j)_{j=1}^L$, number of extra queried points $n \in [L]$, distribution $\dist$ over subsets of $[L]$ of cardinality $n$}
 \KwDef{for $j \in [L]$,  let $\xi_j \coloneqq \sum_{S \in \supp(\dist)} \indicator{j \in S} \dist(S)$}
 \KwInit{$p_1 = \argmin_{p \in \Delta_K} \psi_1(p)$}
 \For{$t \geq 1$}{

    sample $A_{t,1} \sim p_t$

    sample $M_t \sim 
    \dist$

    set $M_t^\shortdownarrow = \{j \in M_t \mid j+1 \notin M_t\}$ %

    \lFor{$j \in M_t^\shortdownarrow$}{set $\smp{t}{j} = A_{t,1}$}

    $i \gets 2$

    \For{$j \in M_t$}{        
        sample $A_{t,i} \sim p_t \mid A_{t,i} \comanc_{j-1} A_{t,1}$ 

        set $\smp{t}{j-1} = A_{t,i}$

        $i \gets i + 1$
    }
        
    for all $a \in \cA$, set 
    $\displaystyle
        z_t(a) = \sum_{j \in M_t} \indicator{ a \comanc_{j} {A_{t,1}}} \frac{y_t(\smp{t}{j}) - y_t(\smp{t}{j-1}) + b_j} {\xi_j \,p_t[\anc{A_{t,1}}{j}]}
    $

    set $\displaystyle p_{t+1} = \argmin_{p \in \Delta_K} \ban{\summ_{s \leq t} z_s,p} + \psi_{t+1}(p)$
 }
 \caption{FTRL with $n+1$-Point Bandit Feedback} \label{alg:n-point-feedback}
\end{algorithm}

We provide here a generalization of \Cref{alg:2-point-feedback} to handle $n+1$-point feedback, with $n$ between $2$ and $L$.
We prove a generic regret bound for this algorithm, from which \Cref{thm:2-point-feedback} (restated afterwards) follows as a corollary.

\begin{theorem}\label{thm:n-point-feedback}
    \Cref{alg:n-point-feedback} with $\psi_t \colon \R_+^d \rightarrow \R$ given by
    \begin{align*}
        \psi_t(x) \coloneqq \frac{4 \sqrt{c}}{\sqrt{(c-1)}} \sum_{j \in [L]} \frac{\scale_j}{\sqrt{\xi_j}} \max\Bcb{ \sqrt{t}, \sqrt{{c(c-1)}/{\xi_j} } } \sum_{v \in V_j} x[v] - \sqrt{x[v]}
    \end{align*}
    where $c \geq 2$, 
    and $b_\lvl = \scale_\lvl$ for all $\lvl \in [L]$ satisfies 
    \begin{align*}
        \reg{T}{n+1} \leq 6c \sum_{j \in [L]} \frac{\scale_j}{ \xi_j} \textstyle \sqrt{|V_j|} \displaystyle + \frac{16 \sqrt{c}}{\sqrt{(c-1)}} \sum_{j \in [L]} \frac{\scale_j}{\sqrt{\xi_j}} \textstyle \sqrt{|V_j| T} \displaystyle \,.
    \end{align*}
    Further, 
    if \Cref{condition:stochastic} holds,
    then, \Cref{alg:n-point-feedback} also satisfies
    \begin{align*}
        \reg{T}{n+1} \leq  12 c \sum_{j \in [L]} \frac{\scale_j}{ \xi_j} \textstyle \sqrt{|V_j|} \displaystyle + \frac{64 c}{c-1} \Bbrb{ \sum_{j \in [L]} \frac{\scale_j}{\sqrt{\xi_j}} \sqrt{ \Gamma(j,\Delta) }}^2 \ln(eT) \,.
    \end{align*}
\end{theorem}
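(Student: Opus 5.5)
The plan is to reduce \Cref{thm:n-point-feedback} to \Cref{lem:ftrl-nested-tsallis-sparse}, exactly as \Cref{prop:semi-bandit-feedback} did, but with a constructed node-loss function and random level subsets $M_t$. Take $\cF_t$ to be the $\sigma$-algebra generated by all samples up to round $t$: the actions $A_{s,i}$, the sets $M_s$, and the auxiliary actions $\smp{s}{j}$. For each node $v\in V_j$ with $\ancof{v}{A_{t,1}}$ and $j\in M_t$, define the shifted loss
\[
\tilde\ell_t(v)\coloneqq y_t(\smp{t}{j})-y_t(\smp{t}{j-1})+\scale_j .
\]
By construction, $\smp{t}{j}$ and $\smp{t}{j-1}$ share the ancestor at level $j-1$. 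Tree-compatibility \eqref{eq:smoothness condition} then gives $|y_t(\smp{t}{j})-y_t(\smp{t}{j-1})|\le\scale_j$, so $\tilde\ell_t(v)\in[0,2\scale_j]$. This supplies the range condition of the lemma with $R_j=2\scale_j$. The estimator $z_t$ of the algorithm is then exactly the sparse, importance-weighted estimator the lemma handles, with observed level set $M_t$ and inclusion probabilities $\xi_j$.

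The key step is the unbiasedness condition of the lemma: for all $a,a'$, the sum over ancestors of $a$ of the conditional expected node losses must match $\E_t[y_t(a)-y_t(a')]$ up to a constant. I would argue as follows. Conditionally on $\cF_{t-1}$ and on the ancestor of $A_{t,1}$ at level $j$ being $v$, the sample $\smp{t}{j}$ is distributed as $p_t$ restricted to descendants of $v$. This holds both when $\smp{t}{j}=A_{t,1}$ (the case $j\in M_t^\shortdownarrow$) and when $\smp{t}{j}$ was drawn for level $j+1\in M_t$ conditioned on sharing the ancestor at level $j$. Likewise, $\smp{t}{j-1}$ is distributed as $p_t$ restricted to descendants of $\prt{v}$. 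Hence $\E[\tilde\ell_t(v)\mid\cdot]=m_t(v)-m_t(\prt{v})+\scale_j$, where $m_t(u)$ is the $p_t$-conditional mean of $y_t$ below $u$. Summing along the path from $v_0$ to $a$ telescopes to $y_t(a)-m_t(v_0)+\sum_j\scale_j$, which is $y_t(a)$ up to an $a$-independent constant. The obligatory independence of $M_t$ from $A_{t,1}$ makes the $1/\xi_j$ correction unbiased. The adversary is oblivious to round-$t$ randomness, so $y_t$ is independent of the samples given $\cF_{t-1}$.

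I expect the main obstacle to be this distributional claim when several levels in $M_t$ are consecutive, since the samples are then coupled across levels. I would verify it by induction down the chain of consecutive levels. The lemma only needs per-node conditional expectations, so the correlations between levels should not matter.

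Next I apply \Cref{lem:ftrl-nested-tsallis-sparse} with $R_j=2\scale_j$, the given $\xi_j$, and constant $c$. This should give
\[
6c\sum_j(\scale_j/\xi_j)\sqrt{|V_j|}+C\sum_t t^{-1/2}\,\E\sum_j(\scale_j/\sqrt{\xi_j})\sum_{v\in V_j}\bigl(\sqrt{p_t[v]}-p_t[v]\bigr),
\]
with $C=8\sqrt{c}/\sqrt{c-1}$ after matching the regularizer's scaling. The regret $\reg{T}{n+1}$ equals the regret of $A_{t,1}$, because every executed action has marginal law $p_t$ given $\cF_{t-1}$, so the expected average loss equals $\E\inprod{y_t,p_t}$.

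From there the two bounds follow as in the proof of \Cref{thm:semi-bandit-feedback}. For the adversarial bound, use $\sum_v(\sqrt{p[v]}-p[v])\le\sqrt{|V_j|}$ and $\sum_t t^{-1/2}\le2\sqrt T$. For the stochastic bound, apply \Cref{lem:variational-bound-tsallis} with weights $\scale_j/\sqrt{\xi_j}$ in place of $R_j$, absorb half of $\E\sum_t\Delta(A_t)$ using \Cref{condition:stochastic}, and choose $\tilde\Delta(v)=q(j)\min_{\descof{a}{v}}\Delta(a)$ with $q(j)\propto(\scale_j/\sqrt{\xi_j})\sqrt{\Gamma(j,\Delta)}$. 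This produces the squared sum times $\ln(eT)$ with constant $64c/(c-1)$.
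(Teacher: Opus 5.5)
Your proposal follows the paper's proof essentially step for step. It uses the same node losses $y_t(\smp{t}{j})-y_t(\smp{t}{j-1})+\scale_j\in[0,2\scale_j]$, the same telescoping of conditional subtree means, and the same marginal-law argument reducing $\reg{T}{n+1}$ to the regret of $A_{t,1}$. It then applies \Cref{lem:ftrl-nested-tsallis-sparse} with $R_j=2\scale_j$ and finishes as in \Cref{thm:semi-bandit-feedback}, and your constants check out.

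There is one correction to make. Your filtration must also contain the past losses $(y_s)_{s\le t}$, as the paper's does. The reason is that $p_t$ depends on the observed values $y_s(A_{s,i})$. Without those values in the filtration, $p_t$ is not $\mathcal{F}_{t-1}$-measurable as soon as the losses are random, for instance in the stochastic regime, and the predictability hypothesis of the lemma fails.

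Two smaller points. First, you should extend $\ell_t$ to the unobserved nodes by any value in $[0,2\scale_j]$; the paper does this through a fixed descendant $\lambda(v)$. Second, your worry about consecutive levels in $M_t$ is unnecessary. Each extra sample is drawn from $p_t$ conditioned only on sharing an ancestor with $A_{t,1}$, not on a previously drawn sample, so the per-node conditional laws follow directly and no induction is needed.
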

\begin{proof}  
    Let $\cF_{t}$ be the $\sigma$-algebra generated by $(A_{s,i})_{s \leq {t},i\in [n+1]}\,$, $(M_s)_{s \leq {t}}\,$, and $(y_s)_{s \leq t}\,$.
    In the sequel, we will for brevity denote $\pr(\cdot \mid \cF_{t-1})$ by $\pr_t(\cdot)$ and $\E[\cdot \mid \cF_{t-1}]$ by $\E_t[\cdot]$.
    Clearly, $\pr_t(A_{t,1} = a) = p_t(a)$.
    For $2 \leq i \leq n+1$, we posit that a fixed function $\zeta_i \colon \supp(\dist) \rightarrow \{0,\dots,L-1\}$ gives, at every round $t$, the unique level $\zeta_i(M_t)$ such that $A_{t,i} = \smp{t}{\zeta_i(M_t)}$.
    Notice then that for $2 \leq i \leq n+1$,
    \begin{align*}
         \pr_t(A_{t,i} = a)
         &= \sum_{a' \in \cA, S \in \supp(\dist)} \pr_t \brb{  A_{t,i} = a \mid A_{t,1} = a', M_t = S}  \pr_t \brb{A_{t,1} = a', M_t = S}   \\
         &= \sum_{a' \in \cA, S \in \supp(\dist)} \pr_t \brb{ A_{t,i} = a \mid A_{t,1} = a', M_t = S}  \pr_t \brb{A_{t,1} = a'} \pr_t \brb{M_t = S}  \\
         &= \sum_{a' \in \cA, S \in \supp(\dist)} \indicator{a' \comanc_{\zeta_i(S)} a} \frac{p_t(a)}{p_t[\anc{a'}{\zeta_i(S)}]}  p_t(a') \dist(S)   \\
         &= p_t(a) \sum_{S \in \supp(\dist)} \dist(S) \sum_{a' \in \cA} \indicator{a' \comanc_{\zeta_i(S)} a} \frac{1}{p_t[\anc{a'}{\zeta_i(S)}]}  p_t(a')   \\
         &= p_t(a) \sum_{S \in \supp(\dist)} \dist(S)  \frac{1}{p_t[\anc{a}{\zeta_i(S)}]}  \sum_{a' \in \cA} \indicator{a' \comanc_{\zeta_i(S)} a}  p_t(a')   = p_t(a) \,,
    \end{align*}
    where the second equality uses that $A_t$ and $S_t$ are independent conditioned on $\cF_{t-1}$, and the third equality follows from the sampling rules of \Cref{alg:n-point-feedback}.
    Combining this with the fact that $y_t$ is independent of the learner's actions at round $t$ conditioned on $\cF_{t-1}$, we get that
    \begin{align*}
        \E \lsb{\frac{1}{n+1} \sum_{i=1}^{n+1} y_t(A_{t,i})} = \E \bsb{ y_t(A_{t,1}) } \,.
    \end{align*}
    Thus,
    \begin{align} \label{eq:n-point-regret-to-one-point}
        \reg{T}{n+1} = \E \lsb{\sum_{t=1}^T y_t(A_{t,1})} - \min_{a \in \cA} \E \lsb{ \sum_{t=1}^T y_t (a)} \,.
    \end{align} 
    Towards applying \Cref{lem:ftrl-nested-tsallis-sparse}, fix a mapping $\lambda \colon \overbar{\cV} \rightarrow \cA$ such that $\ancof{v}{\lambda(v)}$ for all $v \in \overbar{\cV}$.
    Further, for each $v \in \overbar{\cV}$, let $({\omega}_t(v))_{t \geq 1}$
    be a random sequence of actions given by
    \[
        {\omega}_t(v) = \begin{cases}
            \smp{t}{\level{v}} \quad &\text{if} \: \ancof{v}{A_{t,1}} \text{ and } \{\level{v},\level{v}+1\} \cap M_t \neq \emptyset \\
            \lambda(v) &\text{otherwise}\,.
        \end{cases}
    \] 
    Note that for $a \in \cA$, $\omega_t(a) = a$.
    Next, we define at every round $t$ a mapping $\ell_t \colon \cV \rightarrow \R$ given by
    \[
        \ell_t(v) \coloneqq y_t({\omega}_t(v)) - y_t({\omega}_t(\prt{v})) + \scale_{\level{v}}\,.
    \]
    Note that for any $v \in V_j$,
    \[
        \abs{\ell_t(v)} \leq 2 \scale_j \quad \text{and} \quad \ell_t(v) \geq 0 \,.
    \]
    Further, define 
    \[
        \tilde{\ell}_t(v) \coloneqq \E_t \bsb{ {\ell}_t(v)\mid \ancof{v}{A_{t,1}}, \level{v} \in M_t } \,,
    \]
    so that
    \begin{align*}
        \tilde{\ell}_t(v)
        &= \E_t \bsb{ y_t({\omega}_t(v))\mid \ancof{v}{A_{t,1}}, \level{v} \in M_t } \\
        &\hspace{10em}- \E_t \bsb{ y_t({\omega}_t(\prt{v})) \mid \ancof{v}{A_{t,1}}, \level{v} \in M_t } + \scale_{\level{v}} \,.
    \end{align*}
    We aim to show now that the sum of the first two terms over $v \in \cV \colon \ancof{v}{a}$ is a telescoping sum for any action $a \in \cA$.
    Towards that, we can rewrite the second term  as follows for any $j \in [L]$ and $v \in V_j$:
    \begin{align*}
        \E_t \bsb{ y_t({\omega}_t(\prt{v})) \mid \ancof{v}{A_{t,1}}, \level{v} \in M_t} &= \E_t \bsb{ y_t({\omega}_t(\prt{v})) \mid v = \anc{A_{t,1}}{j}, j \in M_t } \\
        &= \E_t \bsb{ y_t(\smp{t}{j-1}) \mid v = \anc{A_{t,1}}{j}, j \in M_t } \\
        &= \E_t \Bsb{ \E_t \bsb{  y_t(\smp{t}{j-1}) \mid v = \anc{A_{t,1}}{j}, j \in M_t , y_t} } \\
        &= \E_t \Bsb{ \summ_{a \in \cA} \indicator{\ancof{\prt{v}}{a}} \frac{p_t(a)}{p_t[\prt{v}]} y_t(a)} \,,
    \end{align*}
    where the second equality follows from the definition of $\omega_t$, the third is an application of the tower rule; and the fourth follows from the sampling rules of \Cref{alg:n-point-feedback}, the fact that $\prt{v} = \anc{A_{t,1}}{j-1}$ whenever $v = \anc{A_{t,1}}{j}$, and the fact that $y_t$ is drawn by the environment independently of the learner's actions at round $t$, conditioned on $\cF_{t-1}$.
    At the same time, it similarly holds that
    \begin{align*}
        \E_t \bsb{ y_t({\omega}_t({v}))\mid \ancof{v}{A_{t,1}}, \level{v} \in M_t }
        &= \E_t \bsb{ y_t({\omega}_t({v}))\mid v = \anc{A_{t,1}}{j}, j \in M_t} \\
        &= \E_t \bsb{ y_t(\smp{t}{j})\mid v = \anc{A_{t,1}}{j}, j \in M_t} \\
        &= \E_t \Bsb{ \E_t \bsb{ y_t(\smp{t}{j}) \mid v = \anc{A_{t,1}}{j}, j \in M_t, y_t  }} \\
        &= \E_t \Bsb{ \summ_{a \in \cA} \indicator{\ancof{{v}}{a}} \frac{p_t(a)}{p_t[{v}]} y_t(a)} \,,
    \end{align*}
    where in the last step we used that, at round $t$, given that $\anc{A_{t,1}}{j}$ and $j \in M_t$, the conditional law (\wrt $\cF_{t-1}$) of $\smp{t}{j}$ is $\indicator{\ancof{v}{\cdot}} p_t(\cdot)/p_t[v]$ regardless of whether $j$ belongs to $M_t^\downarrow$ or $M_t \setminus M_t^\downarrow$; in the latter case, this holds directly via the sampling rule of \Cref{alg:n-point-feedback} for $\smp{t}{j}$, while in the former, we use that $\smp{t}{j} = A_{t,1}$ and that conditioned on $\cF_{t-1}$, the law of $A_{t,1}$ is $p_t$.
    We can then conclude that for any $a \in \cA$,
    \begin{align*}
        \sum_{\ancof{v}{a}} \tilde{\ell}_t(v) =  \E_t\bsb{y_t(a)} -  \sum_{a' \in \cA} p_t(a') \E_t \bsb{y_t(a')} + \sum_{j \in [L]} \scale_j\,.
    \end{align*}
    Implying that for any $a, a' \in \cA$,
    \[
        \sum_{\ancof{v}{a}} \tilde{\ell}_t(v)- \sum_{\ancof{w}{a'}} \tilde{\ell}_t(w) = \E_t\bsb{y_t(a) - y_t(a')} \,.
    \]
    Notice, moreover, that
    \begin{align*}
        &\sum_{\ancof{v}{a}} \indicator{ \ancof{v}{A_{t,1}}, \level{v} \in M_t } \frac{\ell_t(v)} {\xi_{\level{v}} p_t[v]} \\
        &\qquad= \sum_{\ancof{v}{a}} \indicator{ \ancof{v}{A_{t,1}}, \level{v} \in M_t } \frac{y_t(\omega_t(v)) - y_t(\omega_t(\prt{v}))  + \scale_{\level{v}}} {\xi_{\level{v}} p_t[v]} \\
        &\qquad= \sum_{j \in [L]} \indicator{ a \comanc_{j} {A_{t,1}}, j \in M_t } \frac{y_t\brb{\omega_t(\anc{a}{j})} - y_t\brb{\omega_t(\anc{a}{j-1})} + \scale_{j}} {\xi_j \, p_t[\anc{a}{j}]} \\
        &\qquad= \sum_{j \in [L]} \indicator{ a \comanc_{j} {A_{t,1}}, j \in M_t } \frac{y_t\brb{\omega_t(\anc{A_{t,1}}{j})} - y_t\brb{\omega_t(\anc{A_{t,1}}{j-1})} + \scale_{j}} {\xi_j \, p_t[\anc{A_{t,1}}{j}]} \\
        &\qquad= \sum_{j \in M_t } \indicator{ a \comanc_{j} {A_{t,1}}} \frac{y_t\brb{\omega_t(\anc{A_{t,1}}{j})} - y_t\brb{\omega_t(\anc{A_{t,1}}{j-1})} + \scale_{j}} {\xi_j \, p_t[\anc{A_{t,1}}{j}]} \\
        &\qquad=  \sum_{j \in M_t} \indicator{ a \comanc_{j} {A_{t,1}}} \frac{y_t(\smp{t}{j}) - y_t(\smp{t}{j-1}) + \scale_j} {\xi_j \,p_t[\anc{A_{t,1}}{j}]} = z_t(a) \,, 
    \end{align*}
    where the last step follows form the definition of $\omega_t$.
    Finally, it is easily verifiable that in our case, $\pr_t(j \in M_t) = \xi_j\:\:\forall j \in [L]$, and $\indicator{\ancof{v}{A_t}} \ind \indicator{\level{v} \in M_t} \mid \cF_{t-1}\:\: \forall v \in \cV$.
    We can now invoke \Cref{lem:ftrl-nested-tsallis-sparse} with $R_j = 2 \scale_j$ for $j \in [L]$; $A_t = A_{t,1}$ for $t \geq 1$; and $\xi_j$, $M_t$, $y_t$, $\ell_t$, and $(\cF_{t})_{t \geq 0}$ as used in the current context; yielding that
    \begin{multline*}
        \E \lsb{\sum_{t=1}^T y_t (A_{t,1})} - \min_{a \in \cA} \E \lsb{ \sum_{t=1}^T y_t (a)} 
        \leq \frac{8 \sqrt{c}}{\sqrt{(c-1)}} \sum_{t=1}^{T} \frac{1}{\sqrt{t}} \sum_{j \in [L]} \frac{\scale_j}{\sqrt{\xi_j}} \E \sum_{v \in V_j}  \brb{\sqrt{p_t[v]} - p_t[v]} \\
        + 6c \sum_{j \in [L]} \frac{\scale_j}{\xi_j} \sqrt{|V_j|} \,.
    \end{multline*}
    which is also a bound on $\reg{T}{n+1}$ via \eqref{eq:n-point-regret-to-one-point}. 
    At this junction, the rest of the proof proceeds similarly to the proof of \Cref{thm:semi-bandit-feedback}. 
\end{proof}

\thmtwopoint*
\begin{proof}
    This follows as a special case of \Cref{thm:n-point-feedback} choosing $c=3$ and $M_t = \{j_t\}$ at every round.    
\end{proof}

\section{Proofs of \texorpdfstring{\Cref{sec:build-tree}}{Section~\ref{sec:build-tree}}}\label{app:lipschitz}

\subsection{Proof of \texorpdfstring{\Cref{cor:corgeneral}}{Corollary~\ref{cor:corgeneral}}}
We provide here the proof of \Cref{cor:corgeneral}, which is restated below.

\corgeneral*

\begin{proof}
Firstly, since $\actions_\nlvls$ is a $\treecst 2^{-\nlvls}$-cover of $\actions$, we have that
\begin{align*}
    \reg{T}{2} \leq \E\left[\sum_{t=1}^T \frac{1}{2}\brb{y_t(A_{t, 1})+y_t(A_{t, 2})}\right]
-
\min_{a\in\actions_\nlvls}\ \E\left[\sum_{t=1}^T y_t(a)\right] + \treecst \, 2^{-\nlvls} T \,.
\end{align*}
So, using \Cref{thm:2-point-feedback} gives that
\begin{align*}
    \reg{T}{2} &\leq 
        18 \sqrt{\keff(\sigma/{\delta})} + 8 \sqrt{6}  \sqrt{{\keff(\sigma/\sqrt{\delta})} T} +  \treecst \, 2^{-\nlvls} T \\
        &\leq 2 (9 + 4 \sqrt{6}) \sqrt{{\keff(\sigma/\sqrt{\delta})} T} +  \treecst \, 2^{-\nlvls} T \,,
\end{align*}  
where we have used in the second step that $\sqrt{\keff(\sigma/{\delta})} \leq \sqrt{{\keff(\sigma/\sqrt{\delta})} T}$ by the assumption that $\min_{\lvl \in [\nlvls]}\delta_\lvl \geq 1/T $.
Next, observe that
\begin{align*}
    \sqrt{{\keff(\sigma/\sqrt{\delta})}} = \sum_{\lvl \in [\nlvls]} \frac{\scale_\lvl}{\sqrt{\delta_\lvl}} \sqrt{|\prunactions_\lvl|} = \treecst 2^{3} \sum_{\lvl \in [\nlvls]} \frac{2^{-\lvl}}{\sqrt{\delta_\lvl}} \sqrt{|\prunactions_\lvl|} \leq \treecst 2^{3} \sum_{\lvl \in [\nlvls]} \frac{2^{-\lvl}}{\sqrt{\delta_\lvl}} \sqrt{|\actions_\lvl|} \,,
\end{align*}
where the last step uses that $\prunactions_\lvl \subseteq \actions_\lvl$.
Finally, our choice for $\delta_\lvl$ entails that
\[
    \sum_{\lvl \in [\nlvls]} \frac{2^{-\lvl}}{\sqrt{\delta_\lvl}} \sqrt{|\actions_\lvl|} = \lrb{\sum_{\lvl \in [\nlvls]} 2^{-2\lvl/3} {|\actions_\lvl|}^{1/3}}^{3/2} \,.
\]
\end{proof}

\subsection{On the Near-Optimality of the Tree Constructed in \texorpdfstring{\Cref{sec:build-tree}}{Section~\ref{sec:build-tree}}}\label{app:tree-near-optimal}
We will assume here that $\actions$ is finite (with cardinality $K$) and that $T$ is large enough so that $\dmin \coloneqq \min_{a,b \in \actions \colon a \neq b} d_{\actions,\infty}(a,b) \geq \sqrt{K/T}$.
The effect of this is that executing \Cref{alg:2-point-feedback} on a strict subset of the action set would force a worst-case approximation error of order $\sqrt{K T}$, which would nullify any attempt to leverage the structure.
Now, any nested sequence $(\cP_\lvl)_{\lvl=0}^{\nlvls'}$ of partitions of $\actions$ such that $\cP_0 = \{\actions\}$ and $\cP_{\nlvls'} = \{\{a\}\}_{a \in \actions}$ can be used to build a tree of $\nlvls'$ levels with its leaves coinciding with the action set.
In a similar manner to \Cref{cor:corgeneral}, executing \Cref{alg:2-point-feedback} on this tree results in a regret bound featuring a $\keff$ of (with the optimal choice of $\delta$) 
\[
    \Bbrb{ \sum_{\lvl=1}^{\nlvls'} \scale_{*,\lvl}^{2/3} {| \cP_{\lvl} |}^{1/3} }^3 \quad \text{where} \: \sigma_{*,\lvl} \coloneqq \max_{S \in \cP_{\lvl-1}} \max_{a,b \in S} d_{\actions, \infty}(a,b) \,.
\]
It is not clear if directly optimizing this quantity in terms of the nested structure is a feasible endeavor.
Instead, we show in the following lemma that the recipe we described for constructing the tree---by forming a $\treecst 2^{-\lvl}$ cover of $\actions$ at level $\lvl$---minimizes this quantity up to small constants, provided that these covers are optimal.
Fortunately, near-optimal covers can be constructed in time polynomial in $K$.
At any level $\lvl$, we can associate each action $a \in \cA$ with the set $\{ a' \in \cA \colon d_{\actions,\infty}(a,a') \leq  \treecst 2^{-\lvl}\}$; so, building an optimal $\treecst 2^{-\lvl}$-cover reduces to the set cover problem, which can be solved in polynomial time using a simple greedy algorithm \citep{johnson1974}.
In particular, using this greedy algorithm, one builds a cover that is at worst an extra factor of $1 + \ln K $ larger than the minimum size \citep{johnson1974,slavik1997}.
\begin{lemma} \label{lem:tree-optimality}
    Set $\treecst = \max_{a,b \in \actions} d_{\actions,\infty} (a,b)$ and $\nlvls = 1 + \floor{-\log_2 (\treecst^{-1} \dmin)}$.
    Let $\actions_0 \coloneqq \{a_0\}$ for some arbitrary action $a_0$ and $(\actions_\lvl)_{\lvl\in [\nlvls]}$ be a sequence of subsets of $\actions$ such that $\actions_\lvl$ is an optimal $\treecst 2^{-\lvl}$-cover of $\actions$. 
    Then, for any nested sequence $(\cP_i)_{i=0}^{\nlvls'}$ of partitions of $\actions$ such that $\cP_0 = \{\actions\}$ and $\cP_{\nlvls'} = \{\{a\}\}_{a \in \actions}$,
    it holds that
    \[
        \treecst^2 \lrb{\sum_{\lvl=1}^{\nlvls} 2^{-2\lvl/3} {|\actions_\lvl|}^{1/3}}^3 \leq 20 \Bbrb{ \sum_{i=1}^{\nlvls'} \scale_{*,i}^{2/3} {| \cP_{i} |}^{1/3} }^3 \,,
    \]
    where $\sigma_{*,i} \coloneqq \max_{S \in \cP_{i-1}} \max_{a,b \in S} d_{\actions, \infty}(a,b)$.
\end{lemma}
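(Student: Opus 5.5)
The plan is to compare the two sums term by term. Each level $\lvl$ of our construction will be charged to a single level $i$ of the arbitrary nested partition sequence, and the resulting geometric series will be summed within each group of levels charged to the same $i$. The constant $20$ should come out as roughly $(1-2^{-2/3})^{-3}\approx 19.7$.

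\textbf{Step 1 (partitions induce covers).} Fix $\lvl\in[\nlvls]$ and let $i(\lvl)$ be the smallest index $i\in\{0,\dots,\nlvls'\}$ such that every cell $S\in\cP_i$ satisfies $\max_{a,b\in S} d_{\actions,\infty}(a,b)\le \treecst 2^{-\lvl}$. Such an index exists because $\cP_{\nlvls'}$ consists of singletons. Also $i(\lvl)\ge 1$, since the only cell of $\cP_0$ is $\actions$, whose diameter is $\treecst>\treecst 2^{-\lvl}$ (we may assume $\treecst>0$, otherwise $K=1$ and everything is trivial). Picking one action in each cell of $\cP_{i(\lvl)}$ gives a $\treecst 2^{-\lvl}$-cover of $\actions$, because every action lies in a cell with the chosen point and is within that cell's diameter of it. By optimality of $\actions_\lvl$, this yields $|\actions_\lvl|\le |\cP_{i(\lvl)}|$.

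\textbf{Step 2 (scale comparison).} By minimality of $i(\lvl)\ge1$, some cell of $\cP_{i(\lvl)-1}$ has diameter strictly larger than $\treecst 2^{-\lvl}$. By the definition of $\scale_{*,i}$ this means $\treecst 2^{-\lvl}<\scale_{*,i(\lvl)}$. Combining with Step 1, each term satisfies
\[
\treecst^{2/3}2^{-2\lvl/3}|\actions_\lvl|^{1/3}\le (\treecst 2^{-\lvl})^{2/3}\,|\cP_{i(\lvl)}|^{1/3}.
\]

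\textbf{Step 3 (grouping and geometric sum).} Group the levels $\lvl$ according to the value of $i=i(\lvl)$. For fixed $i$, all levels in the group satisfy $\treecst 2^{-\lvl}<\scale_{*,i}$. Hence
\[
\sum_{\lvl\colon i(\lvl)=i}(\treecst 2^{-\lvl})^{2/3}\le \scale_{*,i}^{2/3}\sum_{k\ge0}2^{-2k/3}=\frac{\scale_{*,i}^{2/3}}{1-2^{-2/3}}.
\]
Summing over $i$ gives
\[
\treecst^{2/3}\sum_{\lvl}2^{-2\lvl/3}|\actions_\lvl|^{1/3}\le (1-2^{-2/3})^{-1}\sum_{i=1}^{\nlvls'}\scale_{*,i}^{2/3}|\cP_i|^{1/3}.
\]
Cubing both sides gives the claim with constant $(1-2^{-2/3})^{-3}<20$.

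The main point requiring care is Step 2: making sure $\scale_{*,i}$ (defined via the diameters of cells of $\cP_{i-1}$) is exactly the quantity that witnesses the failure of the previous level. Care is also needed that the covers are taken with centers in $\actions$, so that the comparison $|\actions_\lvl|\le|\cP_{i(\lvl)}|$ with an optimal cover is legitimate. The specific choice of $\nlvls$ and the assumption on $\dmin$ do not seem to be needed for this inequality. They only ensure that $\actions_{\nlvls}=\actions$, which is relevant for the surrounding discussion.
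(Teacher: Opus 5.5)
Your proof is correct and follows essentially the same route as the paper. You map each level $\lvl$ to the first partition index whose cells all have diameter at most $\treecst 2^{-\lvl}$, use minimality to get $\treecst 2^{-\lvl}\le \scale_{*,i}$, and sum the geometric series over each preimage to get the factor $(1-2^{-2/3})^{-3}<20$. You then use that one representative per cell of $\cP_{i(\lvl)}$ is a cover, so $|\actions_\lvl|\le|\cP_{i(\lvl)}|$. The differences are cosmetic: your non-strict threshold and your direct minimality argument in Step 2, which is slightly cleaner than the paper's contradiction via earlier preimages. You are also right that the choice of $\nlvls$ and the $\dmin$ assumption play no role in the inequality itself.
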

Note that this choice of $\nlvls$ ensures that $\treecst 2^{-\nlvls} < \dmin$, which implies that $\actions_\nlvls = \actions$.
\begin{proof}
    We begin by defining a mapping $f \colon [\nlvls] \rightarrow [\nlvls']$ as follows:
    \begin{equation} \label{def:tree-to-tree}
        f(\lvl) \coloneqq \min \Bcb{ i \in [\nlvls'] \mid \max_{a,b \in S} d_{\actions,\infty}(a,b) < \treecst 2^{-\lvl} \:\forall S \in \cP_i } \,.
    \end{equation}
    Since $\cP_{\nlvls'} = \{\{a\}\}_{a \in \actions}$, $\max_{a,b \in S} d_{\actions,\infty}(a,b) = 0$ for all $S \in \cP_{\nlvls'}$; thus, the set in the R.H.S. of \eqref{def:tree-to-tree} is never empty.
    Note that the pre-image of $i \in [\nlvls']$, if non-empty, is an interval $B_i \subseteq [\nlvls]$ satisfying that
    \[
        \max_{\lvl \in B_i} \treecst 2^{-\lvl}\leq \scale_{*,i} 
        \,.
    \]
    This is easy to see for $i=1$: since we set $\treecst = \max_{a,b \in \actions} d_{\actions,\infty} (a,b)$, we have that $\scale_{*,1} = \treecst$.
    (Recall that $\scale_{*,i}$ concerns the maximum diameter of the subsets of $\cP_{i-1}$, not $\cP_{i}$.)
    For $i \geq 2$, if $\treecst 2^{-\lvl} > \scale_{*,i} $ for a certain $\lvl \in B_i$, then it must be that $\lvl \in B_{i'}$ for some $i' \in \{1,\dots,i-1\}$, which is a contradiction.
    It then follows that
    \[
        \sum_{\lvl \in B_i} \brb{c2^{-\lvl}}^{2/3}  %
        \leq  \scale^{2/3}_{*,i}  \sum_{r=0}^\infty 2^{-2r/3} \leq \frac{2^{2/3}}{2^{2/3}-1}\scale^{2/3}_{*,i}  \,.
    \]
    This allows concluding that
    \begin{equation*}
        \treecst^2 \lrb{\sum_{\lvl=1}^{\nlvls} 2^{-2\lvl/3} {|\cP_{f(\lvl)}|}^{1/3}}^3 \leq 20 \Bbrb{ \sum_{i=1}^{\nlvls'} \scale_{*,i}^{2/3} {| \cP_{i} |}^{1/3} }^3 \,.
    \end{equation*}
    To obtain the sought result, it remains to use that $|\cP_{f(\lvl)}| \geq |\actions_\lvl|$.
    This holds since $\actions_\lvl$ is a $\treecst 2^{-\lvl}$-cover of $\actions$ with minimum cardinality, meantime, collecting an arbitrary member of each subset in $\cP_{f(\lvl)}$ results too in a $\treecst 2^{-\lvl}$-cover of $\actions$.
\end{proof}

\subsection{Proof of \texorpdfstring{\Cref{cor:lipschitznew}}{Corollary~\ref{cor:lipschitznew}}}
We provide here the proof of \Cref{cor:lipschitznew}, which is restated below.

\corlipschitznew*

\begin{proof}
    Since $|\actions_\lvl| \leq 2^{\lvl d}$ and $\nlvls = \floor{\frac{1}{d}\log_2(T) }$, \Cref{cor:corgeneral} gives that (supposing for now that $\min_\lvl \delta_\lvl \geq 1/T$)
    \[
        \reg{T}{2} \leq 
        16 \cdot (9 + 4 \sqrt{6}) \cdot GD\sqrt{d} \cdot \Bbrb{\sum_{\lvl \in [\nlvls]} 2^{\lvl (d-2)/3}}^{3/2} \sqrt{  T} + 2 GD\sqrt{d} \, T^{\frac{d-1}{d}} \,.
    \]
    Then, observe that
    \[
    	\Bbrb{\sum_{\lvl \in [\nlvls]} 2^{\lvl (d-2)/3}}^{3/2} \leq \left\{ \begin{array}{ll} 
    		L^{3/2} & \text{if } d = 2 \\
    		11 \cdot 2^{L(\frac{d}{2}-1)_+} & \text{if } d \neq 2
    		\end{array} \right.
    \]
where $(\,\cdot\,)_+ = \max\{\cdot, 0\}$.
This leads to the stated regret bound after using that $\nlvls = \floor{\frac{1}{d}\log_2(T) }$.

\medskip \noindent
To conclude the proof, it only remains to check that $\max_{\lvl \in [\nlvls]}\delta_\lvl^{-1/2} \leq \sqrt{T}$ for $T \geq 8$. Indeed, 
\begin{itemize}[nosep]
	\item if $d =1$, 
	\[
		\delta_\lvl^{-1/2} \leq \delta_\nlvls^{-1/2} = 2^{\frac{L}{6}} \bigg( \sum_{\lvl=1}^L 2^{-\frac{\lvl}{3}} \bigg)^{1/2} \leq 2 \cdot 2^{\frac{L}{6}} \leq 2  T^{\frac{1}{6}} \leq \sqrt{T}
	\]
	as soon as $T \geq 8$; 
	\item if $d = 2$, $\delta_\lvl^{-1/2} =  \sqrt{\nlvls} \leq \sqrt{\log_2 (T)/ 2} \leq \sqrt{T}$;
	\item if $d \geq 3$,
	\[
		\delta_\lvl^{-1/2} \leq \delta_1^{-1/2} =  2^{-\frac{d-2}{6}} \bigg(\sum_{\lvl=1}^L 2^{\lvl \big(\frac{d-2}{3}\big)}\bigg)^{1/2} \leq 2^{1+L\big(\frac{d-2}{6}\big)} \leq 2 T^{\frac{d-2}{6d}} \leq 2 T^{\frac{1}{6}} \leq \sqrt{T}
	\]
	as soon as $T \geq 8$. 
\end{itemize}

\end{proof}

\section{FTRL Analysis}\label{app:tech}

\subsection{Helper Lemmas}

\begin{lemma}\label{lem:ftrl-decomp}
    For $t \geq 1$, let $(\psi_t)_{t}$ be a sequence of functions where $\psi_t \colon \R^K \rightarrow (-\infty,+\infty]$. %
    Additionally, let $(z_t)_{t}$ be an arbitrary sequence of vectors in $\R^K$.
    Define $p_1 \coloneqq \argmin_{p \in \Delta_K} \psi_1(p)$ and for $t \geq 2$,
    \[
        p_t \coloneqq \argmin_{p \in \Delta_K} \ban{\summ_{s \leq t-1} z_s,p} + \psi_t(p) \,.
    \]
    Then, assuming that the sequence $(p_t)_{t}$ is well-defined, it holds for any comparator $q \in \Delta_K$ and horizon $T \geq 1$ that
    \begin{multline*}
        \sum_{t=1}^T \inprod{z_t, p_t - q} \leq \psi_{T+1}(q)-\psi_{1}(p_1) + \sum_{t=1}^T \psi_{t}(p_{t+1}) - \psi_{t+1}(p_{t+1})
        \\+ \sum_{t=1}^T F_{t}\brb{p_t,\summ_{s \leq t} z_s}-F_{t}\brb{p_{t+1},\summ_{s \leq t} z_s} \,,
    \end{multline*}
    where for $t \geq 1$, $z \in \R^d$, and $p \in \Delta_K$; we define $F_t(p,z) \coloneqq \inprod{z,p} + \psi_t(p)$.
\end{lemma}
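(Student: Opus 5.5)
The bound is a purely algebraic telescoping identity followed by one inequality that uses only the optimality of $p_{T+1}$. No convexity beyond the existence of the minimizers is needed. Write $Z_t \coloneqq \sum_{s \leq t} z_s$, with $Z_0 = 0$, so that $p_t \in \argmin_{p \in \Delta_K} F_t(p, Z_{t-1})$ for every $t \geq 1$ (for $t=1$ this is the definition of $p_1$, since $F_1(p,0)=\psi_1(p)$).

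\textbf{Step 1: rewrite each summand.} Using $F_t(p, Z_t) = F_t(p, Z_{t-1}) + \inprod{z_t, p}$, the linear term splits as
\[
\inprod{z_t, p_t} = F_t(p_t, Z_t) - F_t(p_t, Z_{t-1}).
\]
Adding and subtracting $F_t(p_{t+1}, Z_t)$ gives
\[
\inprod{z_t, p_t} = \bigl[F_t(p_t, Z_t) - F_t(p_{t+1}, Z_t)\bigr] + \bigl[F_t(p_{t+1}, Z_t) - F_t(p_t, Z_{t-1})\bigr].
\]
The first bracket is exactly the last sum in the statement. For the second bracket, write $F_t(p_{t+1}, Z_t) = F_{t+1}(p_{t+1}, Z_t) + \psi_t(p_{t+1}) - \psi_{t+1}(p_{t+1})$. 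This produces the middle sum of the statement plus the difference $F_{t+1}(p_{t+1}, Z_t) - F_t(p_t, Z_{t-1})$.

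\textbf{Step 2: telescope.} Summing these differences over $t = 1, \dots, T$ leaves
\[
F_{T+1}(p_{T+1}, Z_T) - F_1(p_1, Z_0) = F_{T+1}(p_{T+1}, Z_T) - \psi_1(p_1).
\]

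\textbf{Step 3: use the minimality of $p_{T+1}$.} Since $p_{T+1}$ minimizes $F_{T+1}(\cdot, Z_T)$ over $\Delta_K$, for any $q \in \Delta_K$,
\[
F_{T+1}(p_{T+1}, Z_T) \leq F_{T+1}(q, Z_T) = \inprod{Z_T, q} + \psi_{T+1}(q).
\]
Moving $\inprod{Z_T, q} = \sum_{t} \inprod{z_t, q}$ to the left-hand side yields the claimed inequality.

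The only point needing care is Step 3. It requires $p_{T+1}$ to be defined, which is covered by the standing well-definedness assumption. One must also avoid $\infty - \infty$ when $\psi_{T+1}(q) = +\infty$; in that case the bound is trivial.
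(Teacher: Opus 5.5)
Your proof is correct and is essentially the paper's argument: the same per-round split $F_t(p_t,Z_t)-F_{t+1}(p_{t+1},Z_t) = \bigl[F_t(p_t,Z_t)-F_t(p_{t+1},Z_t)\bigr] + \psi_t(p_{t+1})-\psi_{t+1}(p_{t+1})$, the same telescoping down to $F_{T+1}(p_{T+1},Z_T)-\psi_1(p_1)$, and the same single use of the minimality of $p_{T+1}$ against $q$. The only difference is cosmetic: the paper applies the minimality step first and telescopes afterwards, whereas you telescope first.
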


\begin{proof}
    For any $t \in [T]$, $p_t = \argmin_{p \in \Delta_K} F_{t}\brb{p,\summ_{s \leq t-1} z_s}$.
    So,
    \begin{align*}
        \sum_{t=1}^T \inprod{z_t, p_t - q} &= \sum_{t=1}^T \inprod{z_t, p_t} + \psi_{T+1}(q) - F_{T+1}(q, \summ_{s \leq T} z_s) \\
        &\leq \sum_{t=1}^T \inprod{z_t, p_t} + \psi_{T+1}(q) - F_{T+1}(p_{T+1}, \summ_{s \leq T} z_s) \,.
    \end{align*}
    Moving on,
    \begin{align*}
        - F_{T+1}(p_{T+1}, \summ_{s \leq t} z_s) &= - \psi_{1}(p_1) +  \psi_{1}(p_1) - F_{T+1}(p_{T+1}, \summ_{s \leq T} z_s) \\
        &= - \psi_{1}(p_1) +  F_{1}(p_{1}, \zeros) - F_{T+1}(p_{T+1}, \summ_{s \leq T} z_s) \\
        &= - \psi_{1}(p_1) + \sum_{t=1}^T F_{t}(p_{t}, \summ_{s \leq t-1} z_s) - F_{t+1}(p_{t+1}, \summ_{s \leq t} z_s) \,.
    \end{align*}
    Lastly at any round $t$,
    \begin{align*}
        &\inprod{z_t, p_t} + F_{t}(p_{t}, \summ_{s \leq t-1} z_s) - F_{t+1}(p_{t+1}, \summ_{s \leq t} z_s) \\
        &\quad = F_{t}(p_{t}, \summ_{s \leq t} z_s) - F_{t+1}(p_{t+1}, \summ_{s \leq t} z_s) \\
        &\quad = F_{t}(p_{t}, \summ_{s \leq t} z_s) - F_{t}(p_{t+1}, \summ_{s \leq t} z_s) + \psi_{t}(p_{t+1}) - \psi_{t+1}(p_{t+1}) \,,
    \end{align*}
    concluding the proof.
\end{proof}

We say that a convex function $f \colon \R \rightarrow (-\infty,+\infty]$ is Legendre if it is proper, closed, essentially smooth (see \citep[Chapter 26]{rockafellar1970} for a definition) and strictly convex on $\interior(\dom(f))$. 

\begin{lemma} \label{lem:simple-variance-generic}
    Let $f \colon \R \rightarrow (-\infty,+\infty]$ be a convex function of the Legendre type satisfying $\R_{++} \subseteq \dom(f)$. For any $p \in \Delta_K$, define
    \begin{equation*}
        \psi(p) \coloneqq \sum_{j \in [L]} \mu_j \sum_{v \in V_j} f(p[v]) \,,
    \end{equation*}
    where $\mu_j > 0$ is a fixed weight for level $j$.
    Fix two vectors $y, z \in \R^K$, and let
    \[
        q \coloneqq \argmin_{p \in \Delta_K} \inprod{p,y} + \psi(p) \,.
    \]
    Then, for any $(c_j)_{j \in [L]} \in \R^L$ and $\ell \colon \cV \rightarrow \R$ such that $z(a) = \sum_{\ancof{v}{a}} \ell(v)$, it holds that
    \begin{multline*}
         \inprod{q,y+z} + \psi(q) - \min_{p \in \Delta_K} \brb{\inprod{p,y+z} + \psi(p)} \\
         \leq \sum_{j \in [L]} \mu_j \sum_{v \in V_j} D_{f^*}\brb{f'(p[v]) -  (\ell(v) - c_j) / \mu_j  \big\|f'(p[v])} \,.
    \end{multline*}
\end{lemma}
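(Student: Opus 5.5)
The plan is a per-node Fenchel duality argument. First, the quantity on the left is rewritten as a sum over the nodes of the tree. Then the first-order optimality of $q$ is used to extract a negative Bregman term. Finally, the resulting expression is maximized node by node. Throughout, the $p[v]$ appearing on the right-hand side of the statement is read as $q[v]$, i.e., the divergences are centred at $f'(q[v])$.

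\textbf{Step 1 (rewriting the linear term).} For any $p \in \Delta_K$, the hypothesis $z(a) = \sum_{\ancof{v}{a}} \ell(v)$ gives
\[
\inprod{p,z} = \sum_{j \in [L]} \sum_{v \in V_j} \ell(v)\, p[v].
\]
Every action has exactly one ancestor in each $V_j$, so $\sum_{v \in V_j} p[v] = 1$. Hence subtracting $c_j$ at every node of level $j$ only shifts this expression by the constant $\sum_j c_j$. Writing $g(v) \coloneqq (\ell(v) - c_j)/\mu_j$ for $v \in V_j$, we get
\[
\inprod{q - p, z} = \sum_{j} \mu_j \sum_{v \in V_j} g(v)\,\brb{q[v] - p[v]}.
\]

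\textbf{Step 2 (optimality of $q$).} Since $f$ is Legendre with $\R_{++} \subseteq \dom(f)$, essential smoothness forces $q$ to have $q[v] > 0$ at every node, so $\psi$ is differentiable at $q$. The first-order condition then reads $\inprod{y + \nabla\psi(q), p - q} \geq 0$ for every $p \in \Delta_K$. Each $p[v]$ is a linear function of $p$, so the Bregman divergence of $\psi$ splits over nodes:
\[
D_\psi(p,q) = \sum_j \mu_j \sum_{v \in V_j} D_f\brb{p[v], q[v]}.
\]
Combining these two facts, for any $p \in \Delta_K$,
\[
\inprod{q,y} + \psi(q) - \inprod{p,y} - \psi(p) \leq -\sum_j \mu_j \sum_{v \in V_j} D_f\brb{p[v], q[v]}.
\]
Adding the identity from Step 1 yields
\[
\inprod{q,y+z} + \psi(q) - \inprod{p,y+z} - \psi(p) \leq \sum_j \mu_j \sum_{v\in V_j} \Bsb{ g(v)\brb{q[v]-p[v]} - D_f\brb{p[v],q[v]} }.
\]

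\textbf{Step 3 (per-node maximization).} Decoupling the nodes, we bound each bracket by its supremum over $x \in \dom(f)$ in place of $p[v]$. Set $\theta \coloneqq f'(q[v])$ and $g \coloneqq g(v)$. Expanding $D_f$ gives
\[
\sup_x \bcb{ g(q[v]-x) - f(x) + f(q[v]) + \theta(x - q[v]) } = f^*(\theta - g) + f(q[v]) - \theta q[v] + g\,q[v].
\]
Now use $f(q[v]) - \theta q[v] = -f^*(\theta)$ and $(f^*)'(\theta) = q[v]$, both valid by the Legendre property. The right-hand side becomes
\[
f^*(\theta - g) - f^*(\theta) - (f^*)'(\theta)(-g) = D_{f^*}\brb{\theta - g \,\big\|\, \theta}.
\]
Taking the minimum over $p$ on the left then gives exactly the claimed bound.

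The main obstacle is the technical justification in Step 2: showing that $q$ lies in the region where every $q[v] > 0$, so that the first-order condition holds with equality structure and $f'(q[v])$ is well defined. I would handle this via essential smoothness: the derivative of $\psi$ blows up at the boundary, so no boundary point can be a minimizer. The remaining steps are routine.
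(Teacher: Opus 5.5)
Your proof is correct, but it takes a different route from the paper's.

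\textbf{The paper's route.} The paper obtains the lemma as a special case of \Cref{prop:general-variance-generic}, with $h_v$ the indicator of the subtree under $v$, $\eta_v = 1/\mu_j$, $\alpha_v = \ell(v) - c_j$ for $v \in V_j$, and $c = \sum_j c_j$. That proposition is proved by Fenchel--Rockafellar duality over the simplex. The conjugate of $\sum_i \eta_i^{-1} f(h_i^\top \cdot)$ is computed as an infimal convolution of one-dimensional conjugates. The optimal value at $y$ is written through the dual optimum $u_{p^*}$, which satisfies $\sum_i \eta_i^{-1} f'(h_i^\top p^*) h_i = u_{p^*} - y$. The perturbed minimum at $y+z$ is then lower-bounded by reusing $u_{p^*}$ as a dual-feasible point.

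\textbf{Your route.} You stay entirely in the primal. The variational inequality at $q$ gives $\langle q-p, y\rangle + \psi(q) - \psi(p) \le -D_\psi(p,q)$. The Bregman divergence splits over nodes because each $p[v]$ is linear in $p$. Each node's term $g(v)(q[v]-x) - D_f(x,q[v])$ is then maximized in closed form, and its supremum is exactly $D_{f^*}\bigl(f'(q[v]) - g(v) \,\|\, f'(q[v])\bigr)$. In effect you evaluate the same conjugates, but node by node and without any duality theorem. You need only the first-order condition, the Fenchel--Young equality, and $(f^*)' = (f')^{-1}$ from Rockafellar's Theorem 26.5.

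\textbf{What each buys.} Your argument extends verbatim to arbitrary nonzero $h_i$ and to the infimum over all decompositions $z - c\mathbf{1} = \sum_i \alpha_i h_i$, so the paper's route gains no generality. What the duality machinery does buy is that existence, uniqueness and interiority of the minimizer all come out of one lemma, \Cref{lem:strong-duality-sum-one-d}. That lemma is also reused for \Cref{lem:variational-bound}, whereas you must argue interiority separately.

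\textbf{One correction.} Essential smoothness does not force $q[v] > 0$ in general. If $\operatorname{dom}(f)$ extends below $0$, $q$ may lie on the boundary of $\Delta_K$; for example, $f(x) = x^2$ is Legendre and allows this. What you actually need is $q[v] \in \operatorname{int}\operatorname{dom}(f)$ for every $v$, i.e.\ $q$ in the interior of $\operatorname{dom}(\psi)$. Your blow-up argument does give this if you compare against a point of the relative interior of $\Delta_K$; such a point lies in the interior of $\operatorname{dom}(\psi)$ because $\mathbb{R}_{++} \subseteq \operatorname{int}\operatorname{dom}(f)$. That property is enough for $\psi$ to be differentiable at $q$ and for $f'(q[v])$ to be defined.
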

\begin{proof}
    Associate to every $v \in \cV$ a vector $h_v \in \R^K$ such that $h_v(a) \coloneqq \indicator{\ancof{v}{a}}$; hence, for any vector $x \in \R^K$,
    \begin{equation*}
        x^\top h_v = \sum_{a \in \cA \colon \ancof{v}{a} } x(a) = x[v] \,.
    \end{equation*}
    Moreover, let $\eta_v \coloneqq 1/\mu_{\level{v}}$.
    So, for any $p \in \Delta_K$,
    \begin{equation*}
        \psi(p) \coloneqq  \sum_{v \in \cV} \frac{1}{\eta_v} f(p^\top h_v) \,.
    \end{equation*}
    Hence, the result would follow from \Cref{prop:general-variance-generic} if its conditions on $f$ and $\{h_v\}_{v \in \cV}$ hold here.
    By the assumption that $f$ is Legendre, it is indeed proper, closed, and essentially smooth as required.
    Next, note that for any $v \in \cV$, the definition of $h_v$ together with the assumption that $\R_{++} \subseteq \dom(f)$ guarantee that $\R^K_{++} \subseteq \{ x \in \R^K \colon h_v^\top x \in \dom(f)\}$.
    Hence, 
    \[\relint(\{ x \in \R^K \colon h_v^\top x \in \dom(f)\}) = \interior(\{ x \in \R^K \colon h_v^\top x \in \dom(f)\}) \supseteq \R^K_{++} \,.\]
    And since $\relint(\Delta_K) \subset \R^K_{++}$ and $\relint(\Delta_K) \neq \emptyset$, it holds, as required, that
    \[
        \relint(\Delta_K) \cap \brb{ \cap_{v \in \cV} \relint(\{ x \in \R^K \colon h_v^\top x \in \dom(f)\}) } \neq \emptyset \,.
    \]
\end{proof}

\begin{lemma} \label{lem:variational-bound}
    Let $f \colon \R_+ \rightarrow (-\infty,+\infty]$ be a convex function of the Legendre type satisfying $\R_{++} \subseteq \dom(f)$. For any $p \in \Delta_K$, define
    \begin{equation*}
        \psi(p) \coloneqq \sum_{j \in [L]} \mu_j \sum_{v \in V_j} f(p[v]) \,,
    \end{equation*}
    where $\mu_j > 0$ is a fixed weight for level $j$.
    It holds for any $p \in \Delta_K$ and $\zeta \colon \cV \rightarrow \R$ that
    \begin{align*}
        -\psi(p) \leq \sum_{a \in \cA} p(a) \sum_{\ancof{v}{a}} \zeta(v) + \sum_{j \in [L]} \mu_j \sum_{v \in V_j} f^*(- \zeta(v)/\mu_j) \,.
    \end{align*}
\end{lemma}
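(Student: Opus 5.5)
This is a Fenchel--Young argument applied node by node. For any convex $f$ with conjugate $f^*$, the Fenchel--Young inequality gives $f(x) + f^*(u) \geq xu$ for all $x$ in the domain and all $u \in \R$. Equivalently,
\[
-f(x) \leq f^*(u) - xu .
\]
We apply this with $x = p[v]$ and $u = -\zeta(v)/\mu_j$ for every node $v \in V_j$, which yields
\[
-f(p[v]) \leq f^*(-\zeta(v)/\mu_j) + p[v]\,\zeta(v)/\mu_j .
\]
Each $p[v]$ lies in $[0,1] \subseteq \R_+$. If $f(p[v]) = +\infty$, for instance at a boundary point, the left-hand side is $-\infty$ and the inequality holds trivially. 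Otherwise $p[v] \in \dom(f)$ and Fenchel--Young applies directly. Since $f$ is Legendre, $f^*$ is proper, so the right-hand side is well defined in $(-\infty,+\infty]$. The statement is therefore meaningful, and trivially true whenever some $f^*$ term is infinite.

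Next, multiply the node-wise inequality by $\mu_j > 0$ and sum over all $v \in V_j$ and all $j \in [L]$. This gives
\[
-\psi(p) \leq \sum_{j \in [L]} \mu_j \sum_{v \in V_j} f^*(-\zeta(v)/\mu_j) + \sum_{v \in \cV} p[v]\,\zeta(v).
\]
Here we use that $\cV$ is the disjoint union of the levels $V_1, \dots, V_L$, excluding the root, as in the definition of $\psi$.

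The remaining step is to rewrite the linear term. Since $p[v] = \sum_{a \colon \ancof{v}{a}} p(a)$, we can exchange the order of summation:
\[
\sum_{v \in \cV} \zeta(v) \sum_{a \colon \ancof{v}{a}} p(a) = \sum_{a \in \cA} p(a) \sum_{\ancof{v}{a}} \zeta(v).
\]
Both sums range over the same finite set of pairs $(v,a)$ with $v$ an ancestor of $a$. Here $\ancof{v}{a}$ ranges over the ancestors at levels $1$ through $L$, and includes $a$ itself when $a \in V_L$. This matches the convention used elsewhere in the paper.

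No step poses a real obstacle. The only care needed is the extended-value bookkeeping: $p[v] = 0$ may lie outside $\dom(f)$, and $f^*$ may take the value $+\infty$. Both cases are handled by the trivial-inequality remarks above.
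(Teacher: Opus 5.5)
Your proof is correct, and it takes a more elementary route than the paper's. The paper gets the inequality by invoking its duality result, \Cref{lem:strong-duality-sum-one-d}, with the following choices:
\begin{itemize}
\item $h_v$ is the subtree-indicator vector;
\item $f_v = \mu_j f$;
\item $z = \sum_{v} \zeta(v) h_v$;
\item the dual variable is fixed at $u = \mathbf{0}$, so implicitly $\alpha_v = -\zeta(v)$.
\end{itemize}
Before doing so, it re-checks the relative-interior qualification as in \Cref{lem:simple-variance-generic}, and it uses $(\mu_j f)^*(\cdot) = \mu_j f^*(\cdot/\mu_j)$.

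Only the ``$\geq$'' (weak-duality) half of that lemma's equality is actually used. Weak duality here is exactly what you prove: the node-wise Fenchel--Young inequality scaled by $\mu_j$ and summed over $v$, followed by the exchange $\sum_v \zeta(v)\, p[v] = \sum_a p(a) \sum_{v \text{ ancestor of } a} \zeta(v)$.

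Your argument therefore unpacks what the paper's citation contains. It is self-contained and does not pass through Fenchel--Rockafellar duality or the infimal-convolution formula for conjugates of sums. It also shows that the Legendre, closedness and $\R_{++} \subseteq \dom(f)$ hypotheses are not needed for this inequality. Properness of $f$ alone suffices, since it keeps $f^* > -\infty$ and so rules out $\infty - \infty$ ambiguities. Your treatment of the extended-value cases is correct.

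The paper's route is simply economical, because that duality machinery is already in place for its variance bound and the lemma can reuse it in one line.
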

\begin{proof}
    Using the same arguments in the proof of \Cref{lem:simple-variance-generic} (note that $\mu_j f$ is Legendre when $f$ is Legendre), we can leverage the first result of \Cref{lem:strong-duality-sum-one-d} (taking $u = \bm{0}$) to obtain that
    \begin{align*}
        \psi(p) + \sum_{a \in \cA} p(a) \sum_{\ancof{v}{a}} \zeta(v) \geq - \sum_{j \in [L]} \mu_j \sum_{v \in V_j} f^*(- \zeta(v)/\mu_j) \,,
    \end{align*}
    where we have used that $(\mu_j f)^*(\cdot) = \mu_j f^*(\cdot/\mu_j)$.
\end{proof}

\subsubsection{Specific Results for the nested Tsallis Regularizer}

\begin{lemma} \label{lem:bregman-bound-tsallis}
    Let $f(x) = \frac{2}{\eta} (x-\sqrt{x})$ with $\eta >0 $. Then, for any $p >0$, $c > 1$, and $y \in \R$, such that $y \geq -\frac{1}{c \eta \sqrt{p}}$, it holds that
    \[
        D_{f^*}\brb{f'(p) - y \big\|f'(p)} \leq \frac{c}{c-1} \eta p^{\nicefrac{3}{2}} y^2 \,.
    \]
\end{lemma}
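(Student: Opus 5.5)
We compute $f'$ and $f^*$ in closed form and then bound the Bregman divergence $D_{f^*}(x-y\,\|\,x)$, with $x=f'(p)$, by a second-order Taylor expansion. The only care needed is in controlling the curvature along the segment.

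\textbf{Conjugate.} With $f(x)=\frac{2}{\eta}(x-\sqrt{x})$ on $x\ge 0$ we have $f'(x)=\frac{1}{\eta}\bigl(2-\frac{1}{\sqrt{x}}\bigr)$. This derivative maps $(0,\infty)$ onto $(-\infty,2/\eta)$, and its inverse is $(f^*)'(\theta)=(f')^{-1}(\theta)=\frac{1}{(2-\eta\theta)^2}$ for $\theta<2/\eta$. Differentiating once more gives $(f^*)''(\theta)=\frac{2\eta}{(2-\eta\theta)^3}$, which is positive and increasing in $\theta$. At $\theta=f'(p)$ we have $2-\eta\theta=1/\sqrt{p}$, so $(f^*)''(f'(p))=2\eta p^{3/2}$.

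\textbf{Taylor step.} By Taylor's theorem with integral remainder,
\[
D_{f^*}\bigl(f'(p)-y\,\|\,f'(p)\bigr)=y^2\int_0^1(1-s)\,(f^*)''\bigl(f'(p)-sy\bigr)\,ds .
\]
We split according to the sign of $y$.

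\emph{Case $y\ge 0$.} The argument $f'(p)-sy$ is at most $f'(p)$, and $(f^*)''$ is increasing. Hence the integrand is at most $(1-s)\cdot 2\eta p^{3/2}$, and the divergence is at most $\eta p^{3/2}y^2\le\frac{c}{c-1}\eta p^{3/2}y^2$.

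\emph{Case $y<0$.} Here the hypothesis $y\ge-\frac{1}{c\eta\sqrt{p}}$ is needed. It gives
\[
2-\eta\bigl(f'(p)-sy\bigr)=\frac{1}{\sqrt{p}}+s\eta y\ \ge\ \frac{1}{\sqrt{p}}\Bigl(1-\frac{s}{c}\Bigr)\ >\ 0 .
\]
So the whole segment stays in $\dom(f^*)$ (interior of the domain), and
\[
(f^*)''\bigl(f'(p)-sy\bigr)\le \frac{2\eta p^{3/2}}{(1-s/c)^3}.
\]
The integral is then bounded by $2\eta p^{3/2}y^2\int_0^1\frac{1-s}{(1-s/c)^3}\,ds$.

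\textbf{Main obstacle.} The hard part is checking that this integral is at most $\frac{c}{2(c-1)}$. We first try the crude bound $(1-s)/(1-s/c)\le 1$. This gives $\int_0^1(1-s/c)^{-2}\,ds=\frac{c}{c-1}$, which is off by a factor of $2$.

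Instead we compute exactly. Substitute $u=1-s/c$, so that $1-s=c u-(c-1)$, $ds=-c\,du$, and $u$ runs over $[1-1/c,\,1]$. The integral becomes
\[
c\int_{1-1/c}^{1}\Bigl(\frac{c}{u^2}-\frac{c-1}{u^3}\Bigr)du
= c\Bigl[c\Bigl(\frac{c}{c-1}-1\Bigr)-\frac{c-1}{2}\Bigl(\frac{c^2}{(c-1)^2}-1\Bigr)\Bigr].
\]
Simplifying, $\frac{c}{c-1}-1=\frac{1}{c-1}$ and $\frac{c^2}{(c-1)^2}-1=\frac{2c-1}{(c-1)^2}$. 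The bracket is therefore
\[
\frac{c}{c-1}-\frac{2c-1}{2(c-1)}=\frac{1}{2(c-1)},
\]
so the integral equals exactly $\frac{c}{2(c-1)}$.

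Multiplying by $2\eta p^{3/2}y^2$ gives $D_{f^*}\bigl(f'(p)-y\,\|\,f'(p)\bigr)\le\frac{c}{c-1}\eta p^{3/2}y^2$ in this case as well, which completes the proof.
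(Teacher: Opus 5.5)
Your proof is correct, but it takes a different route from the paper. The paper does not use a second-order expansion. It uses the closed form $f^*(z)=\frac{1}{\eta(2-\eta z)}$ (for $\eta z<2$) and the identity $(f^*)'(f'(p))=p$ to evaluate the divergence exactly:
\[
D_{f^*}\bigl(f'(p)-y\,\|\,f'(p)\bigr)=yp+f^*(f'(p)-y)-f^*(f'(p))=\frac{\eta p^{3/2}y^2}{1+\eta y\sqrt{p}} .
\]
It then concludes in one line from $1+\eta y\sqrt{p}\ge 1-1/c$. This needs no case split on the sign of $y$, and the domain check reduces to $\eta(f'(p)-y)<2$. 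Your integral-remainder argument uses only $(f^*)''$. That is why you need the sign split and the exact evaluation of $\int_0^1(1-s)(1-s/c)^{-3}\,ds$; as you noticed, the crude bound loses a factor of $2$. If you skip the worst-case substitution, your representation gives $\int_0^1(1-s)(1+as)^{-3}\,ds=\frac{1}{2(1+a)}$ with $a=\eta y\sqrt{p}$, which reproduces the paper's exact identity. So the paper's computation is shorter and sharper, being exact for every admissible $y$. Your curvature-along-the-segment template has the advantage that it also works for regularizers whose conjugate has no convenient closed form, as long as $(f^*)''$ can be controlled along the segment.
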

\begin{proof}
    We have that $f'(x) = \frac{2}{\eta} (1-\frac{1}{2\sqrt{x}})$ and $f^*(z) = \frac{1}{\eta} \cdot \frac{1}{2-\eta z}$ for $\eta z < 2$.
    Note that
    \[
        \eta f'(p)  = 2 - \frac{1}{\sqrt{p}} < 2 \text{  and  } \eta f'(p) - \eta y  = 2 - \frac{1}{\sqrt{p}} - \eta y \leq 2 - \frac{1}{\sqrt{p}} + \frac{1}{c \sqrt{p}}  < 2 \,,
    \]
    where we have used that $p>0$ and that $\eta y \geq -\frac{1}{c\sqrt{p}}$ with $c > 1$.
    Hence,
    \begin{align*}
        D_{f^*}\brb{f'(p) - y \big\|f'(p)} &=
        yp + f^*(f'(p) - y) - f^*(f'(p)) \\
        &= yp + \frac{1}{\eta} \sqrt{p} \bbrb{\frac{1}{1+\eta y \sqrt{p}} - 1} \\
        &= yp - \frac{y p}{1+\eta y \sqrt{p}} \\
        &= \frac{\eta p^{\nicefrac{3}{2}} y^2}{1+\eta y \sqrt{p}} \\
        &\leq \frac{c}{c-1} \eta p^{\nicefrac{3}{2}} y^2 \,,
    \end{align*}
    where the last inequality uses the assumption that $\eta y \geq -\frac{1}{c\sqrt{p}}$.
\end{proof}

\begin{lemma} \label{lem:variational-bound-tsallis}
    Fix any $p \in \Delta_K$, $\mu_j > 0$ for $j \in [L]$, $\lambda \in \R$, $g \in \R^K$, and $\zeta \colon \cV \rightarrow \R$ such that $g(a) + \lambda \geq \sum_{\ancof{v}{a}} \zeta(v)$ for $a \in \cA$ and $\zeta(v)/\mu_j > - 2$ for $j \in [L]$ and $v \in V_j$.
    Then, it holds that
    \begin{align*}
        2 \sum_{j \in [L]} \mu_j \sum_{v \in V_j}  \sqrt{p_t[v]} - p_t[v]  \leq \inprod{g,p} + \lambda + \sum_{j \in [L]} \mu_j \sum_{v \in V_j} \frac{1}{2 + \zeta(v)/\mu_j} \,.
    \end{align*}
    In particular, for any $\tilde{\zeta} \colon \cV \rightarrow \R$ such that $(i)$ $\tilde{\zeta}(v) \geq 0$ $\forall v \in V$, $(ii)$ $g(a) \geq \sum_{\ancof{v}{a}} \tilde{\zeta}(v)$, and $(iii)$ $|\{v \in V_j \mid \tilde{\zeta}(v) = 0\}| \leq 1$ for all $j \in [L]$; it holds that
    \begin{align*}
        2 \sum_{j \in [L]} \mu_j \sum_{v \in V_j}  \sqrt{p_t[v]} - p_t[v]  \leq \inprod{g,p} + \sum_{j \in [L]} \mu^2_j \sum_{v \in V_j \colon \tilde{\zeta}(v) \neq 0} \frac{1}{ \tilde{\zeta}(v)} \,.
    \end{align*}
\end{lemma}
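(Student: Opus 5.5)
The plan is to derive the first inequality as a direct specialization of \Cref{lem:variational-bound} to the $1/2$-Tsallis potential. The second inequality then follows from the first by a specific choice of $\zeta$ and $\lambda$.

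\textbf{First inequality.} Take $f(x) = 2(x - \sqrt{x})$ on $\R_+$. This $f$ is Legendre with $\R_{++} \subseteq \dom(f)$. Its conjugate is
\[
f^*(z) = \frac{1}{2-z} \qquad \text{for } z<2,
\]
which is the case $\eta = 1$ of the computation in the proof of \Cref{lem:bregman-bound-tsallis}. With $\psi(p) = \sum_{j} \mu_j \sum_{v \in V_j} f(p[v])$ we have
\[
-\psi(p) = 2\sum_j \mu_j \sum_{v\in V_j} \bigl(\sqrt{p[v]} - p[v]\bigr),
\]
which is exactly the left-hand side. Applying \Cref{lem:variational-bound} with the given $\zeta$ yields
\[
-\psi(p) \leq \sum_a p(a)\sum_{\ancof{v}{a}}\zeta(v) + \sum_j \mu_j \sum_{v\in V_j} f^*(-\zeta(v)/\mu_j).
\]
The hypothesis $\zeta(v)/\mu_j > -2$ puts $-\zeta(v)/\mu_j$ inside $\dom(f^*)$, so each conjugate term equals $1/(2+\zeta(v)/\mu_j)$. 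For the linear term, the assumption $\sum_{\ancof{v}{a}}\zeta(v) \le g(a)+\lambda$ together with $p \in \Delta_K$ gives
\[
\sum_a p(a)\sum_{\ancof{v}{a}}\zeta(v) \le \inprod{g,p}+\lambda.
\]
This proves the first claim.

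\textbf{Second inequality.} Here the work is choosing the shift. I would set $\zeta(v) = \tilde\zeta(v) - \mu_j$ for $v \in V_j$, and $\lambda = -\sum_{j\in[L]}\mu_j$. The hypotheses of the first part then hold:
\begin{itemize}
\item Since $\tilde\zeta \ge 0$, we have $\zeta(v)/\mu_j \ge -1 > -2$.
\item Using (ii),
\[
\sum_{\ancof{v}{a}}\zeta(v) = \sum_{\ancof{v}{a}}\tilde\zeta(v) - \sum_j \mu_j \le g(a)+\lambda.
\]
\end{itemize}
With this choice each conjugate term becomes
\[
\frac{\mu_j}{1+\tilde\zeta(v)/\mu_j} = \frac{\mu_j^2}{\mu_j+\tilde\zeta(v)}.
\]
For nodes with $\tilde\zeta(v) \neq 0$ this is at most $\mu_j^2/\tilde\zeta(v)$. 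For a node with $\tilde\zeta(v)=0$ it equals $\mu_j$, and by (iii) there is at most one such node per level. Those zero-node contributions therefore total at most $\sum_j \mu_j = -\lambda$, and they cancel against $\lambda$. What remains is
\[
\inprod{g,p} + \sum_j \mu_j^2\sum_{v\in V_j \colon \tilde\zeta(v)\neq 0} \frac{1}{\tilde\zeta(v)},
\]
which is the second claim.

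The only delicate step is this choice of shift $-\mu_j$ per level, which must simultaneously keep $\zeta$ inside $\dom(f^*)$ and absorb the single zero node per level. The rest is routine.
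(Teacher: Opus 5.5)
Your proof is correct and follows the paper's argument essentially verbatim. You specialize \Cref{lem:variational-bound} to $f(x)=2(x-\sqrt{x})$ with $f^*(y)=1/(2-y)$, then choose $\zeta(v)=\tilde\zeta(v)-\mu_j$ and $\lambda=-\sum_{j\in[L]}\mu_j$ so that the at most one zero node per level is absorbed by $\lambda$. You also state the comparison $\mu_j^2/(\mu_j+\tilde\zeta(v))\le \mu_j^2/\tilde\zeta(v)$ in the correct direction, whereas the paper's text writes it with $\geq$, which is a typo.
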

\begin{proof}
    Let $f(x) = 2 (x - \sqrt{x})$. Then $f^*(y) = 1/(2-y)$ for $y <2 $ and $+\infty$ elsewhere.
    Hence, applying \Cref{lem:variational-bound} with this function gives that
    \begin{align*}
        2 \sum_{j \in [L]} \mu_j \sum_{v \in V_j}  \sqrt{p_t[v]} - p_t[v]  &\leq \sum_{a \in \cA} p(a) \sum_{\ancof{v}{a}} \zeta(v) + \sum_{j \in [L]} \mu_j \sum_{v \in V_j} \frac{1}{2 + \zeta(v)/\mu_j} \\
        &\leq \inprod{g,p} + \lambda + \sum_{j \in [L]} \mu_j \sum_{v \in V_j} \frac{1}{2 + \zeta(v)/\mu_j} \,,
    \end{align*}
    where the condition $\zeta(v)/\mu_j > -2$ for $v \in V_j$ ensures we respect the effective domain of $f^*$.
    Now, for the second part, choose $\lambda = -\sum_{j \in [L]} \mu_j$ and $\zeta(v) = \tilde{\zeta}(v) - \mu_j$ for $v \in V_j$, getting as a consequence that
    \begin{align*}
        2 \sum_{j \in [L]} \mu_j \sum_{v \in V_j}  \sqrt{p_t[v]} - p_t[v]  
        &\leq \inprod{g,p} + \sum_{j \in [L]} \mu_j \lrb{\sum_{v \in V_j} \frac{1}{1 + \tilde{\zeta}(v)/\mu_j} -1 } \,.
    \end{align*}
    Finally, to obtain the sought result we use that ${1}/(1 + \tilde{\zeta}(v)/\mu_j) \geq {\mu_j}/ \tilde{\zeta}(v)$ for any $v$ such that $\tilde{\zeta}(v) \neq 0$ and that ${1}/(1 + \tilde{\zeta}(v)/\mu_j) = 1$ otherwise, which we assumed to be the case for at most one node per level.  
\end{proof}

\begin{lemma} \label{lem:tsallis-variance-simplification}
    Fix $p \in \Delta_d$ and $x \in \R^d$.
    It holds that for any $j \in [d]$ that
    \[
        \sum_{i \in [d]} p(i)^{\nicefrac{3}{2}} \brb{ \indicator{i=j} x(i) / p(i) - x(j) }^2 = x(j)^2 \lrb{ p(j)^{-\nicefrac{1}{2}} -2 p(j)^{\nicefrac{1}{2}} + \summ_{i \in [d]} p(i)^{\nicefrac{3}{2}}} \,.
    \]
    It also holds that
    \[
        \sum_{j\in[d]} p(j) \lrb{ p(j)^{-\nicefrac{1}{2}} -2 p(j)^{\nicefrac{1}{2}} + \summ_{i \in [d]} p(i)^{\nicefrac{3}{2}}} \leq 2 \sum_{j\in[d]} p(j)^{\nicefrac{1}{2}}  - p(j) \,.
    \]
\end{lemma}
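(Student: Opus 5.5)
The plan for the first identity is direct expansion. Split the sum over $i$ into the term $i=j$ and the terms $i \neq j$. For $i=j$ the summand is $p(j)^{3/2}\brb{x(j)/p(j) - x(j)}^2 = x(j)^2\, p(j)^{3/2}\brb{p(j)^{-1} - 1}^2$. Expanding the square gives
\[
x(j)^2 \brb{p(j)^{-1/2} - 2p(j)^{1/2} + p(j)^{3/2}}.
\]
For $i \neq j$ the indicator vanishes, so each such summand is $p(i)^{3/2} x(j)^2$. Adding these to the $p(j)^{3/2}$ term from the diagonal completes $\sum_{i \in [d]} p(i)^{3/2}$. This yields exactly $x(j)^2\brb{p(j)^{-1/2} - 2p(j)^{1/2} + \sum_{i} p(i)^{3/2}}$, which is the claimed formula.

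For the inequality, multiply by $p(j)$ and sum over $j$. Since $\sum_j p(j) = 1$, the left-hand side equals
\[
\sum_j p(j)^{1/2} - 2\sum_j p(j)^{3/2} + \sum_i p(i)^{3/2} = \sum_j p(j)^{1/2} - \sum_j p(j)^{3/2}.
\]
It therefore suffices to show that this is at most $2\sum_j \brb{p(j)^{1/2} - p(j)}$. Equivalently, we need
\[
\sum_j \brb{p(j)^{1/2} - 2p(j) + p(j)^{3/2}} \geq 0 .
\]
This holds termwise, because $p^{1/2} - 2p + p^{3/2} = p^{1/2}\brb{1 - p^{1/2}}^2 \geq 0$ for every $p \geq 0$.

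There is no real obstacle. The only points needing care are the use of $\sum_j p(j) = 1$ to collapse the double sum, and the tacit requirement $p(j) > 0$ so that $p(j)^{-1/2}$ is defined. If $p(j) = 0$, the convention $p(j)\,p(j)^{-1/2} = 0$, or simply restricting to the support of $p$, handles the second claim.
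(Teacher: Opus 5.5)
Your proposal is correct and follows essentially the same route as the paper: you obtain the first identity by expanding the square (splitting off $i=j$), and the second by collapsing the left-hand side to $\sum_j p(j)^{1/2} - \sum_j p(j)^{3/2}$ and then comparing term by term. Your observation that the gap equals $p(j)^{1/2}\bigl(1-p(j)^{1/2}\bigr)^2 \geq 0$ is the same fact the paper uses when it factors $p(j)^{1/2}(1-p(j)) = p(j)^{1/2}\bigl(1-p(j)^{1/2}\bigr)\bigl(1+p(j)^{1/2}\bigr)$ and bounds $1+p(j)^{1/2}\le 2$.
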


\begin{proof}
    We have that
    \begin{align*}
        &\sum_{i \in [d]} p(i)^{\nicefrac{3}{2}} \brb{ \indicator{i=j} x(i) / p(i) - x(j) }^2 \\
        &\quad= \sum_{i \in [d]} p(i)^{\nicefrac{3}{2}} \brb{ \indicator{i=j} x(j) / p(j) - x(j) }^2 \\
        &\quad= \sum_{i \in [d]} p(i)^{\nicefrac{3}{2}} \brb{ \indicator{i=j} x(j)^2 / p(j)^2 + x(j)^2 - 2 \indicator{i=j} x(j)^2 / p(j) } \\
        &\quad= x(j)^2 \lrb{ p(j)^{-\nicefrac{1}{2}} -2 p(j)^{\nicefrac{1}{2}} + \summ_{i \in [d]} p(i)^{\nicefrac{3}{2}}} \,.
    \end{align*}
    Next,
    \begin{align*}
        &\sum_{j\in[d]} p(j) \lrb{ p(j)^{-\nicefrac{1}{2}} -2 p(j)^{\nicefrac{1}{2}} + \summ_{i \in [d]} p(i)^{\nicefrac{3}{2}}} \\
        &\quad= \sum_{j\in[d]} p(j)^{\nicefrac{1}{2}}  -2 \sum_{j\in[d]} p(j)^{\nicefrac{3}{2}} + \sum_{j\in[d]} p(j)^{\nicefrac{3}{2}} \\
        &\quad= \sum_{j\in[d]} p(j)^{\nicefrac{1}{2}}  - \sum_{j\in[d]} p(j)^{\nicefrac{3}{2}} \\  
        &\quad= \sum_{j\in[d]} p(j)^{\nicefrac{1}{2}}  (1 - p(j) ) \\
        &\quad = \sum_{j\in[d]} p(j)^{\nicefrac{1}{2}}  (1 - p(j)^{\nicefrac{1}{2}} )(1 + p(j)^{\nicefrac{1}{2}} ) \\
        &\quad\leq 2 \sum_{j\in[d]} p(j)^{\nicefrac{1}{2}}  (1 - p(j)^{\nicefrac{1}{2}} ) =  2 \sum_{j\in[d]} p(j)^{\nicefrac{1}{2}}  - p(j) \,.
    \end{align*}
\end{proof}

\subsection{Generic Regret Guarantees for the Nested Tsallis Regularizer}\label{app:analysis}

\begin{lemma} \label{lem:ftrl-nested-tsallis-generic}
    Let $(\eta_{t,j})_{t \geq 1, j \in [L]}$ be a sequence of positive numbers.
    Accordingly, for every $t \geq 1$, define $\psi_t \colon \R^k \rightarrow (-\infty,+\infty]$ as
    \[
        \psi_t(x) \coloneqq 2 \sum_{j \in [L]} \frac{1}{\eta_{t,j}} \sum_{v \in V_j}  \brb{x[v] - \sqrt{x[v]}}
    \]
    for $x \in \R^K$.
    Let $(z_t)_t$ be a sequence of loss vectors in $\R^K$, and define \[
        p_1 \coloneqq \argmin_{p \in \Delta_K} \psi_1(p) \quad \text{and for}\:\: t \geq 2, \:\:
        p_t \coloneqq \argmin_{p \in \Delta_K} \ban{\summ_{s \leq t-1} z_s,p} + \psi_t(p) \,.
    \]
    Fix $c>1$, and for each $t \geq 1$, fix a sequence $(c_{t,j})_{j \in [L]} \in \R^L$ and a function $\ell_t \colon \cV \rightarrow \R$ such that
    \[
        z_t(a) =  \sum_{\ancof{v}{a}} \ell_t(v) \:\:\forall a \in \cA \quad \text{and} \quad \ell_t(v) - c_{t,j} \geq - \frac{1}{c \eta_{t,j} \sqrt{p_t[v]}} \:\:\forall j  \in [L] \,,\, v \in V_j \,.
    \]
    Then, it holds for horizon $T \geq 1$ and comparator $q \in \Delta_k$ that
    \begin{multline*}
        \sum_{t=1}^T \inprod{z_t, p_t - q} \leq 2 \sum_{t=1}^{T+1} \sum_{j \in [L]} \tilde{\eta}_{t,j}  \sum_{v \in V_j}  \brb{\sqrt{p_t[v]} - p_t[v]}  
        \\ + \frac{c}{c-1} \sum_{t=1}^T  \sum_{j \in [L]} \eta_{t,j} \sum_{v \in V_j}  \brb{p_t[v]}^{\nicefrac{3}{2}} (\ell_t(v) - c_{t,j})^2 \,,
    \end{multline*}
    where $\tilde{\eta}_{1,j} = \eta_{1,j}^{-1}$ and for $t\geq 2$, $\tilde{\eta}_{t,j} = \eta_{t,j}^{-1} - \eta_{t-1,j}^{-1}$.
\end{lemma}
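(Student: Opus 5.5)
Three ingredients combine here: the generic FTRL decomposition of \Cref{lem:ftrl-decomp}, the nested stability bound of \Cref{lem:simple-variance-generic}, and the Tsallis-specific Bregman bound of \Cref{lem:bregman-bound-tsallis}. First apply \Cref{lem:ftrl-decomp} to the given $(\psi_t)$ and $(z_t)$. This splits $\sum_t \inprod{z_t,p_t-q}$ into a bias part,
\[
\psi_{T+1}(q)-\psi_1(p_1)+\sum_{t=1}^T \bigl(\psi_t(p_{t+1})-\psi_{t+1}(p_{t+1})\bigr),
\]
and a stability part, $\sum_t F_t(p_t,\sum_{s\le t}z_s)-F_t(p_{t+1},\sum_{s\le t}z_s)$.

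For the stability part, fix $t$ and write $\psi_t(p)=\sum_j \mu_j\sum_{v\in V_j} f(p[v])$ with $f(x)=2(x-\sqrt{x})$ and $\mu_j=1/\eta_{t,j}$. The difference $F_t(p_t,\cdot)-F_t(p_{t+1},\cdot)$ is at most $F_t(p_t,\cdot)-\min_p F_t(p,\cdot)$. Take $y=\sum_{s<t}z_s$ and $z=z_t$, so that $q=p_t$. Then \Cref{lem:simple-variance-generic}, with the decomposition $z_t(a)=\sum_{\ancof{v}{a}}\ell_t(v)$ and shifts $c_{t,j}$, bounds this by
\[
\sum_j \mu_j\sum_{v\in V_j} D_{f^*}\bigl(f'(p_t[v])-(\ell_t(v)-c_{t,j})/\mu_j\,\big\|\,f'(p_t[v])\bigr).
\]
Rescaling, $\mu_j D_{f^*}(f'(p)-y/\mu_j\|f'(p))$ equals $D_{g^*}(g'(p)-y\|g'(p))$ with $g=\mu_j f=\frac{2}{\eta_{t,j}}(x-\sqrt x)$. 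This is exactly the form in \Cref{lem:bregman-bound-tsallis} with $\eta=\eta_{t,j}$. The hypothesis $\ell_t(v)-c_{t,j}\ge -1/(c\eta_{t,j}\sqrt{p_t[v]})$ is precisely that lemma's condition, so each term is at most $\frac{c}{c-1}\eta_{t,j}p_t[v]^{3/2}(\ell_t(v)-c_{t,j})^2$. This gives the second line of the claim.

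For the bias part, set $h(x)=2\sum_j\sum_{v\in V_j}(\sqrt{x[v]}-x[v])\ge 0$ on $\Delta_K$, weighted levelwise, so that $\psi_t(p)=-\sum_j \eta_{t,j}^{-1}\cdot 2\sum_{v\in V_j}(\sqrt{p[v]}-p[v])$. Each level sum is nonnegative, so $\psi_{T+1}(q)\le 0$. The step term is
\[
\psi_t(p_{t+1})-\psi_{t+1}(p_{t+1})=2\sum_j(\eta_{t+1,j}^{-1}-\eta_{t,j}^{-1})\sum_{v\in V_j}\bigl(\sqrt{p_{t+1}[v]}-p_{t+1}[v]\bigr)=2\sum_j\tilde\eta_{t+1,j}\sum_{v\in V_j}(\cdots),
\]
and $-\psi_1(p_1)=2\sum_j\tilde\eta_{1,j}\sum_{v\in V_j}(\sqrt{p_1[v]}-p_1[v])$. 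Reindexing, these sum to $2\sum_{t=1}^{T+1}\sum_j\tilde\eta_{t,j}\sum_{v\in V_j}(\sqrt{p_t[v]}-p_t[v])$, which is the first line of the claim.

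The main obstacle is the scaling bookkeeping in the stability step. One must check that the levelwise weights $1/\eta_{t,j}$, the $1/\mu_j$ inside the Bregman divergence, and the scaling of $f^*$ combine so that \Cref{lem:bregman-bound-tsallis} applies with $\eta=\eta_{t,j}$ and the condition matches exactly. A second point to verify is that $p_t$ is interior, so that $f'(p_t[v])$ is finite; this holds because $f$ is Legendre and \Cref{lem:simple-variance-generic} already handles it. The bound does not require $\tilde\eta_{t,j}\ge0$, since those terms are carried as written.
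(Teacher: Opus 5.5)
Your proposal is correct and takes the same route as the paper. The paper's proof combines \Cref{lem:ftrl-decomp}, \Cref{lem:simple-variance-generic} (with $\mu_j = 1/\eta_{t,j}$, $y=\sum_{s<t}z_s$, $z=z_t$, so that the minimizer is $p_t$) and \Cref{lem:bregman-bound-tsallis}, together with $\psi_{T+1}(q)\le 0$. Your rescaling identity $\mu\, D_{f^*}(f'(p)-y/\mu\,\|\,f'(p)) = D_{(\mu f)^*}((\mu f)'(p)-y\,\|\,(\mu f)'(p))$ and your telescoping of the bias terms supply exactly the bookkeeping that the paper leaves implicit.
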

\begin{proof}
    This is obtained by combining \Cref{lem:ftrl-decomp,lem:simple-variance-generic,lem:bregman-bound-tsallis}, and using the fact that $\psi_{T+1}(q) \leq 0$.
\end{proof}

\begin{lemma} \label{lem:ftrl-nested-tsallis-sparse}
    Fix a sequence of positive numbers $(R_j)_{j \in [L]}$ and a sequence $(\xi_j)_{j \in [L]}$ of weights in $[0,1]$. 
    Moreover, with $t \geq 1$, let $(y_t)_t$ be a random sequence of loss functions each mapping $\cA$ to $\R$,$(A_t)_t$ a random sequence of actions from $\cA$, $(M_t)_t$ a random sequence of subsets of $[L]$, and $(\ell_t)_t$ a random sequence of functions mapping $\cV$ to $\R$ such that $\ell_t(v) \in [0, R_j]$ for all $j\in[L]$, $v \in V_j$, and $t \geq 1$.
    Accordingly, define
    \[
        p_1 \coloneqq \argmin_{p \in \Delta_K} \psi_1(p) \quad \text{and for}\:\: t \geq 2, \:\:
        p_t \coloneqq \argmin_{p \in \Delta_K} \ban{\summ_{s \leq t-1} z_s,p} + \psi_t(p) \,,
    \]
    where for $t \geq 1$ and $a \in \cA$, $\displaystyle z_t(a) \coloneqq \sum_{\ancof{v}{a}} \indicator{ \ancof{v}{A_t}, \level{v} \in M_t } \frac{{\ell}_t(v)}{\xi_{\level{v}} p_t[v]}$; and for $x \in \R^d$,
    \[
        \psi_t(x) \coloneqq \frac{2 \sqrt{c}}{\sqrt{(c-1)}} \sum_{j \in [L]} \frac{R_j}{\sqrt{\xi_j}} \max\Bcb{ \sqrt{t}, \sqrt{{c(c-1)}/{\xi_j} } } \sum_{v \in V_j} x[v] - \sqrt{x[v]} 
    \]
    with some constant $c \geq 2$. Assume there exists a filtration $(\cF_t)_{t \geq 0}$ such that 
    \begin{enumerate}[nosep]
        \item $(p_t)_{t \geq 1}$ is predictable \wrt   $(\cF_t)_{t \geq 0}$,
        \item $\pr_t(A_t = a) = p_t(a)\:\: \forall a\in \cA$, 
        \item $\pr_t(j \in M_t) = \xi_j\:\:\forall j \in [L]$,
        \item $\indicator{\ancof{v}{A_t}} \ind \indicator{\level{v} \in M_t} \mid \cF_{t-1}\:\: \forall v \in \cV$,
        \item $\sum_{\ancof{v}{a}} \E_t[\ell_t(v) \mid \ancof{v}{A_t}, \level{v} \in M_t] - \sum_{\ancof{w}{a'}} \E_t[\ell_t(w) \mid \ancof{w}{A_t}, \level{w} \in M_t] = \E_t[y_t(a) - y_t(a')] \:\: \forall a,a' \in \cA$,
    \end{enumerate}
    where $\pr_t(\cdot) = \pr(\cdot \mid \cF_{t-1})$ and $\E_t[\cdot] = \E[\cdot \mid \cF_{t-1}]$.
    Then, it holds for any horizon $T \geq 1$ that
    \begin{multline*}
        \E \lsb{\sum_{t=1}^T y_t (A_t)} - \min_{a \in \cA} \E \lsb{ \sum_{t=1}^T y_t (a)} 
        \leq \frac{4 \sqrt{c}}{\sqrt{(c-1)}} \sum_{t=1}^{T} \frac{1}{\sqrt{t}} \sum_{j \in [L]} \frac{R_j}{\sqrt{\xi_j}} \E \sum_{v \in V_j}  \brb{\sqrt{p_t[v]} - p_t[v]} \\
        + 3c \sum_{j \in [L]} \frac{R_j}{\xi_j} \sqrt{|V_j|} \,.
    \end{multline*}
\end{lemma}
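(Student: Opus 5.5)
I will reduce to \Cref{lem:ftrl-nested-tsallis-generic} and then take expectations. Matching the regularizer of the statement to the generic form $2\sum_j \eta_{t,j}^{-1}\sum_{v}(x[v]-\sqrt{x[v]})$ gives
\[
\eta_{t,j}^{-1} = \frac{\sqrt{c}}{\sqrt{c-1}}\,\frac{R_j}{\sqrt{\xi_j}}\max\bcb{\sqrt t,\sqrt{c(c-1)/\xi_j}} .
\]
The decomposition $z_t(a)=\sum_{\ancof{v}{a}}\hat\ell_t(v)$ holds with
\[
\hat\ell_t(v)\coloneqq \indicator{\ancof{v}{A_t},\level{v}\in M_t}\,\ell_t(v)/(\xi_{\level{v}}p_t[v]) .
\]
For the level shifts I choose $c_{t,j}=0$. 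Since $\ell_t\ge 0$, we have $\hat\ell_t\ge 0$, so the lower-bound condition $\hat\ell_t(v)-c_{t,j}\ge -1/(c\eta_{t,j}\sqrt{p_t[v]})$ holds trivially. (A shift such as $c_{t,j}=\E_t$-type centering would reduce variance, but the simplest choice should suffice given the target bound.)

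With this choice, for any fixed $q=e_a$, the lemma bounds $\sum_t\inprod{z_t,p_t-e_a}$ by a bias term $2\sum_{t\le T+1}\sum_j\tilde\eta_{t,j}\sum_{v}(\sqrt{p_t[v]}-p_t[v])$ plus a variance term $\frac{c}{c-1}\sum_t\sum_j\eta_{t,j}\sum_v p_t[v]^{3/2}\hat\ell_t(v)^2$.

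The next step is to take expectations and connect the left side to the regret.
\begin{itemize}
\item By predictability (item 1), $\E_t\inprod{z_t,p_t-e_a}=\sum_{a'}p_t(a')\E_t z_t(a') - \E_t z_t(a)$.
\item By conditions 2--4, $\E_t\hat\ell_t(v)=\E_t[\ell_t(v)\mid \ancof{v}{A_t},\level{v}\in M_t]$, since $\pr_t(\ancof{v}{A_t},\level{v}\in M_t)=p_t[v]\xi_{\level v}$ by independence. Hence $\E_t z_t(a')=\sum_{\ancof{v}{a'}}\tilde\ell_t(v)$ for the conditional means $\tilde\ell_t$.
\item Condition 5 then makes the difference equal to $\sum_{a'}p_t(a')\E_t[y_t(a')-y_t(a)]$.
\item By condition 2 this equals $\E_t[y_t(A_t)-y_t(a)]$.
\end{itemize}
Taking the minimizing $a$ after summing gives the regret on the left.

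For the variance term, $\E_t[p_t[v]^{3/2}\hat\ell_t(v)^2]\le p_t[v]^{3/2}R_j^2/(\xi_j p_t[v])=R_j^2\sqrt{p_t[v]}/\xi_j$. The remaining obstacle is converting $\sum_v\sqrt{p_t[v]}$ into the entropy-like $\sum_v(\sqrt{p_t[v]}-p_t[v])$.

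The natural remedy is a nonzero shift. One could take $c_{t,j}$ equal to the loss estimate's value at the node containing the most likely child, in the spirit of \Cref{lem:tsallis-variance-simplification}. However, $\hat\ell_t$ is not of the one-hot-per-level form used there, so instead I would subtract $p_t[v]$ crudely. Per level, $\sum_v\sqrt{p_t[v]}\le \sum_v(\sqrt{p_t[v]}-p_t[v])+1$.

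The additive constant $1$ per level produces $\sum_t\eta_{t,j}R_j^2/\xi_j$, which is of order $\sqrt T$ and therefore not absorbable into the claimed bound. That makes this the genuinely hard step. What I would try first is choosing $c_{t,j}$ to be the estimate at $v^\star_j=\argmax_{v\in V_j}p_t[v]$ restricted to the event that the sampled node is $v^\star_j$. That is, I would center by $\indicator{\anc{A_t}{j}=v^\star_j, j\in M_t}\ell_t(v^\star_j)/(\xi_jp_t[v^\star_j])$, which is admissible since a level-constant shift does not change FTRL. The condition $\hat\ell_t-c_{t,j}\ge -1/(c\eta_{t,j}\sqrt{p_t[v]})$ then needs $R_j/(\xi_j\sqrt{p_t[v^\star_j]})\le \dots$, which uses $p_t[v^\star_j]\ge 1/|V_j|$. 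This is exactly where the $\max\{\sqrt t,\sqrt{c(c-1)/\xi_j}\}$ in the learning rate is designed to enforce it.

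Computing the conditional second moment as in \Cref{lem:tsallis-variance-simplification} yields $\lesssim (R_j^2/\xi_j)\sum_v(\sqrt{p_t[v]}-p_t[v])$ up to factor $2$. With $\eta_{t,j}\le\sqrt{c-1}\sqrt{\xi_j}/(\sqrt c R_j\sqrt t)$, the variance then matches the bias term.

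Finally, the bias telescopes with $\tilde\eta_{t,j}$. The $\max$ gives $\sum_{t\le T+1}\tilde\eta_{t,j}\,\sqrt{|V_j|}$-type leftovers from the early rounds where $\sqrt t<\sqrt{c(c-1)/\xi_j}$, and these are bounded by $\frac{\sqrt c}{\sqrt{c-1}}\frac{R_j}{\sqrt{\xi_j}}\sqrt{c(c-1)/\xi_j}\sqrt{|V_j|}\le cR_j\sqrt{|V_j|}/\xi_j$. For the remaining rounds, $\tilde\eta_{t,j}\le \frac{\sqrt c}{\sqrt{c-1}}\frac{R_j}{\sqrt{\xi_j}}\frac{1}{2\sqrt t}$. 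Adding the $t=T+1$ term (bounded by $\sqrt{|V_j|}$ times a constant) and collecting constants gives the claimed $\frac{4\sqrt c}{\sqrt{c-1}}$ and $3c$ factors.
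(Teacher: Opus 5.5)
\textbf{There is a genuine gap: the centering you settle on is inadmissible for the regularizer the statement fixes.}

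Several parts of your proposal match the paper: the reduction to \Cref{lem:ftrl-nested-tsallis-generic}, the identification of $\eta_{t,j}$, and the argument turning $\E\sum_t\inprod{z_t,p_t-e_a}$ into the regret. You also correctly see that $c_{t,j}=0$ leaves an unabsorbable $\sum_t\eta_{t,j}R_j^2/\xi_j$.

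The problem is shifting by the importance-weighted value $\hat\ell_t(v^\star_j)$. When $j\in M_t$ and $\anc{A_t}{j}=v^\star_j$, every other node gets $\hat\ell_t(v)-c_{t,j}=-\ell_t(v^\star_j)/(\xi_j p_t[v^\star_j])$. Admissibility then requires $c\,\eta_{t,j}\,\ell_t(v^\star_j)\sqrt{p_t[v]}\le \xi_j\,p_t[v^\star_j]$, which in the worst case means $\eta_{t,j}\lesssim \xi_j/(cR_j\sqrt{|V_j|})$.

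The regularizer only gives $\eta_{t,j}\le \xi_j/(cR_j)$, with equality for all $t\le c(c-1)/\xi_j$. In those rounds the requirement reads $\ell_t(v^\star_j)\sqrt{p_t[v]}\le R_j\,p_t[v^\star_j]$.
\begin{itemize}
\item It already fails with two nodes of mass $1/2$ and $\ell_t(v^\star_j)=R_j$.
\item With $n>c^2$ equally likely nodes (e.g.\ $p_1$ on a symmetric tree) you get $\eta_{t,j}(\hat\ell_t(v)-c_{t,j})\sqrt{p_t[v]}=-\sqrt n/c<-1$. This leaves the domain of the conjugate of the Tsallis potential, so the stability term in \Cref{lem:bregman-bound-tsallis} is $+\infty$.
\end{itemize}
The $\max\{\sqrt t,\sqrt{c(c-1)/\xi_j}\}$ is calibrated for a shift of size $R_j/\xi_j$, not $R_j/(\xi_j p_t[v^\star_j])$, and invoking $p_t[v^\star_j]\ge 1/|V_j|$ cannot close a $\sqrt{|V_j|}$ gap. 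Even if admissibility were patched, excluding the argmax only controls $\sum_{v\neq v^\star_j}\sqrt{p_t[v]}$. This exceeds $\sum_v(\sqrt{p_t[v]}-p_t[v])$ by a constant factor ($1+1/\sqrt2$ for two equally likely nodes), so the constant $4\sqrt c/\sqrt{c-1}$ would not come out.

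\textbf{The fix is to shift by the unweighted loss of the sampled node:} $c_{t,j}=\indicator{j\in M_t}\,\ell_t(\anc{A_t}{j})/\xi_j$, which is what the paper does. Contrary to your remark, $\hat\ell_t$ is one-hot per level: at level $j$ it is supported on $\anc{A_t}{j}$ and vanishes unless $j\in M_t$. Hence
\[
\hat\ell_t(v)-c_{t,j}=\frac{\indicator{j\in M_t}}{\xi_j}\Bigl(\indicator{v=\anc{A_t}{j}}\frac{\ell_t(v)}{p_t[v]}-\ell_t(\anc{A_t}{j})\Bigr)\ge -\frac{R_j}{\xi_j}\ge -\frac{1}{c\eta_{t,j}},
\]
which is exactly what the $\max$ enforces.

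For the variance, apply the first part of \Cref{lem:tsallis-variance-simplification} with $u=\anc{A_t}{j}$. The level-$j$ variance is then
\[
\frac{\indicator{j\in M_t}}{\xi_j^2}\,\ell_t(u)^2\Bigl(p_t[u]^{-1/2}-2p_t[u]^{1/2}+\sum_{v\in V_j}p_t[v]^{3/2}\Bigr).
\]
The bracket is nonnegative. Bound $\ell_t(u)\le R_j$, take $\E_t$ using conditions 2--4, and apply the second part of the lemma. This yields $2(R_j^2/\xi_j)\sum_{v\in V_j}(\sqrt{p_t[v]}-p_t[v])$. With $\eta_{t,j}\le\sqrt{(c-1)\xi_j}/(\sqrt c\,R_j\sqrt t)$, this contributes $\frac{2\sqrt c}{\sqrt{c-1}}\frac{R_j}{\sqrt{\xi_j t}}$ per round.

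The bias contributes $2\tilde\eta_{t,j}\le\frac{2\sqrt c}{\sqrt{c-1}}\frac{R_j}{\sqrt{\xi_j t}}$, and the two together give the stated $4\sqrt c/\sqrt{c-1}$. For this, note that $\sqrt t-\sqrt{t-1}\le 1/\sqrt t$; the bound $1/(2\sqrt t)$ you wrote is false.
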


\begin{proof}
    Let $a^* \in \argmin_{a \in \cA} \E \lsb{ \sum_{t=1}^T y_t (a)}$ and let $q \in \Delta_k$ be entirely concentrated on $a^*$.
    For brevity, we will use $\tilde{\ell}_t(v)$ to denote $\E_t[\ell_t(v) \mid \ancof{v}{A_t}, \level{v} \in M_t]$ for $v \in \cV$.
    With this definition, it holds that 
    \begin{align*}
        &\E_t \bsb{ \indicator{ \ancof{v}{A_t}, \level{v} \in M_t } {{\ell}_t(v)} } \\
        &\qquad= \pr_t(\ancof{v}{A_t} , \level{v} \in M_t) \E_t\bsb{\ell_t(v) \indicator{ \ancof{v}{A_t}, \level{v} \in M_t } \mid \ancof{v}{A_t}  , \level{v} \in M_t } \\
        &\qquad\qquad+ \pr_t(\nancof{v}{A_t} \lor \level{v} \notin M_t) \E_t\bsb{\ell_t(v) \indicator{ \ancof{v}{A_t}, \level{v} \in M_t } \mid \nancof{v}{A_t}  \lor \level{v} \notin M_t} \\
        &\qquad= \E_t \bsb{ \indicator{ \ancof{v}{A_t}, \level{v} \in M_t } } \tilde{\ell}_t(v) = \E_t \bsb{ \indicator{ \ancof{v}{A_t}}} \E_t\bsb{ \indicator{\level{v} \in M_t }}  \tilde{\ell}_t(v) \,,
    \end{align*}
    where the last step holds via the assumed independence of $\indicator{ \ancof{v}{A_t}}$ and $\indicator{\level{v} \in M_t }$ conditioned on $\cF_{t-1}$.
    Note then that
    \begin{align*}
        \E \inprod{z_t, p_t} &= \E \sum_{a \in \cA} p_t(a) \sum_{\ancof{v}{a}} \indicator{ \ancof{v}{A_t}, \level{v} \in M_t } \frac{{\ell}_t(v)}{\xi_{\level{v}} p_t[v]}\\
        &= \E \sum_{a \in \cA} p_t(a) \sum_{\ancof{v}{a}} \E_t\bsb{\indicator{ \ancof{v}{A_t}, \level{v} \in M_t } {\ell}_t(v)} \frac{1}{\xi_{\level{v}} p_t[v]}\\
        &= \E \sum_{a \in \cA} p_t(a) \sum_{\ancof{v}{a}} \E_t\lsb{\indicator{ \ancof{v}{A_t}}} \E_t\lsb{\indicator{\level{v} \in M_t }} \frac{\tilde{\ell}_t(v)}{\xi_{\level{v}} p_t[v]}\\
        &= \E \sum_{a \in \cA} p_t(a) \sum_{\ancof{v}{a}} \tilde{\ell}_t(v)  \\
        &= \E \sum_{v \in \cV} \tilde{\ell}_t(v) \sum_{\descof{a}{v}} p_t(a)    \\
        &= \E \sum_{v \in \cV} \tilde{\ell}_t(v)  \E_t[\indicator{ \ancof{v}{A_t} }]   \\
        &= \E \sum_{v \in \cV} \tilde{\ell}_t(v)  \indicator{ \ancof{v}{A_t} } = \E \sum_{\ancof{v}{A_t}} \tilde{\ell}_t(v)  \,,
    \end{align*}
    where the first equality uses the definition of $z_t$; 
    the second uses the tower rule and the fact that $p_t$ is deterministic given $\cF_{t-1}$; 
    the fourth uses the definition of $p_t[v]$, the assumption that the law of $A_t$ conditioned on $\cF_{t-1}$ is $p_t$, and the assumption that $\pr_t(\{\level{v} \in M_t\} ) = \xi_{\level{v}}$;
    the sixth employs again the fact that the law of $A_t$ conditioned on $\cF_{t-1}$ is $p_t$; and the seventh again uses the tower rule and the fact that $\tilde{\ell}_t(v)$ is measurable \wrt $\cF_{t-1}$.
    In a similar fashion, it holds that
    \begin{align*}
        \E \inprod{z_t, {q}} &= \E \sum_{a \in \cA} {q}(a) \sum_{\ancof{v}{a}} \indicator{ \ancof{v}{A_t}, \level{v} \in M_t } \frac{{\ell}_t(v)}{\xi_{\level{v}} p_t[v]} \\
        &= \E \sum_{a \in \cA} {q}(a) \sum_{\ancof{v}{a}} \E_t\bsb{\indicator{ \ancof{v}{A_t}, \level{v} \in M_t } {\ell}_t(v)} \frac{1}{\xi_{\level{v}} p_t[v]} \\
        &= \E \sum_{a \in \cA} {q}(a) \sum_{\ancof{v}{a}} \E_t\lsb{\indicator{ \ancof{v}{A_t}}} \E_t\lsb{\indicator{\level{v} \in M_t }} \frac{\tilde{\ell}_t(v)}{\xi_{\level{v}} p_t[v]}\\
        &= \E \sum_{a \in \cA} {q}(a) \sum_{\ancof{v}{a}} \tilde{\ell}_t(v)  \,.
    \end{align*}
    Since, by assumption, $  \sum_{\ancof{v}{a}} \tilde{\ell}_t(v) - \sum_{\ancof{w}{a'}} \tilde{\ell}_t(w) = \E_t[y_t(a) - y_t(a')]$ for all $a,a' \in \cA$, combining the last two result with an application of the tower rule gets us that
    \begin{align*}
        \E \inprod{z_t, p_t - {q}} = \E\bsb{y_t(A_t) - \inprod{y_t, {q}} }\,.
    \end{align*}
    Overall, we have shown that
    \begin{align*}
        \E \lsb{\sum_{t=1}^T y_t (A_t)} - \min_{a \in \cA} \E \lsb{ \sum_{t=1}^T y_t (a)} 
        = \E \sum_{t=1}^T \inprod{z_t, p_t - {q}} \,.
    \end{align*}
    We can now invoke \Cref{lem:ftrl-nested-tsallis-generic} with some $c \geq 2$, $\eta_{t,j} = \frac{\sqrt{\xi_j}}{R_j} \min\bbcb{ \frac{\sqrt{c-1}}{\sqrt{c t}}, \frac{\sqrt{\xi_j}}{c} }$, and $c_{t,j}= \indicator{ j \in M_t }\frac{\ell_t(\anc{A_t}{j})}{\xi_j}$ to get that
    \begin{multline*}
        \hspace{-1em}\sum_{t=1}^T \inprod{z_t, p_t - {q}} \leq 2c \sum_{j \in [L]} \frac{R_j}{\xi_j} \sum_{v \in V_j}  \brb{\sqrt{p_1[v]} - p_1[v]}+ \frac{2 \sqrt{c}}{\sqrt{(c-1)}} \sum_{t=2}^{T+1} \frac{1}{\sqrt{t}} \sum_{j \in [L]} \frac{R_j}{\sqrt{\xi_j}} \sum_{v \in V_j}  \brb{\sqrt{p_t[v]} - p_t[v]}  
        \\ + \frac{\sqrt{c}}{\sqrt{c-1}} \sum_{t=1}^T \frac{1}{\sqrt{t}} \sum_{j \in [L]} \frac{\sqrt{\xi_j}}{R_j} \sum_{v \in V_j}  \brb{p_t[v]}^{\nicefrac{3}{2}} \lrb{\indicator{ v=\anc{A_t}{j}, j \in M_t } \frac{{\ell}_t(v)}{\xi_{j} p_t[v]}  - \indicator{ j \in M_t }\frac{\ell_t(\anc{A_t}{j})}{\xi_j}}^2 \,,
    \end{multline*}    
    where we have used that $\tilde{\eta}_{1,j} = \eta^{-1}_{1,j} = \frac{c R_j}{\xi_j}$, and for $t \geq 2$, 
    \begin{align*}
        \tilde{\eta}_{t,j} &= \frac{ R_j}{\sqrt{\xi_j}} \max\bbcb{ \frac{\sqrt{c t}}{\sqrt{c-1}}, \frac{c}{\sqrt{\xi_j} } } - \frac{ R_j}{\sqrt{\xi_j}} \max\bbcb{ \frac{\sqrt{c (t-1)}}{\sqrt{c-1}}, \frac{c}{\sqrt{\xi_j} } } \\
        &\leq  \frac{\sqrt{c} R_j}{\sqrt{(c-1)\xi_j}} (\sqrt{t}-\sqrt{t-1}) \\
        &\leq \frac{\sqrt{c} R_j}{\sqrt{(c-1)\xi_j t}} \,.
    \end{align*}
    Note also that for every $t \geq 1$, $j \in [L]$, and $v \in V_j$, 
    \begin{align*}
        & \indicator{ v=\anc{A_t}{j}, j \in M_t } \frac{{\ell}_t(v)}{\xi_{j} p_t[v]}  - \indicator{ j \in M_t }\frac{\ell_t(\anc{A_t}{j})}{\xi_j} \\
        &\qquad=\frac{\indicator{ j \in M_t }}{\xi_j } \lrb{\indicator{ v=\anc{A_t}{j}} \frac{{\ell}_t(v)}{ p_t[v]}  - \ell_t(\anc{A_t}{j})} \\
        &\qquad\geq - \frac{\ell_t(\anc{A_t}{j})}{\xi_j}  \\
        &\qquad\geq - \frac{R_j}{\xi_j }  \geq -  \frac{1}{c \eta_{t,j}} \geq -  \frac{1}{c \eta_{t,j} \sqrt{p_t[v]}} \,;
    \end{align*}
    thus, the conditions of \Cref{lem:ftrl-nested-tsallis-generic} are indeed satisfied.
    
    Moving on, it holds that 
    \begin{align*}
        &\E_t \sum_{v \in V_j}  \brb{p_t[v]}^{\nicefrac{3}{2}} \lrb{\indicator{ v=\anc{A_t}{j}, j \in M_t } \frac{{\ell}_t(v)}{\xi_{j} p_t[v]}  - \indicator{ j \in M_t }\frac{\ell_t(\anc{A_t}{j})}{\xi_j}}^2 \\
        &\qquad= \E_t \frac{\indicator{ j \in M_t }}{\xi^2_j} \sum_{v \in V_j}  \brb{p_t[v]}^{\nicefrac{3}{2}} \lrb{\indicator{ v=\anc{A_t}{j}} \frac{{\ell}_t(v)}{ p_t[v]}  - {\ell_t(\anc{A_t}{j})}}^2 \\
        &\qquad= \E_t \frac{\indicator{ j \in M_t } {\ell}^2_t(\anc{A_t}{j}) }{\xi^2_j} \lrb{ (p_t[\anc{A_t}{j}])^{-\nicefrac{1}{2}} -2 (p_t[\anc{A_t}{j}])^{\nicefrac{1}{2}} + \summ_{v \in V_j} (p_t[v])^{\nicefrac{3}{2}}} \\
        &\qquad \leq R_j^2 \E_t \frac{\indicator{ j \in M_t }  }{\xi^2_j} \lrb{ (p_t[\anc{A_t}{j}])^{-\nicefrac{1}{2}} -2 (p_t[\anc{A_t}{j}])^{\nicefrac{1}{2}} + \summ_{v \in V_j} (p_t[v])^{\nicefrac{3}{2}}} \\
        &\qquad = R_j^2 \E_t \sum_{w \in V_j} \frac{\indicator{ j \in M_t } \indicator{ \ancof{w}{A_t} }  }{\xi^2_j} \lrb{ (p_t[w])^{-\nicefrac{1}{2}} -2 (p_t[w])^{\nicefrac{1}{2}} + \summ_{v \in V_j} (p_t[v])^{\nicefrac{3}{2}}} \\
        &\qquad = R_j^2  \sum_{w \in V_j} \frac{\E_t[\indicator{ j \in M_t }] \E_t[\indicator{ \ancof{w}{A_t} }]  }{\xi^2_j} \lrb{ (p_t[w])^{-\nicefrac{1}{2}} -2 (p_t[w])^{\nicefrac{1}{2}} + \summ_{v \in V_j} (p_t[v])^{\nicefrac{3}{2}}} \\
        &\qquad = \frac{R_j^2}{\xi_j}  \sum_{w \in V_j} p_t[w] \lrb{ (p_t[w])^{-\nicefrac{1}{2}} -2 (p_t[w])^{\nicefrac{1}{2}} + \summ_{v \in V_j} (p_t[v])^{\nicefrac{3}{2}}} \\
        &\qquad\leq 2 \frac{R_j^2}{\xi_j} \sum_{v \in V_j} \sqrt{p_t[v]} - p_t[v]  \,,
    \end{align*}
    where the second equality is an application of the first part of \Cref{lem:tsallis-variance-simplification}, the fourth equality uses the conditional independence of $\indicator{\ancof{w}{A_t}}$ and $\indicator{\level{w} \in M_t }$ given $\cF_{t-1}$, 
    and the last inequality is an application of the second part of \Cref{lem:tsallis-variance-simplification}.
    Putting everything together, obtain that
    \begin{align*}
        &\E \lsb{\sum_{t=1}^T y_t (A_t)} - \min_{a \in \cA} \E \lsb{ \sum_{t=1}^T y_t (a)} \\
        &\quad \leq \frac{4 \sqrt{c}}{\sqrt{(c-1)}} \sum_{t=1}^{T} \frac{1}{\sqrt{t}} \sum_{j \in [L]} \frac{R_j}{\sqrt{\xi_j}} \E \sum_{v \in V_j}  \brb{\sqrt{p_t[v]} - p_t[v]}
        +  2c \sum_{j \in [L]} \frac{R_j}{\xi_j} \E \sum_{v \in V_j}  \brb{\sqrt{p_1[v]} - p_1[v]} \\ 
        &\hspace{5em}+ \frac{2 \sqrt{c}}{\sqrt{(c-1)}} \cdot \frac{1}{\sqrt{T+1}} \sum_{j \in [L]} \frac{R_j}{\sqrt{\xi_j}} \E \sum_{v \in V_j}  \brb{\sqrt{p_{T+1}[v]} - p_{T+1}[v]} \,.
    \end{align*} 
    The theorem then follows since the sum of the last two terms is no larger than $3c \sum_{j \in [L]} \frac{R_j}{\xi_j} \sqrt{|V_j|}$.
\end{proof}

\section{Foundational Results}\label{app:foundation}

\begin{lemma} \label{lem:strong-duality}
    Fix a vector $z \in \R^d$ and let $\psi \colon \R^d \rightarrow (-\infty, +\infty]$ be a proper and closed convex function. 
    Assume further that $\relint(\Delta_d) \cap \relint(\dom(\psi)) \neq \emptyset$.
    Then, it holds that 
    \begin{equation*}
        \min_{p \in \Delta_d} z^\top p + \psi(p) = \max_{u \in \R^d} - \psi^*(u - z) + \minco{u}{i} \,,
    \end{equation*}
    and that
    \begin{equation*}
        u^* - z \in \partial \psi(p^*) \,,
    \end{equation*}
    for $p^* \in \argmin_{p \in \Delta_d} z^\top p + \psi(p)$ and $u^* \in \argmax_{u \in \R^d} - \psi^*(u - z) + \minco{u}{i}$\,.
    If, additionally, $\psi$ is essentially smooth, then $u^*$ is unique, $p^* \in \Delta_d \cap \interior(\dom(\psi))$, and 
    \begin{equation*}
        u^* - z = \nabla \psi(p^*) \,.
    \end{equation*}
    If $\psi$ is also strictly convex on $\interior(\dom(\psi))$, then $p^*$ too is unique.
\end{lemma}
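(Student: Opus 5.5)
We will treat this as an instance of Fenchel--Rockafellar duality, writing the constrained problem as an unconstrained sum of two convex functions. Let $\iota$ denote the indicator of $\Delta_d$ (zero on $\Delta_d$, $+\infty$ elsewhere). The primal value is then $\inf_p \{ g(p) + \iota(p)\}$ with $g(p) \coloneqq z^\top p + \psi(p)$, which is proper, closed and convex, with $\dom(g)=\dom(\psi)$ and $g^*(w) = \psi^*(w - z)$.

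The first step is to compute the conjugate of the indicator, $\iota^*(u) = \max_{p \in \Delta_d} u^\top p = \max_i u(i)$. The Fenchel dual is therefore $\sup_u \{-g^*(-u) - \iota^*(u)\}$. After the change of variable $u \mapsto -u$, this becomes $\sup_u \{-\psi^*(u - z) - \max_i(-u(i))\} = \sup_u \{-\psi^*(u-z) + \min_i u(i)\}$, which is exactly the stated right-hand side.

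The second step invokes the Fenchel duality theorem (Rockafellar, Corollary 31.2.1). Its qualification condition is $\relint(\dom g) \cap \relint(\dom \iota) \neq \emptyset$, and this is precisely the assumption $\relint(\Delta_d)\cap\relint(\dom\psi)\neq\emptyset$. The theorem then gives equality of the values and attainment of the dual supremum. The primal minimum is attained because $g + \iota$ is closed and proper with compact effective domain inside $\Delta_d$; properness follows from the nonempty intersection.

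The optimality conditions come next. For optimal $p^*$ and $u^*$, the zero duality gap reads
\[
\bigl[\psi(p^*) + \psi^*(u^*-z) - (u^*-z)^\top p^*\bigr] + \bigl[u^{*\top}p^* - \min_i u^*(i)\bigr] = 0 .
\]
Both brackets are nonnegative: the first by Fenchel--Young, the second because $p^* \in \Delta_d$. Hence both vanish, and equality in Fenchel--Young gives $u^*-z \in \partial\psi(p^*)$.

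Under essential smoothness, $\partial\psi(p)$ is empty off $\interior(\dom\psi)$ (Rockafellar, Theorem 26.1). So the nonempty subdifferential forces $p^* \in \interior(\dom\psi)$, where $\partial\psi(p^*) = \{\nabla\psi(p^*)\}$, and thus $u^* - z = \nabla\psi(p^*)$.

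The main subtlety is uniqueness of $u^*$. The relation $u^* = z + \nabla\psi(p^*)$ holds for every pair $(p^*, u^*)$ of optimizers, and the gradient is single-valued on the interior. This alone is not enough: two primal minimizers $p_1^*, p_2^*$ could a priori have different gradients. To rule this out, fix one primal minimizer $p^*$ and take any dual maximizer $u^*$. The complementary-slackness argument above applies to every dual optimum paired with this fixed $p^*$, giving $u^* = z+\nabla\psi(p^*)$, so the dual maximizer is unique.

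Finally, if $\psi$ is strictly convex on $\interior(\dom\psi)$, then $g$ is strictly convex there. All minimizers lie in $\interior(\dom\psi)\cap\Delta_d$, which is a convex set, so two distinct minimizers would make their midpoint strictly better. Hence $p^*$ is unique.
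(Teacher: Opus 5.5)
Your proposal is correct and follows essentially the same route as the paper: Fenchel duality against the indicator of $\Delta_d$ (the paper uses Rockafellar's Theorem 31.1 with a concave indicator, you use the convex-indicator form), then Fenchel--Young equality to get $u^*-z\in\partial\psi(p^*)$, then Theorem 26.1 under essential smoothness. The differences are minor. You obtain primal attainment from compactness, whereas the paper gets it from the second qualification condition of Theorem 31.1. You prove uniqueness of $p^*$ by a direct midpoint argument, whereas the paper derives it from uniqueness of $u^*$ and the injectivity of $\partial\psi$ (Corollary 26.3.1).
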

\begin{proof}
    Define $\Phi \colon \R^d \rightarrow (-\infty, + \infty]$ as
    \begin{equation*}
        \Phi(x) \coloneqq z^\top x + \psi(x)  
    \end{equation*}
    for all $x \in \R^d$. 
    This is a proper and closed convex function satisfying that $\dom(\Phi) = \dom(\psi)$ and $\Phi^*(y) = \psi^*(y - z)$ for any $y \in \R^d$.
    Define also $g \colon \R^d \rightarrow [-\infty, + \infty)$ as 
    \[
        g(x) \coloneqq 
        \begin{cases}
            0 \: &\text{if}\: x \in \Delta_d \\
            -\infty \: &\text{otherwise}\,,
        \end{cases}
    \]
    which is a closed and proper \emph{concave} function satisfying that $\dom(g)$ = $\Delta_d$, $g^*(y) = \minco{y}{i}$ for any $y \in \R^d$, and $\dom(g^*) = \R^d$.
    Note then that $ \relint(\dom(g)) \cap \relint(\dom(\Phi)) \neq \emptyset$ since $\relint(\Delta_d) \cap \relint(\dom(\psi)) \neq \emptyset$ by assumption.
    Moreover, $ \relint(\dom(g^*)) \cap \relint(\dom(\Phi^*)) \neq \emptyset$ since $\dom(g^*) = \R^d$ and $\dom(\Phi^*)$ is not empty (because $\Phi^*$ is proper).
    Hence, Theorem 31.1 in \citep{rockafellar1970} gives that
    \[
        \inf_{x \in \R^d} \Phi(x) - g(x) = \sup_{y \in \R^d} g^*(y) - \Phi^*(y) 
    \]
    with both the infimum and the supremum being attained, which entails the first statement of the lemma.
    Now, fixing $p^* \in \argmin_{p \in \Delta_d} \Phi(p)$ and $u^* \in \argmax_{u \in \R^d} - \psi^*(u - z) + \minco{u}{i}$, it holds that
    \begin{align*}
        \Phi(p^*) &= - \psi^*(u^* - z) + \minco{u^*}{i} \\
        &= \inf_{x \in \R^d} (z - u^*)^\top x + \psi(x) + \minco{u^*}{i} \\
        &= \minco{u^*}{i} + \inf_{x \in \R^d} \Phi(x)  - x^\top u^* \\
        &\leq \minco{u^*}{i} +  \Phi(p^*)  - (p^*)^\top u^* \leq \Phi(p^*) \,,
    \end{align*}
    where the last inequality just uses that $p^* \in \Delta_d$.
    We can then conclude that $p^* \in \argmin_{x \in \R^d} \psi(x)  + x^\top (z - u^*)$, or equivalently
    \begin{align*}
        p^* \in \argmax_{x \in \R^d} x^\top (u^* - z) - \psi(x) \,,
    \end{align*}
    implying via Theorem 23.5 in \citep{rockafellar1970} that 
    \begin{equation} \label{eq:p-star-subgradient}
        u^* - z \in \partial \psi(p^*) \,,
    \end{equation}
    which gives the second statement.

    Moving on, if $\psi$ is essentially smooth, Theorem 26.1 in \citep{rockafellar1970} gives that $\partial \psi(x) = \{\nabla \psi(x)\}$ for $x \in \interior(\dom(\psi))$ and $\partial \psi(x) = \emptyset$ otherwise. It is then implied by \eqref{eq:p-star-subgradient} that $u^*$ is unique (since $\partial \psi(p^*)$ is non-empty and must be a singleton), that $p^* \in \interior(\dom(\psi))$, and that $u^* - z = \nabla \psi(p^*)$.
    If $\psi$ is additionally strictly convex on $\interior(\dom(\psi))$, Corollary 26.3.1 in \citep{rockafellar1970} gives that $\partial \psi$ is a one-to-one mapping, hence the uniqueness of $u^*$ implies via \eqref{eq:p-star-subgradient} the uniqueness of $p^*$.
\end{proof}

\begin{lemma} \label{lem:essentially-smooth-sum}
    For every $i$ in a finite index set $\cI$, let $\psi_i \colon \R^d \rightarrow (-\infty, +\infty]$ be a proper and closed convex function, and assume that $\cap_{i \in \cI} \relint(\dom(\psi_i)) \neq \emptyset$.
    If every $\psi_i$ is essentially smooth, then so is $\sum_{i \in \cI} \psi_i$.
\end{lemma}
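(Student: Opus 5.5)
The statement is Lemma~\ref{lem:essentially-smooth-sum}: a finite sum of essentially smooth, proper, closed convex functions is again essentially smooth. The plan is to work directly from Rockafellar's definition. A proper convex function $\phi$ is essentially smooth if three things hold: $C \coloneqq \interior(\dom(\phi))$ is nonempty; $\phi$ is differentiable on $C$; and $\|\nabla \phi(x_k)\| \to \infty$ whenever $x_k \in C$ converges to a boundary point of $C$.

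Equivalently, by Theorem 26.1 in \citep{rockafellar1970}, $\phi$ is essentially smooth iff $\partial\phi(x)$ is a singleton for $x \in \interior(\dom\phi)$ and empty elsewhere. I will use this subdifferential characterization.

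\textbf{Setup.} Let $\Psi \coloneqq \sum_{i\in\cI}\psi_i$. Then $\dom(\Psi) = \cap_i \dom(\psi_i)$, and $\Psi$ is proper, closed and convex: it is finite somewhere by the nonempty intersection of relative interiors, and sums of closed functions are closed. Since each $\psi_i$ is essentially smooth, $\interior(\dom\psi_i) \neq \emptyset$, so $\relint(\dom\psi_i) = \interior(\dom\psi_i)$. The hypothesis then gives
\[
\cap_i \interior(\dom \psi_i) \neq \emptyset .
\]
For convex sets whose relative interiors intersect, $\interior(\cap_i D_i) = \cap_i \interior(D_i)$ (Theorem 6.5 in \citep{rockafellar1970}). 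Hence $\interior(\dom\Psi) = \cap_i \interior(\dom\psi_i)$, which is nonempty.

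\textbf{Subdifferential sum rule.} Under the relative-interior qualification, Theorem 23.8 in \citep{rockafellar1970} gives $\partial\Psi(x) = \sum_i \partial\psi_i(x)$ for every $x$.
\begin{enumerate}
\item If $x \in \interior(\dom\Psi)$, then $x \in \interior(\dom\psi_i)$ for every $i$. Each $\partial\psi_i(x)$ is the singleton $\{\nabla\psi_i(x)\}$, so $\partial\Psi(x) = \{\sum_i\nabla\psi_i(x)\}$ is a singleton.
\item If $x \notin \interior(\dom\Psi)$, then $x \notin \interior(\dom\psi_{i_0})$ for some $i_0$. By essential smoothness, $\partial\psi_{i_0}(x) = \emptyset$, so the Minkowski sum, and hence $\partial\Psi(x)$, is empty.
\end{enumerate}
Theorem 26.1 then yields that $\Psi$ is essentially smooth.

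\textbf{Main obstacle.} The delicate step is justifying the equality in the sum rule, rather than merely the inclusion $\sum_i\partial\psi_i \subseteq \partial\Psi$, at boundary points. The equality is exactly what makes $\partial\Psi(x)$ empty there. Rockafellar's Theorem 23.8 requires precisely the assumption $\cap_i\relint(\dom\psi_i)\neq\emptyset$, so it applies directly.

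\textbf{Alternative route.} If one prefers to avoid the sum rule, a gradient-blowup argument works, but with care. Take $x_k \to \bar x \in \mathrm{bd}$ and write $\nabla\Psi(x_k) = \sum_i \nabla\psi_i(x_k)$. One gradient blows up, but the others could in principle cancel it. Ruling this out requires a monotonicity argument: use $\langle \nabla\psi_i(x_k), x_k - x_0\rangle \ge \psi_i(x_k) - \psi_i(x_0)$ for an interior point $x_0$. Because of this complication, I will go with the subdifferential route.
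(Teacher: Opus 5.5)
Your proof is correct and follows the paper's argument: both combine the characterization of essential smoothness in Theorem 26.1 of \citep{rockafellar1970} with the exact subdifferential sum rule $\partial\Psi(x)=\sum_i\partial\psi_i(x)$, valid under the relative-interior qualification. The paper cites \citep[Proposition 5.4.6]{bertsekas2009} for the sum rule where you cite Rockafellar's Theorem 23.8, and it only needs the weaker conclusion that $\partial\Psi(x)$ has at most one element for every $x$, so your interior/boundary case split is correct but not necessary.
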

\begin{proof}
    Let $\psi \coloneqq \sum_{i \in \cI} \psi_i$, which is also a proper and closed convex function.
    Theorem 26.1 in \citep{rockafellar1970} states that a proper and closed convex function $f \colon \R^d \rightarrow (\infty, +\infty]$ is essentially smooth if and only if $\partial f(x)$ contains at most one element for any $x \in \R^d$, which hence holds for every $\psi_i$.
    Since $\cap_{i \in \cI} \relint(\dom(\psi_i)) \neq \emptyset$, \citep[Proposition 5.4.6]{bertsekas2009} gives that 
    $\partial \psi(x) = \sum_{i \in \cI} \partial \psi_i(x)$
    for all $x \in \R^d$.
    Thus, $\partial \psi(x)$ too contains at most one element, implying the essential smoothness of $\psi(x)$.
\end{proof}

\begin{lemma} \label{lem:strong-duality-sum}
    Fix a vector $z \in \R^d$ and a finite index set $\cI$.
    For every $i \in \cI$, let $\psi_i \colon \R^d \rightarrow (-\infty, +\infty]$ be a proper and closed convex function. 
    Assume further that $\relint(\Delta_d) \cap \brb{\cap_{i \in \cI} \relint(\dom(\psi_i))} \neq \emptyset$. %
    Then, it holds that 
    \begin{equation*}
        \min_{p \in \Delta_d} z^\top p + \sum_{i \in \cI} \psi_i(p) = \max_{u \in \R^d} \sup_{\sum_{i \in \cI} y_i = u -z}  - \sum_{i \in \cI} \psi^*_i(y_i) + \minco{u}{i}\,,
    \end{equation*}
    and for any $p^* \in \Delta_d$ minimizing the L.H.S. and $u^* \in \R^d$ maximizing the R.H.S.\,,
    \begin{equation*}
        \min_{p \in \Delta_d} z^\top p + \sum_{i \in \cI} \psi_i(p) =  - \sum_{i \in \cI} \psi^*_i(g_i) + \minco{u^*}{i} \,,
    \end{equation*}
    for some $(g_i)_{i \in \cI}$ satisfying $g_i \in \partial \psi_i(p^*)$ and $\sum_{i \in \cI} g_i = u^* - z$.
    If, additionally, every $\psi_i$ is essentially smooth, 
    then $u^*$ is unique, $p^* \in \Delta_d \cap \interior(\cap_{i \in \cI}\dom(\psi_i))$, $\sum_{i \in \cI} \nabla \psi_i(p^*) = u^* -z $, and
    \begin{equation*}
        \min_{p \in \Delta_d} z^\top p + \sum_{i \in \cI} \psi_i(p) =  - \sum_{i \in \cI} \psi^*_i(\nabla \psi_i(p^*)) + \minco{u^*}{i} \,.
    \end{equation*}
    If, moreover, $\sum_{i \in \cI} \psi_i$ is strictly convex on $\interior(\cap_{i \in \cI} \dom(\psi_i))$, then $p^*$ too is unique.
\end{lemma}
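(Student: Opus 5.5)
The plan is to reduce to the single-function result, \Cref{lem:strong-duality}, applied to $\psi \coloneqq \sum_{i \in \cI} \psi_i$, and then to unpack $\psi^*$ with the infimal-convolution formula for the conjugate of a sum.

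\textbf{Step 1: the dual formula.} Since each $\psi_i$ is proper, closed and convex and $\cap_{i} \relint(\dom(\psi_i)) \neq \emptyset$, the sum $\psi$ is proper, closed and convex with $\dom(\psi) = \cap_i \dom(\psi_i)$. Moreover, $\relint(\dom\psi) = \cap_i \relint(\dom\psi_i)$ by \citep[Theorem 6.5]{rockafellar1970}, so the hypothesis of \Cref{lem:strong-duality} holds. That lemma then gives
\[
\min_{p \in \Delta_d} z^\top p + \psi(p) = \max_{u} -\psi^*(u - z) + \minco{u}{i},
\]
with both sides attained, together with $u^* - z \in \partial \psi(p^*)$. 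By \citep[Theorem 16.4]{rockafellar1970}, again using the relative-interior intersection condition, we have
\[
\psi^*(y) = \min_{\sum_i y_i = y} \sum_i \psi_i^*(y_i),
\]
with the infimum attained. Substituting this expression yields the first display.

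\textbf{Step 2: the subgradient decomposition.} The sum rule for subdifferentials (\citep[Theorem 23.8]{rockafellar1970}, or \citep[Proposition 5.4.6]{bertsekas2009} as used in \Cref{lem:essentially-smooth-sum}) gives $\partial \psi(p^*) = \sum_i \partial\psi_i(p^*)$. We may therefore write $u^* - z = \sum_i g_i$ with $g_i \in \partial \psi_i(p^*)$. By Fenchel--Young equality, $\psi_i^*(g_i) = g_i^\top p^* - \psi_i(p^*)$, and hence
\[
\sum_i \psi_i^*(g_i) = (u^*-z)^\top p^* - \psi(p^*) = \psi^*(u^*-z),
\]
where the last equality is Fenchel--Young for $\psi$, valid because $u^*-z \in \partial\psi(p^*)$. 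Substituting into the strong-duality equality gives the second display.

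\textbf{Step 3: the smooth and strictly convex cases.} If every $\psi_i$ is essentially smooth, then $\psi$ is essentially smooth by \Cref{lem:essentially-smooth-sum}. The remaining claims of \Cref{lem:strong-duality} then apply: $u^*$ is unique, $p^* \in \interior(\dom\psi) = \interior(\cap_i \dom\psi_i)$, and $\nabla\psi(p^*) = u^*-z$. At an interior point each $\partial\psi_i(p^*)$ is the singleton $\{\nabla\psi_i(p^*)\}$ by \citep[Theorem 26.1]{rockafellar1970}. This forces $g_i = \nabla\psi_i(p^*)$, which gives the final display. Uniqueness of $p^*$ under strict convexity of the sum also follows directly from \Cref{lem:strong-duality}.

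The main point needing care is the bookkeeping of qualification conditions: verifying $\relint(\dom \sum_i\psi_i) = \cap_i\relint(\dom\psi_i)$ and attainment in the infimal convolution. Both are standard consequences of the assumed nonempty intersection of relative interiors.
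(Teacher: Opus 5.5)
Your proposal is correct and follows the paper's proof essentially step for step. Both arguments reduce to \Cref{lem:strong-duality} for $\psi=\sum_i\psi_i$ via Rockafellar's Theorem 6.5, expand $\psi^*$ with Theorem 16.4, and split $u^*-z\in\partial\psi(p^*)$ via the subdifferential sum rule combined with Fenchel--Young; both then finish with \Cref{lem:essentially-smooth-sum} and Theorem 26.1 for the essentially smooth and strictly convex cases.
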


\begin{proof}
    For brevity, define
    \begin{equation*}
        \psi \coloneqq \sum_{i \in \cI} \psi_i \,.
    \end{equation*}
    This is a convex function that is closed \citep[Proposition 1.1.5]{bertsekas2009} and satisfies $\dom(\psi) = \cap_{i \in \cI}\dom(\psi_i)$.
    Hence, our assumption that $\relint(\Delta_d) \cap \brb{\cap_{i \in \cI} \relint(\dom(\psi_i))} \neq \emptyset$ implies that $\psi$ is proper seeing as $\dom(\psi) \neq \emptyset$.
    Moreover, Theorem 6.5 in \citep{rockafellar1970} gives that $\relint(\cap_{i \in \cI}\dom(\psi_i)) = \cap_{i \in \cI} \relint(\dom(\psi_i))$ seeing as $\cap_{i \in \cI} \relint(\dom(\psi_i)) \neq \emptyset$. 
    Hence, it holds that $\relint(\Delta_d) \cap \relint(\dom(\psi)) \neq \emptyset$.
    \Cref{lem:strong-duality} then gives that
    \begin{equation} \label{eq:strong-duality-closed-sum}
        \min_{p \in \Delta_d} z^\top p + \psi(p) = \max_{u \in \R^d} - \psi^*(u - z) + \minco{u}{i}  \,.
    \end{equation}
    Since $\cap_{i \in \cI} \relint(\dom(\psi_i)) \neq \emptyset$, Theorem 16.4 in \citep{rockafellar1970} gives that
    \begin{align*}
        \psi^*(u - z) = \inf_{\sum_{i \in \cI} y_i = u - z} \sum_{i \in \cI} \psi^*_i(y_i) \,,
    \end{align*}
    implying, as sought, that
    \begin{equation*}
        \min_{p \in \Delta_d} z^\top p + \sum_{i \in \cI} \psi_i(p) = \max_{u \in \R^d} \sup_{\sum_{i \in \cI} y_i = u -z}  - \sum_{i \in \cI} \psi^*_i(y_i) + \minco{u}{i} \,.
    \end{equation*}
    Fixing $p^* \in \argmin_{p \in \Delta_d} z^\top p + \psi(p)$ and $u^* \in \argmax_{u \in \R^d} - \psi^*(u - z) + \minco{u}{i}$, 
    \Cref{lem:strong-duality} also implies that
    \begin{equation*}
        u^* - z \in \partial \psi(p^*) \,.
    \end{equation*}
    Hence, by \citep[Proposition 5.4.6]{bertsekas2009} and the assumption that $\cap_{i \in \cI} \relint(\dom(\psi_i)) \neq \emptyset$, there exists a collection of vectors $(g_i)_{i \in \cI}$ each satisfying $g_i \in \partial \psi_i(p^*)$ such that $\sum_{i \in \cI} g_i = u^* - z$. 
    We then get that
    \begin{align*}
        \psi^*(u^* - z) &= (u^* - z)^\top p^* - \psi(p^*) \\
        &= (\summ_{i \in \cI} g_i)^\top p^* - \psi(p^*) \\
        &= \sum_{i \in \cI} \brb{g_i^\top p^* - \psi_i(p^*)} \\
        &= \sum_{i \in \cI} \psi^*_i(g_i) \,,
    \end{align*}
    where the first and last equalities follow from \citep[Theorem 23.5]{rockafellar1970}.
    Combined with \eqref{eq:strong-duality-closed-sum}, this entails that
    \begin{equation} \label{eq:strong-duality-sum-subgradients}
        \min_{p \in \Delta_d} z^\top p + \psi(p) =  - \sum_{i \in \cI} \psi^*_i(g_i) + \minco{u^*}{i}  \,.
    \end{equation}

    Next, \Cref{lem:essentially-smooth-sum} gives that $\psi$ is essentially smooth when every $\psi_i$ is.
    If this condition holds, \Cref{lem:strong-duality} would additionally give that
    $u^*$ is unique and that $p^* \in \Delta_d \cap \interior(\cap_{i \in \cI}\dom(\psi_i))$.
    The latter implies that $p^* \in \interior(\dom(\psi_i))$ for every $i \in \cI$, which entails via Theorem 26.1 in \citep{rockafellar1970} that $\partial \psi_i(p^*) = \{\nabla \psi_i(p^*)\}$.
    Hence, $g_i$ in \eqref{eq:strong-duality-sum-subgradients} becomes $\nabla \psi_i(p^*)$ and $\sum_{i \in \cI} \nabla \psi_i(p^*) = u^* - z$.
    Finally, \Cref{lem:strong-duality} implies that $p^*$ is unique if it also holds that $\sum_i \psi_i$ is strictly convex in $ \interior(\cap_{i \in \cI}\dom(\psi_i))$.
\end{proof}

For a convex function $f \colon \R \rightarrow (-\infty, +\infty]$ and a vector $h \in \R^d$, we denote by $f \circ h^\top \colon \R^d \rightarrow (-\infty, +\infty]$ the function given by
\begin{equation*}
    (f \circ h^\top)(x) \coloneqq f(h^\top x) \,.
\end{equation*}
Whereas the function $h \circ f \colon \R^d \rightarrow (-\infty, +\infty]$ is given by
\begin{equation*}
    (h \circ f)(x) \coloneqq \inf_{x = \alpha h} f(\alpha) \,.
\end{equation*}

\begin{lemma} \label{lem:conjugate-linear-comp}
    For a proper convex function $f \colon \R \rightarrow (-\infty, +\infty]$ and a non-zero vector $h \in \R^d$, it holds that
    \begin{equation*}
        (f \circ h^\top)^* = h \circ f^* \,.
    \end{equation*}
\end{lemma}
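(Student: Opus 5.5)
We prove \Cref{lem:conjugate-linear-comp} by computing the conjugate of $f \circ h^\top$ directly from the definition and splitting into two cases, according to whether the dual argument is parallel to $h$ or not. Fix $y \in \R^d$. By definition,
\[
    (f \circ h^\top)^*(y) = \sup_{x \in \R^d} \, y^\top x - f(h^\top x) \,.
\]
On the other side, $(h \circ f^*)(y) = \inf_{y = \alpha h} f^*(\alpha)$. Since $h \neq 0$, the set $\{\alpha \colon y = \alpha h\}$ is either a singleton $\{\alpha_y\}$ (when $y \in \mathrm{span}(h)$) or empty (otherwise), in which case the infimum is $+\infty$.

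\emph{Case $y = \alpha h$ for some $\alpha \in \R$.} Then $y^\top x = \alpha\, h^\top x$, so the objective depends on $x$ only through $s = h^\top x$. Because $h \neq 0$, the map $x \mapsto h^\top x$ is onto $\R$. Hence
\[
    \sup_x \, \alpha h^\top x - f(h^\top x) = \sup_{s \in \R} \, \alpha s - f(s) = f^*(\alpha) \,,
\]
which matches $(h\circ f^*)(y)$ because $\alpha$ is unique.

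\emph{Case $y \notin \mathrm{span}(h)$.} We must show the supremum is $+\infty$. Decompose $y = \beta h + w$ with $w \neq 0$ and $w^\top h = 0$. Properness of $f$ gives some $s_0 \in \dom(f)$. Take $x_0 = s_0 h/\|h\|^2$ and consider $x = x_0 + \lambda w$ for $\lambda \to \infty$. Then $h^\top x = s_0$ stays fixed, so
\[
    y^\top x - f(h^\top x) = y^\top x_0 + \lambda \|w\|^2 - f(s_0) \to +\infty \,.
\]
This agrees with $\inf \emptyset = +\infty$.

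The argument is elementary. The only points needing care are, first, using $h \neq 0$ both for surjectivity of $x\mapsto h^\top x$ and for uniqueness of $\alpha$, and second, using properness of $f$ to get a finite $f(s_0)$ in the divergent case. I expect the second case to be the main (mild) obstacle. Convexity is not actually needed beyond what the statement already assumes.
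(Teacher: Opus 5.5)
Your proof is correct and follows essentially the paper's route: decompose along $h$ and its orthogonal complement, recover $f^*(\alpha)$ from the component parallel to $h$ using that $x\mapsto h^\top x$ is onto, and obtain $+\infty$ from a nonzero orthogonal component. Splitting into cases has one small advantage: it makes explicit where properness is used (a point $s_0$ with $f(s_0)$ finite, equivalently $f^*(\alpha)>-\infty$). The paper's single chain of equalities uses this only implicitly when it adds $f^*(\alpha)$ to $+\infty$.
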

\begin{proof}
    Note that any vector $y \in \R^d$ satisfies $y = \alpha h + y'$ for some $\alpha \in \R$ and $y' \in \R^d$ such that $h^\top y' = 0$.
    Then, starting from the definition of the conjugate, we have that 
    \begin{align*}
        (f \circ h^\top)^* (y) 
        &= \sup_{x \in \R^d} x^\top y - f(h^\top x) \\
        &= \sup_{x \in \R^d} x^\top (\alpha h + y') - f(h^\top x) \\
        &= \sup_{\beta \in \R,\, x' \in \R^d \colon h^\top x'=0} (\beta h + x')^\top (\alpha h + y') - f(h^\top (\beta h + x')) \\
        &= \sup_{\beta \in \R,\, x' \in \R^d \colon h^\top x'=0} \alpha \beta \norm{h}^2 + x'^\top y' - f( \beta \norm{h}^2) \\
        &= f^*(\alpha) + \sup_{x' \in \R^d \colon h^\top x'=0} x'^\top y'\\
        &= \inf_{y = \alpha h} f^*(\alpha) = (h \circ f^*)(y) \,,
    \end{align*}
    where the penultimate equality uses that the supremum is $0$ if $y$ and $h$ are collinear and
    $+\infty$ otherwise.
    In the latter case $\{\alpha \in \R \colon y = \alpha h\}$ is empty and $(h \circ f^*)(y)$ is vacuously $+\infty$, while in the former, said set is a singleton.
\end{proof}

\begin{lemma} \label{lem:subgradient-linear-comp}
    For a proper convex function $f \colon \R \rightarrow (-\infty, +\infty]$ and a non-zero vector $h \in \R^d$, it holds for every $x \in \R^d$ that
    \begin{equation*}
        \partial (f \circ h^\top)(x) = h \partial f(h^\top x) \,.
    \end{equation*}
\end{lemma}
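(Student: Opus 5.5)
We prove the two inclusions separately, working with the decomposition $\R^d = \mathrm{span}(h) \oplus h^\perp$ as in the proof of \Cref{lem:conjugate-linear-comp}. Fix $x \in \R^d$ and write $s \coloneqq h^\top x$.

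For the inclusion $h\,\partial f(s) \subseteq \partial (f\circ h^\top)(x)$, take $g \in \partial f(s)$. For any $x' \in \R^d$ we have
\[
f(h^\top x') \geq f(s) + g\,(h^\top x' - s) = (f\circ h^\top)(x) + (g h)^\top (x' - x),
\]
so $gh$ is a subgradient of $f\circ h^\top$ at $x$. This step is direct.

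For the reverse inclusion, take $y \in \partial (f\circ h^\top)(x)$. This is the step I expect to be the main obstacle: we must show that $y$ is collinear with $h$, and that the scalar coefficient is a subgradient of $f$. Note that $\partial(f\circ h^\top)(x)$ is empty if $s\notin\dom(f)$; in that case $\partial f(s)=\emptyset$ as well, so both sides are empty and we may assume $s \in \dom(f)$.

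\emph{Collinearity.} Decompose $y = \alpha h + y'$ with $h^\top y' = 0$. Test the subgradient inequality at $x' = x + \tau y'$ for $\tau \in \R$. Since $h^\top x' = s$, the function values coincide, and the inequality gives $0 \geq \tau \norm{y'}^2$ for all $\tau$. Hence $y' = 0$.

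\emph{Subgradient of $f$.} Now $y = \alpha h$. For any $\beta \in \R$, test at $x' = x + (\beta - s) h/\norm{h}^2$, which satisfies $h^\top x' = \beta$. The subgradient inequality becomes
\[
f(\beta) \geq f(s) + \alpha h^\top (x'-x) = f(s) + \alpha(\beta - s),
\]
so $\alpha \in \partial f(s)$. Therefore $y \in h\,\partial f(s)$, which gives the reverse inclusion.

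As an alternative route, one could instead use \Cref{lem:conjugate-linear-comp} together with the characterization $y \in \partial\phi(x) \iff \phi(x)+\phi^*(y) = x^\top y$ (Theorem 23.5 of \citealt{rockafellar1970}). The direct argument above avoids any closedness issues, since that characterization holds for proper convex functions.
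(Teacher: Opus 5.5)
Your proof is correct, but it takes a different route from the paper. The paper's proof is a one-line appeal to the general chain rule \citep[Theorem 23.9]{rockafellar1970}. That theorem gives $\partial(f\circ A)(x)\supseteq A^*\partial f(Ax)$ always, with equality when the range of $A$ meets $\relint(\dom(f))$. The paper checks this qualification by noting two things: $h^\top$ maps onto $\R$ because $h\neq 0$, and $\relint(\dom(f))\neq\emptyset$ for proper convex $f$ (Theorem 6.2 there).

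You instead verify both inclusions by hand, and the same surjectivity of $h^\top$ drives your hard direction. Testing the subgradient inequality along $h^\perp$ forces $y$ to be collinear with $h$. The right inverse $\beta\mapsto x+(\beta-s)h/\norm{h}^2$ of $h^\top$ then transfers the inequality back to $f$. Your treatment of the case $h^\top x\notin\dom(f)$ is also fine. There, emptiness of $\partial(f\circ h^\top)(x)$ relies on $f\circ h^\top$ being proper, which again holds because $h^\top$ is onto; you use this without stating it.

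As for what each route buys: the paper's is shorter and stays within the Rockafellar-citation framework of the neighbouring lemmas. Yours is self-contained. It also shows why no separation or duality argument is needed here: the argument is elementary whenever the linear map is surjective, not only in this rank-one case. Your closing alternative, combining the equivalence $y\in\partial\phi(x)\iff\phi(x)+\phi^*(y)=x^\top y$ for proper convex $\phi$ with \Cref{lem:conjugate-linear-comp}, would also work.
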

\begin{proof}
    This follows directly from Theorem 23.9 in \citep{rockafellar1970} since the range of $h^\top$ is $\R$, and so, it automatically has a non-empty intersection with $\relint(\dom(f))$ \citep[Theorem 6.2]{rockafellar1970}. 
\end{proof}

\begin{lemma} \label{lem:essentially-smooth-linear-comp}
    Let $f \colon \R \rightarrow (-\infty, +\infty]$ be a proper and closed convex function, and let $h \in \R^d$ be a non-zero vector. 
    If $f$ is essentially smooth, then so is $f \circ h^\top$.
\end{lemma}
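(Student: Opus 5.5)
The plan is to use the subdifferential characterization of essential smoothness, exactly as in the proof of \Cref{lem:essentially-smooth-sum}. By Theorem 26.1 in \citep{rockafellar1970}, a proper closed convex function is essentially smooth if and only if its subdifferential contains at most one element at every point. The goal is therefore to show that $\partial (f \circ h^\top)(x)$ has at most one element for every $x \in \R^d$.

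First, I would check that $f \circ h^\top$ is itself proper, closed and convex, so that Theorem 26.1 applies to it.
\begin{itemize}
\item Convexity is immediate, since it is the composition of a convex function with a linear map.
\item Closedness follows because the epigraph of $f \circ h^\top$ is the preimage of $\mathrm{epi}(f)$ under the continuous map $(x,s) \mapsto (h^\top x, s)$. Equivalently, lower semicontinuity is preserved under composition with a continuous linear map.
\item Properness holds because $f \circ h^\top$ never takes the value $-\infty$. Moreover, since $h \neq 0$, the map $h^\top$ is onto $\R$, so any point of $\dom(f)$, which is nonempty by properness of $f$, is attained and $\dom(f \circ h^\top) \neq \emptyset$.
\end{itemize}

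Next, I would invoke \Cref{lem:subgradient-linear-comp}, which gives $\partial (f \circ h^\top)(x) = h\, \partial f(h^\top x)$ for every $x$. Since $f$ is essentially smooth, Theorem 26.1 applied to $f$ shows that $\partial f(h^\top x)$ is either empty or a singleton. Hence $h\,\partial f(h^\top x)$ is either empty or the singleton $\{h f'(h^\top x)\}$. So $\partial(f \circ h^\top)(x)$ has at most one element at every $x$, and the converse direction of Theorem 26.1 yields that $f \circ h^\top$ is essentially smooth.

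The main obstacle is modest: it is making sure the hypotheses of Theorem 26.1 and of \Cref{lem:subgradient-linear-comp} are met. The key point is properness of $f\circ h^\top$, which is where $h \neq 0$ is used, since it makes $h^\top$ surjective. Closedness is the other point to verify, via the preimage argument above.

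If one preferred a direct argument instead, I would verify the three defining conditions of essential smoothness. The interior of the domain, $\{x : h^\top x \in \interior(\dom f)\}$, is nonempty. Differentiability there follows from the chain rule, with gradient $h f'(h^\top x)$. Steepness holds because $\|h f'(h^\top x_k)\| = \|h\|\,|f'(h^\top x_k)| \to \infty$ whenever $x_k$ approaches the boundary, since then $h^\top x_k$ approaches the boundary of $\dom f$. The subdifferential route is shorter, so it is the one I would write.
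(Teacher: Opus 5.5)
Your proposal is correct and matches the paper's proof: you check that $f \circ h^\top$ is proper (using that $h^\top$ is onto $\R$ because $h \neq 0$) and closed, then combine Rockafellar's Theorem 26.1 with \Cref{lem:subgradient-linear-comp} to get at most one subgradient at every point. The only cosmetic difference is that you prove closedness via the epigraph-preimage argument, whereas the paper cites a textbook proposition for it.
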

\begin{proof}
    Firstly, $f \circ h^\top$ is also a closed convex function via Proposition 1.1.4 in \citep{bertsekas2009}.
    It is also proper since the range of $h^\top$ is the entirety of $\R$.
    Theorem 26.1 in \citep{rockafellar1970} states that a proper and closed convex function is essentially smooth if and only if the sub-differential at any point contains at most one element.
    Since $f$ is essentially smooth and, by \Cref{lem:subgradient-linear-comp}, every $x \in \R^d$ satisfies
    \[
        \partial (f \circ h^\top)(x) = h \partial f(h^\top x) \,,
    \]
    it follows that $(f \circ h^\top)$ too is essentially smooth.
\end{proof}

\begin{lemma} \label{lem:strictly-convex-sum-linear-comp}
    For every $i$ in a finite index set $\cI$, let $f_i \colon \R \rightarrow (-\infty, +\infty]$ be a proper convex function and $h_i \in \R^d$ be a non-zero vector.
    Assume that $\interior(\cap_{i \in \cI} \dom(f_i \circ h_i^\top)) \neq \emptyset$ and that $\spn\{h_i\}_{i \in \cI} = \R^d$.
    Then, if every $f_i$ is strictly convex in $\interior(\dom(f_i))$, it holds that $\sum_{i \in \cI} f_i \circ h_i^\top$ is strictly convex in $\interior(\cap_{i \in \cI} \dom(f_i \circ h_i^\top))$.
\end{lemma}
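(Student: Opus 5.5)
Let $D \coloneqq \interior(\cap_{i \in \cI} \dom(f_i \circ h_i^\top))$ and $\psi \coloneqq \sum_{i \in \cI} f_i \circ h_i^\top$. We argue directly from the definition of strict convexity. Since $\psi$ is a sum of convex functions (each $f_i \circ h_i^\top$ being convex as the composition of a convex function with a linear map), it is convex on $D$. It therefore suffices to show that the convexity inequality is strict along every nondegenerate segment in $D$.

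Fix $x \neq x'$ in $D$ and $\lambda \in (0,1)$. Because $\spn\{h_i\}_{i \in \cI} = \R^d$ and $x - x' \neq 0$, there is some $k \in \cI$ with $h_k^\top (x - x') \neq 0$; otherwise $x - x'$ would be orthogonal to all of $\R^d$. Hence $h_k^\top x \neq h_k^\top x'$. For every $i$ the weak inequality
\[
f_i(h_i^\top(\lambda x + (1-\lambda)x')) \leq \lambda f_i(h_i^\top x) + (1-\lambda) f_i(h_i^\top x')
\]
holds by convexity of $f_i$, since $h_i^\top$ is linear. For $i = k$ we want this inequality to be strict, which follows from strict convexity of $f_k$ on $\interior(\dom(f_k))$ once we know that $h_k^\top x$ and $h_k^\top x'$ lie in $\interior(\dom(f_k))$. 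Summing over $i$, with all terms finite because $x, x' \in D \subseteq \dom(\psi)$, then gives the strict inequality for $\psi$.

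The one step needing care is the claim that $h_k^\top x \in \interior(\dom(f_k))$ whenever $x \in D$. Since $x$ is an interior point of $\cap_i \dom(f_i \circ h_i^\top) \subseteq \dom(f_k \circ h_k^\top)$, some ball $B(x,r)$ lies in $\dom(f_k \circ h_k^\top)$. Its image under $h_k^\top$ is the open interval $(h_k^\top x - r\norm{h_k},\, h_k^\top x + r\norm{h_k})$, which has positive length because $h_k \neq 0$. This interval is contained in $\dom(f_k)$, so $h_k^\top x \in \interior(\dom(f_k))$. The same holds for $x'$.

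Finally, strict convexity on the interval $\interior(\dom(f_k))$ applies to the two distinct points $h_k^\top x$ and $h_k^\top x'$ of that interval. This gives the strict inequality for the $k$-th term and completes the argument.
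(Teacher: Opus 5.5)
Your proof is correct and follows essentially the same route as the paper's: use the spanning assumption to find an index $k$ with $h_k^\top x \neq h_k^\top x'$, apply strict convexity of $f_k$ to that term and plain convexity to the others, then sum. Your ball argument showing that $h_k^\top x \in \interior(\dom(f_k))$ spells out a step the paper only asserts, and your note that all terms are finite is what makes adding one strict inequality to weak ones legitimate.
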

\begin{proof}
    Note that $\dom(f_i \circ h_i^\top) = \{x \in \R^d \colon h_i^\top x \in \dom(f_i)\}$. Define $\psi \coloneqq \sum_{i \in \cI} f_i \circ h_i^\top$. Then, $\dom(\psi) = \cap_{i \in \cI} \dom(f_i \circ h_i^\top)$ and $\psi$ is a proper convex function.
    Note, moreover, that $x \mapsto h_i^\top x$ maps vectors in $\interior(\dom(f_i \circ h_i^\top))$ to $\interior(\dom(f_i))$.
    Now, for any $\alpha \in (0,1)$ and $x, y \in \interior(\dom(\psi))$ such that $x \neq y$, we have that
    \begin{align*}
        \psi(\alpha x + (1-\alpha) y) &= \sum_{i \in \cI} f_i(h_i^\top (\alpha x + (1-\alpha) y)) \\
        &= \sum_{i \in \cI} f_i(\alpha h_i^\top x + (1-\alpha) h_i^\top y) \\
        &< \sum_{i \in \cI} \alpha f_i(h_i^\top x) + (1-\alpha) f_i(h_i^\top y) \\
        &= \alpha \psi(x) + (1-\alpha) \psi(y) \,,
    \end{align*}
    where the inequality holds since $h_i^\top x, h_i^\top y \in \interior(\dom(f_i \circ h_i^\top))$ for all $i \in \cI$ and $h_i^\top x \neq h_i^\top y$ for at least one $i \in \cI$ since $x - y \neq 0$ and $\spn\{h_i\}_{i \in \cI} = \R^d$. We conclude then that $\psi$ is strictly convex in $\interior(\dom(\psi))$ as required.
\end{proof}

\begin{lemma} \label{lem:strong-duality-sum-one-d}
    Fix a vector $z \in \R^d$ and a finite index set $\cI$.
    For every $i \in \cI$, let $h_i \in \R^d$ be a non-zero vector and $f_i \colon \R \rightarrow (-\infty, +\infty]$ a proper and closed convex function. 
    Assume that $\relint(\Delta_d) \cap \brb{ \cap_{i \in \cI} \relint(\{ x \in \R^d \colon h_i^\top x \in \dom(f_i)\}) } \neq \emptyset$.
    Then, it holds that 
    \begin{equation*}
        \min_{p \in \Delta_d} z^\top p + \sum_{i \in \cI} f_i(h_i^\top p) = \max_{u \in \R^d} \sup_{\sum_{i \in \cI} \alpha_i h_i = u - z}  - \sum_{i \in \cI}  f_i^*(\alpha_i) + \minco{u}{i} \,,
    \end{equation*}
    and for any $p^* \in \Delta_d$ minimizing the L.H.S. and $u^* \in \R^d$ maximizing the R.H.S.\,,
    \begin{equation*}
        \min_{p \in \Delta_d} z^\top p + \sum_{i \in \cI} f_i(h_i^\top p) =  - \sum_{i \in \cI} f^*_i(\alpha^*_i) + \minco{u^*}{i} \,,
    \end{equation*}
    for some $(\alpha^*_i)_{i \in \cI}$ satisfying $\alpha^*_i \in \partial f_i(h_i^\top p^*)$ and $\sum_{i \in \cI} \alpha^*_i h_i = u^* - z$.
    If, additionally, every $f_i$ is essentially smooth, then $u^*$ is unique, $p^* \in \Delta_d \cap \interior(\cap_{i \in \cI} \{ x \in \R^d \colon h_i^\top x \in \dom(f_i)\})$, $\sum_{i \in \cI} f'_i(h_i^\top p^*) h_i = u^* - z $, and 
    \begin{equation*}
        \min_{p \in \Delta_d} z^\top p + \sum_{i \in \cI} f_i(h_i^\top p) =  - \sum_{i \in \cI} f^*_i(f'_i(h_i^\top p^*))  + \minco{u^*}{i} \,.
    \end{equation*}
    If, moreover, $\spn\{h_i\}_{i \in \cI} = \R^d$ and every $f_i$ is strictly convex on $\interior(\dom(f_i))$, then $p^*$ too is unique.
\end{lemma}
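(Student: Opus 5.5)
The plan is to reduce \Cref{lem:strong-duality-sum-one-d} to \Cref{lem:strong-duality-sum} by setting $\psi_i \coloneqq f_i \circ h_i^\top$ for each $i \in \cI$. Each $\psi_i$ is convex, and it is proper because $h_i \neq 0$ makes $x \mapsto h_i^\top x$ surjective onto $\R$. It is also closed, either by Proposition 1.1.4 in \citep{bertsekas2009} or because the composition of a closed function with a linear map is closed. Its domain is exactly $\{x \colon h_i^\top x \in \dom(f_i)\}$, so the assumed relative-interior condition is precisely the hypothesis of \Cref{lem:strong-duality-sum}. That lemma then gives the min--max identity with inner supremum over $(y_i)_{i\in\cI}$ such that $\sum_i y_i = u - z$ of $-\sum_i \psi_i^*(y_i)$. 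By \Cref{lem:conjugate-linear-comp}, $\psi_i^* = h_i \circ f_i^*$. Hence $\psi_i^*(y_i) = +\infty$ unless $y_i = \alpha_i h_i$, and in that case $\psi_i^*(y_i) = f_i^*(\alpha_i)$, with $\alpha_i$ unique because $h_i \neq 0$. Restricting the supremum to collinear $y_i$ loses nothing, since other choices contribute $-\infty$. This turns the dual into the stated form with $\sum_i \alpha_i h_i = u - z$.

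For the second statement, \Cref{lem:strong-duality-sum} provides $g_i \in \partial \psi_i(p^*)$ with $\sum_i g_i = u^* - z$ and the value equal to $-\sum_i \psi_i^*(g_i) + \min_i u^*(i)$. By \Cref{lem:subgradient-linear-comp}, $g_i = \alpha_i^* h_i$ for some $\alpha_i^* \in \partial f_i(h_i^\top p^*)$. Then $\psi_i^*(g_i) = f_i^*(\alpha_i^*)$ by \Cref{lem:conjugate-linear-comp}, which yields the claim.

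Under essential smoothness of each $f_i$, \Cref{lem:essentially-smooth-linear-comp} makes each $\psi_i$ essentially smooth. The smooth part of \Cref{lem:strong-duality-sum} then gives three things: uniqueness of $u^*$, interiority of $p^*$ in $\cap_i \dom(\psi_i)$, and $\sum_i \nabla\psi_i(p^*) = u^* - z$. For the gradient, $h_i^\top$ maps the interior of $\dom(\psi_i)$ into the interior of $\dom(f_i)$. So $\partial f_i(h_i^\top p^*) = \{f_i'(h_i^\top p^*)\}$ by Theorem 26.1 of \citep{rockafellar1970}, and \Cref{lem:subgradient-linear-comp} gives $\nabla\psi_i(p^*) = f_i'(h_i^\top p^*) h_i$. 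Substituting $\alpha_i^* = f_i'(h_i^\top p^*)$ gives the last display.

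For uniqueness of $p^*$, I will invoke \Cref{lem:strictly-convex-sum-linear-comp} under the spanning and strict convexity assumptions. Its nonempty-interior hypothesis holds because essential smoothness already placed $p^*$ in that interior. The lemma makes $\sum_i \psi_i$ strictly convex on the relevant interior, and the final clause of \Cref{lem:strong-duality-sum} concludes. The main obstacle I expect is bookkeeping rather than ideas: checking that closedness and properness of $f_i \circ h_i^\top$ hold, and that the relative-interior/interior conditions transfer correctly between $f_i$ and $\psi_i$.
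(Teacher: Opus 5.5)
Your proposal is correct and follows the paper's proof essentially step for step. Like the paper, you set $\psi_i = f_i \circ h_i^\top$, apply \Cref{lem:strong-duality-sum}, and transfer conjugates, subgradients, essential smoothness and strict convexity back through \Cref{lem:conjugate-linear-comp,lem:subgradient-linear-comp,lem:essentially-smooth-linear-comp,lem:strictly-convex-sum-linear-comp}, using the interiority of $p^*$ to supply the nonempty-interior hypothesis; the one cosmetic difference is that you identify $\alpha_i^*$ via $\nabla\psi_i(p^*) = f_i'(h_i^\top p^*)h_i$, whereas the paper notes directly that $\partial f_i(h_i^\top p^*)$ is the singleton $\{f_i'(h_i^\top p^*)\}$.
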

\begin{proof}
    Define $\psi_i \coloneqq f_i \circ h_i^\top$.
    Notice then that $\dom(\psi_i) = \{ x \in \R^d \colon h_i^\top x \in \dom(f_i)\}$.
    Moreover, $\psi_i$ is a closed and proper convex function.
    Hence, our assumption that $\relint(\Delta_d) \cap \brb{ \cap_{i \in \cI} \relint(\{ x \in \R^d \colon h_i^\top x \in \dom(f_i)\}) } \neq \emptyset$ implies via \Cref{lem:strong-duality-sum} that
    \begin{equation*}
        \min_{p \in \Delta_d} z^\top p + \sum_{i \in \cI} \psi_i(p) = \max_{u \in \R^d} \sup_{\sum_{i \in \cI} y_i = u -z}  - \sum_{i \in \cI} \psi^*_i(y_i) + \minco{u}{i} \,.
    \end{equation*}
    Using this, the definition of $\psi_i$, and \Cref{lem:conjugate-linear-comp}, we obtain that
    \begin{align*}
        \min_{p \in \Delta_d} z^\top p + \sum_{i \in \cI} f_i(h_i^\top p) &= \max_{u \in \R^d} \sup_{\sum_{i \in \cI} y_i = u -z} - \sum_{i \in \cI} \inf_{y_i = \alpha_i h_i} f_i^*(\alpha_i)   + \minco{u}{i} \\
        &= \max_{u \in \R^d} \sup_{\sum_{i \in \cI} \alpha_i h_i = u - z }  - \sum_{i \in \cI}  f_i^*(\alpha_i) + \minco{u}{i}\,.
    \end{align*}
    For any 
    \begin{align*}
        &p^* \in \argmin_{p \in \Delta_d} z^\top p + \sum_{i \in \cI} f_i(h_i^\top p) \\ 
        &\hspace{8em}\text{and} \quad u^* \in \argmax_{u \in \R^d} \sup_{\sum_{i \in \cI} \alpha_i h_i = u - z }  - \sum_{i \in \cI}  f_i^*(\alpha_i) + \minco{u}{i}\,,
    \end{align*}
    \Cref{lem:strong-duality-sum} also gives that
    \begin{equation*}
        \min_{p \in \Delta_d} z^\top p + \sum_{i \in \cI} f_i(h_i^\top p) =  - \sum_{i \in \cI} \psi^*_i(g_i) + \minco{u^*}{i} \,,
    \end{equation*}
    for some $(g_i)_{i \in \cI}$ satisfying $g_i \in \partial \psi_i(p^*)$ and $\sum_{i \in \cI} g_i = u^* - z$.
    On the other hand, the definition of $\psi_i$ and \Cref{lem:subgradient-linear-comp} give that 
    \[\partial \psi_i(p^*) = \partial(f_i \circ h_i^\top)(p^*) = h_i \partial f_i(h_i^\top p^*) \,,\]
    implying the existence of real weights $(\alpha^*_i)_{i \in \cI}$ such that $g_i = \alpha^*_i h_i$ and $\alpha^*_i \in \partial f_i(h_i^\top p^*)$.
    Hence, \Cref{lem:conjugate-linear-comp} gives that
    \[
        \psi^*_i(g_i) = \psi^*_i(\alpha^*_i h_i) = \inf_{\alpha^*_i h_i = \alpha h_i} f_i^*(\alpha) = f_i^*(\alpha^*_i) \,,
    \]
    using in the end that $h_i \neq \zeros$.
    We have thus shown that
    \begin{align}\label{eq:strong-duality-sum-one-d-subgradients}
        \min_{p \in \Delta_d} z^\top p + \sum_{i \in \cI} f_i(h_i^\top p) =  - \sum_{i \in \cI} f^*_i(\alpha^*_i) + \minco{u^*}{i}\,,
    \end{align}
    for some $(\alpha^*_i)_{i \in \cI}$ satisfying $\alpha^*_i \in \partial f_i(h_i^\top p^*)$ and $\sum_{i \in \cI} \alpha^*_i h_i = u^* - z $.

    Next, if $f_i$ is essentially smooth, then so is $f_i \circ h_i^\top$ by \Cref{lem:essentially-smooth-linear-comp}.
    Under that condition, \Cref{lem:strong-duality-sum} gives that $u^*$ is unique and that $p^* \in \Delta_d \cap \interior(\cap_{i \in \cI} \{ x \in \R^d \colon h_i^\top x \in \dom(f_i)\})$.
    The latter implies that $p^* \in \interior(\{ x \in \R^d \colon h_i^\top x \in \dom(f_i)\})$ for every $i \in \cI$, which entails that $h_i^\top p^* \in \interior(\dom(f_i))$.
    The essential smoothness of $f_i$ then entails via Theorem 26.1 in \citep{rockafellar1970} that $\partial f_i(h_i^\top p^*) = \{f_i'(h_i^\top p^*)\}$.
    Hence, $\alpha^*_i$ in \eqref{eq:strong-duality-sum-one-d-subgradients} becomes $f_i'(h_i^\top p^*)$ and we get that $\sum_{i \in \cI} f_i'(h_i^\top p^*) h_i = u^* - z $.
    Finally, \Cref{lem:strictly-convex-sum-linear-comp} gives that $\sum_i \psi_i = \sum_i f_i \circ h_i^\top$ is strictly convex in the interior of its domain whenever it holds that $\spn\{h_i\}_{i \in \cI} = \R^d$ and every $f_i$ is strictly convex in the interior of its domain (which is not empty via essential smoothness).
    Thus, \Cref{lem:strong-duality-sum} implies under these conditions that $p^*$ is also unique.
\end{proof}

For a function $f \colon \R^d \rightarrow (-\infty, +\infty]$ that is differentiable in the interior of its domain, the Bregman divergence with respect to $f$ of $x \in \R^d$ from $y \in \interior(\dom(f)$ is defined as
\begin{equation*}
    D_{f}(x\|y) \coloneqq f(x) - f(y) - \inprod{\nabla f(y), x - y} \,.
\end{equation*}

\begin{proposition}\label{prop:general-variance-generic}
    Fix a finite index set $\cI$ and associate to every $i \in \cI$ a non-zero vector $h_i \in \R^d$ and a weight $\eta_i > 0$.
    Let $f \colon \R \rightarrow (-\infty, +\infty]$ be a convex function that is proper, closed, and essentially smooth. %
    Assume also that
    $\relint(\Delta_d) \cap \brb{ \cap_{i \in \cI} \relint(\{ x \in \R^d \colon h_i^\top x \in \dom(f)\}) } \neq \emptyset$.
    Fix $y, z \in \R^d$ and let
    \[
        p^* \in \argmin_{p \in \Delta_d} y^\top p + \sum_{i \in \cI} \frac{1}{\eta_i}f(h_i^\top p) \,.
    \]
    Then, 
    \begin{multline*}
        (y+z)^\top p^* + \sum_{i \in \cI} \frac{1}{\eta_i} f(h_i^\top p^*) - \min_{p \in \Delta_d} \bbcb{ (y+z)^\top p + \sum_{i \in \cI} \frac{1}{\eta_i}f(h_i^\top p) } \\\leq \inf_{c,\{\alpha_i\}_{i \in \cI} \colon \sum_{i \in \cI} \alpha_i h_i = z - c \ones  }\sum_{i \in \cI} \frac{1}{\eta_i} D_{f^*}\brb{f'(h_i^\top p^*) - \eta_i \alpha_i \big\|f'(h_i^\top p^*)} \,.
    \end{multline*}
\end{proposition}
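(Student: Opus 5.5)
The plan is to bound the stability gap by evaluating the dual problem of \Cref{lem:strong-duality-sum-one-d} at a well-chosen feasible point. First apply \Cref{lem:strong-duality-sum-one-d} with $f_i = f/\eta_i$ to the problem with linear term $y$. Note that $f_i^*(\alpha) = \frac{1}{\eta_i} f^*(\eta_i \alpha)$ and $f_i' = f'/\eta_i$. Since $f$ is essentially smooth, the lemma gives a unique dual maximizer $u^*$ with
\[
\sum_{i \in \cI} \tfrac{1}{\eta_i} f'(h_i^\top p^*)\, h_i = u^* - y \,.
\]
It also gives
\[
y^\top p^* + \sum_i \tfrac{1}{\eta_i} f(h_i^\top p^*) = -\sum_i \tfrac{1}{\eta_i} f^*\bigl(f'(h_i^\top p^*)\bigr) + \min_i u^*(i) \,.
\]
This identity is equivalent to Fenchel--Young equality: $f(h_i^\top p^*) + f^*(f'(h_i^\top p^*)) = f'(h_i^\top p^*)\, h_i^\top p^*$. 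Combined with $(u^*)^\top p^* = \min_i u^*(i)$ (complementary slackness), this is a relation I will reuse.

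Second, lower bound the minimum of the perturbed problem (linear term $y+z$) by weak duality. Apply the first statement of \Cref{lem:strong-duality-sum-one-d} with $z$ replaced by $y+z$; the domain condition is the same. Fix any $c$ and $\{\alpha_i\}$ with $\sum_i \alpha_i h_i = z - c\ones$. Take the dual candidate $u = u^* - c\ones$ and coefficients $\beta_i = \frac{1}{\eta_i}f'(h_i^\top p^*) - \alpha_i$. These are feasible because
\[
\sum_i \beta_i h_i = u^* - y - (z - c\ones) = u - (y+z) \,.
\]
Hence the minimum is at least
\[
-\sum_i \tfrac{1}{\eta_i} f^*\bigl(f'(h_i^\top p^*) - \eta_i\alpha_i\bigr) + \min_i u^*(i) - c \,.
\]

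Third, compute the difference. The left-hand quantity equals
\[
(y+z)^\top p^* + \sum_i \tfrac{1}{\eta_i} f(h_i^\top p^*) = -\sum_i \tfrac{1}{\eta_i} f^*\bigl(f'(h_i^\top p^*)\bigr) + \min_i u^*(i) + z^\top p^* \,.
\]
Because $p^* \in \Delta_d$, we have $z^\top p^* = \sum_i \alpha_i h_i^\top p^* + c$. Subtracting the lower bound, the $\min_i u^*(i)$ terms cancel and the $c$ terms cancel. What remains, for each $i$, is
\[
\tfrac{1}{\eta_i}\Bigl[f^*\bigl(f'(h_i^\top p^*) - \eta_i\alpha_i\bigr) - f^*\bigl(f'(h_i^\top p^*)\bigr) + \eta_i \alpha_i\, h_i^\top p^*\Bigr] \,.
\]
Using $(f^*)'(f'(x)) = x$ at $x = h_i^\top p^*$, which lies in $\interior(\dom f)$, this is exactly
\[
\tfrac{1}{\eta_i} D_{f^*}\bigl(f'(h_i^\top p^*) - \eta_i\alpha_i \,\big\|\, f'(h_i^\top p^*)\bigr) \,.
\]
Taking the infimum over feasible $(c,\{\alpha_i\})$ finishes the proof.

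The main obstacle is justifying $(f^*)'(f'(x)) = x$, i.e.\ differentiability of $f^*$ at $f'(x)$, since $f$ is only assumed essentially smooth and not strictly convex. I would first try to avoid derivatives of $f^*$ altogether and use only Fenchel--Young equality, $f^*(f'(x)) = x f'(x) - f(x)$. Then I would interpret $D_{f^*}$ with the subgradient $x \in \partial f^*(f'(x))$, which holds by Theorem 23.5 in \citep{rockafellar1970}. Failing that, I would assume $f$ is Legendre, as in all applications, so that $f^*$ is differentiable on the interior of its domain.
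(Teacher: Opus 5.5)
Your argument is the paper's proof: weak duality from \Cref{lem:strong-duality-sum-one-d} for the perturbed problem, evaluated at the dual point given by the optimality condition $\sum_i \frac{1}{\eta_i} f'(h_i^\top p^*)\, h_i = u^* - y$, followed by Fenchel--Young and $(f^*)'(f'(x)) = x$ to produce the Bregman terms. The only bookkeeping difference is how $c$ is handled. The paper puts $c$ into the primal, replacing $y+z$ by $y+z-c\ones$ (valid since $\langle c\ones, p^* - p\rangle = 0$), and keeps the dual point $u^*$. You shift the dual variable instead, and that step has a sign error.

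With $\beta_i = \frac{1}{\eta_i}f'(h_i^\top p^*) - \alpha_i$ you get $\sum_i \beta_i h_i = u^* - y - z + c\ones$. This equals $u - (y+z)$ for $u = u^* + c\ones$, not $u = u^* - c\ones$. As written, your dual point is infeasible. Your lower bound, which contains $\min_i u^*(i) - c$, would also leave $+2c$ after the subtraction in your third step rather than cancelling. With $u = u^* + c\ones$ you have $\min_i u(i) = \min_i u^*(i) + c$, and your third step then goes through verbatim.

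The obstacle you flag is real, and the paper is no more careful about it. It cites Theorem 26.5 of \citet{rockafellar1970}, which gives $\nabla f^* = (\nabla f)^{-1}$ only for $f$ of Legendre type. The proposition, however, assumes only essential smoothness, and the paper's definition of $D_{f^*}$ requires $f^*$ to be differentiable.

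Your subgradient reading is the right fix. Since $x = h_i^\top p^*$ lies in the interior of the domain of $f$, we have $f'(x) \in \partial f(x)$, and Theorem 23.5 gives $x \in \partial f^*(f'(x))$. The expression you obtain is then the Bregman divergence of $f^*$ formed with this subgradient. It coincides with $D_{f^*}$ whenever $f^*$ is differentiable at $f'(x)$. This holds in particular when $f$ is Legendre, which is the only case used downstream, in \Cref{lem:simple-variance-generic}.
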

\begin{proof}
    Fix any $p \in \Delta_d$ and constant $c \in \R$. Since $\inprod{c \ones, p^* - p} = 0$, we get that
    \begin{align*}
        &(y+z)^\top p^* + \sum_{i \in \cI} \frac{1}{\eta_i}f(h_i^\top p^*) - (y+z)^\top p - \sum_{i \in \cI} \frac{1}{\eta_i}f(h_i^\top p) \\
        &\hspace{8em}= (y+z)^\top (p^*-p) + \sum_{i \in \cI} \frac{1}{\eta_i}f(h_i^\top p^*)  - \sum_{i \in \cI} \frac{1}{\eta_i}f(h_i^\top p) \\
        &\hspace{8em}= (y+z-c\ones)^\top (p^*-p) + \sum_{i \in \cI} \frac{1}{\eta_i}f(h_i^\top p^*)  - \sum_{i \in \cI} \frac{1}{\eta_i}f(h_i^\top p) \,.
    \end{align*}
    It can be readily seen that $f_i \coloneqq (1/\eta_i)f$ satisfies the same properties of $f$ that are listed in the statement of the proposition.
    Hence, \Cref{lem:strong-duality-sum-one-d} gives that
    \begin{align*}
        y^\top p^* + \sum_{i \in \cI} \frac{1}{\eta_i}f(h_i^\top p^*) &= - \sum_{i \in \cI} f^*_i(f'_i(h_i^\top p^*)) + \minco{u_{p^*}}{i}\\
        &= - \sum_{i \in \cI} \frac{1}{\eta_i} f^*(\eta_i f'_i(h_i^\top p^*)) + \minco{u_{p^*}}{i}\\
        &= - \sum_{i \in \cI} \frac{1}{\eta_i} f^*( f'(h_i^\top p^*)) + \minco{u_{p^*}}{i}
    \end{align*}
    for some $u_{p^*} \in \R^d$, which satisfies that $\sum_{i \in \cI} (1/\eta_i) f'(h_i^\top p^*) h_i = u_{p^*} -y $.
    \Cref{lem:strong-duality-sum-one-d} also implies that $p^* \in \interior(\cap_{i \in \cI} \{ x \in \R^d \colon h_i^\top x \in \dom(f)\})$, thus, as a consequence, $h_i^\top p^* \in \interior(\dom(f))$ for every $i \in \cI$.
    At the same time, another implication of \Cref{lem:strong-duality-sum-one-d} is that 
    \begin{align*}
        &- (y+z-c\ones)^\top p  - \sum_{i \in \cI} \frac{1}{\eta_i}f(h_i^\top p) \\
        &\hspace{6em} \leq \min_{u \in \R^d} \inf_{\sum_{i \in \cI} \alpha_i h_i = u -z - y + c \ones}   \sum_{i \in \cI}  f_i^*(\alpha_i) - \minco{u}{i} \\
        &\hspace{6em}\leq - \minco{u_{p^*}}{i} +
        \inf_{\sum_{i \in \cI} \alpha_i h_i = u_{p^*} -z - y + c \ones} \sum_{i \in \cI}  f_i^*(\alpha_i) \\
        &\hspace{6em}= - \minco{u_{p^*}}{i} +
        \inf_{\sum_{i \in \cI} \alpha_i h_i = u_{p^*} -z - y + c \ones} \sum_{i \in \cI} \frac{1}{\eta_i}  f^*(\eta_i \alpha_i) \\
        &\hspace{6em}= - \minco{u_{p^*}}{i} +
        \inf_{\sum_{i \in \cI} \alpha_i h_i = -z + c \ones} \sum_{i \in \cI} \frac{1}{\eta_i}  f^*(f'(h_i^\top p^*) + \eta_i \alpha_i) \\
        &\hspace{6em}= - \minco{u_{p^*}}{i} +
        \inf_{\sum_{i \in \cI} \alpha_i h_i = z - c \ones} \sum_{i \in \cI} \frac{1}{\eta_i}  f^*(f'(h_i^\top p^*) - \eta_i \alpha_i) \,,
    \end{align*}
    where the third equality uses that $\sum_{i \in \cI} (1/\eta_i) f'(h_i^\top p^*) h_i = u_{p^*} -y $.

    Combining the results derived thus far we get that
    \begin{align*}
        &(y+z)^\top p^* + \sum_{i \in \cI} \frac{1}{\eta_i}f(h_i^\top p^*) - (y+z)^\top p - \sum_{i \in \cI} \frac{1}{\eta_i}f(h_i^\top p) \\
        &\:\leq (z - c\ones)^\top p^* + \inf_{\sum_{i \in \cI} \alpha_i h_i = z - c \ones} \sum_{i \in \cI} \frac{1}{\eta_i}  f^*(f'(h_i^\top p^*) - \eta_i \alpha_i) - \frac{1}{\eta_i} f^*( f'(h_i^\top p^*)) \\
        &\: = \inf_{\sum_{i \in \cI} \alpha_i h_i = z - c \ones}  (z - c\ones)^\top p^* + \sum_{i \in \cI} \frac{1}{\eta_i}  f^*(f'(h_i^\top p^*) - \eta_i \alpha_i) - \frac{1}{\eta_i} f^*( f'(h_i^\top p^*)) \\
        &\: = \inf_{\sum_{i \in \cI} \alpha_i h_i = z - c \ones}  \sum_{i \in \cI} \alpha_i h_i^\top p^* + \sum_{i \in \cI} \frac{1}{\eta_i}  f^*(f'(h_i^\top p^*) - \eta_i \alpha_i) - \frac{1}{\eta_i} f^*( f'(h_i^\top p^*)) \\
        &\: = \inf_{\sum_{i \in \cI} \alpha_i h_i = z - c \ones}   \sum_{i \in \cI} \frac{1}{\eta_i} \Brb{ \eta_i \alpha_i h_i^\top p^* +  f^*(f'(h_i^\top p^*) - \eta_i \alpha_i) -  f^*( f'(h_i^\top p^*)) }\\
        &\: = \inf_{\sum_{i \in \cI} \alpha_i h_i = z - c \ones}   \sum_{i \in \cI} \frac{1}{\eta_i} \Brb{ \eta_i \alpha_i {f^{*}}'(f'(h_i^\top p^*)) +  f^*(f'(h_i^\top p^*) - \eta_i \alpha_i) -  f^*( f'(h_i^\top p^*)) }\\
        &\: = \inf_{\sum_{i \in \cI} \alpha_i h_i = z - c \ones}   \sum_{i \in \cI} \frac{1}{\eta_i} D_{f^*}\brb{f'(h_i^\top p^*) - \eta_i \alpha_i \big\|f'(h_i^\top p^*)} \,,
    \end{align*}
    where the fourth equality follows from
    Theorem 26.5 in \citep{rockafellar1970} and the fact that $h_i^\top p^* \in \interior(\dom(f))$ for every $i \in \cI$.
    The sought result now follows after taking the infimum over $c$ and the minimum over $p$.
\end{proof}

\end{document}

%% file: 1_Introduction.tex
Consider the following recommendation problem:
An ad placement platform has a catalogue of products\textemdash let's say cars for concreteness\textemdash and is called to make product recommendations to a stream of users connecting to the system.
This catalogue may have an innate structure:
for example, cars may be classified
according to brand (Ford, Chevrolet, Toyota, Peugeot,\dots),
type (SUVs, compacts, minivans,\dots),
propulsion (gas, hybrid, or electric),
and/or any other relevant trait.
In turn, this categorization across different traits (brand, type, propulsion, \dots) may have a commensurate effect on the value of each product:
thus, a user looking to buy a large family car might have a relatively high value for all mid-size cars in the catalogue, a somewhat lower value for SUVs, and a far lower value for compacts;
and, depending on the user's criteria on the catalogue's other traits, this could further constrain the value range of each combination of traits.

Seen as a repeated decision problem, the learner here seeks to minimize their regret in the presence of a large number of distinct alternatives, some of which may be more ``similar'' than others.
These similarities are defined by means of a family of traits which group together alternatives that share a given attribute (or more).
More precisely, each trait gives rise to a partition of the set of possible choices, and each partition (or combination thereof) might greatly constrain the range of rewards that can be observed within:
for instance, %
the value of two hybrid SUVs of the same make and model but different year would be highly correlated;
by contrast, the rewards associated to a hybrid compact and a hybrid SUV of different make could diverge considerably.
A natural goal here is to determine the degree to which such innate similarities within the set of alternatives may lead to improved regret guarantees, and how to achieve them.

A natural way to operationalize such a structure is to encode it as a hierarchy, \ie a tree-structured notion of similarity over the alternatives.
This presupposes fixing a priority ranking over the traits; e.g., 1) brand, 2) type, 3) propulsion, etc.
Then, the traits from the first up to, say, the $n$-th collectively partition the alternatives, with each set of this partition corresponding to a possible combination of domain values that this group of traits can take.
This partition in turn refines the one induced by the traits from $1$ up to $n-1$.
Returning to our example, the combination of ``brand'' and ``type'' partitions the set of alternatives (here, cars) into ``Ford sedans'', ``Ford SUVs'', ``Toyota sedans'', etc; which is a refinement of the partition induced by ``brand'' alone: ``Ford'', ``Toyota'', etc.
Ultimately, one obtains a nested sequence of partitions of the set of alternatives, which can be represented as a tree whose root is the entire set of alternatives and whose leaves correspond to individual alternatives.
See \Cref{app:similarities} for a more formal exposition of this process.

Crucially, the difference in outcome between two alternatives (in terms of gains or losses) can now be simply bounded as a function of the height of their lowest common ancestor in the tree; the lower it is, the more traits they agree on.
Motivated by this example, and taking a more abstract point of view, we study in this work a multi-armed bandit problem where the actions are the leaves of a given tree, and where the losses of any two actions can diverge by no more than a given function of the height of their most recent common ancestor.
The main question we seek to answer is: \emph{when} can these similarity constraints translate into strictly improved regret guarantees, and \emph{what feedback is needed} to actually exploit them?

The first point we establish is that, perhaps surprisingly, exploiting this structure is not possible in general under standard 1-point bandit feedback.
This is in contrast to parametric bandit models like linear bandits \citep{bubeck2012}, where one can improve upon the minimax regret rate of unstructured bandits.
We prove this impossibility result by building upon the fact that one cannot exploit the effective range of the losses in (adversarial) multi-armed bandits \citep{gerchinovitz2016refined}; that is, the maximum possible difference between the losses of two actions.

It is no surprise then that the closely related \emph{nested bandit} model of \citet{martin2022nested} assumes access to more informative feedback.
In that work, they also consider a tree-based representation with leaves as actions.
They assume that the loss of an action admits a nested decomposition along the path to the root of the tree, with the expectation that the intrinsic losses are smaller for nodes at lower levels as they represent less relevant characteristics of the action.
They adopt a semi-bandit feedback model: upon choosing an action, all 
node-level losses
along the path to the root are revealed.
The regret bounds they achieve depend on an \emph{effective number of actions} $K_\mathrm{eff}$ (which is a function of the importance and size of intermediate node sets) rather than on the total number of actions $K$.
However, in many applications, this semi-bandit feedback is difficult to justify; in the car-catalogue example, one typically observes only an overall satisfaction signal for the recommended car, not separate losses attributable to brand, model, color, etc.
As a second limitation, they only focus on adversarial losses and do not provide guarantees in the stochastic setting.

Here, we address both limitations.
We develop a best-of-both-worlds approach that yields guarantees in both stochastic and adversarial regimes depending on notions of effective number of actions, and we substantially weaken the feedback requirements by moving to multi-point protocols\textendash culminating in a minimal two-point feedback model in which only the losses of two carefully coupled actions are observed at each round.
Compared to semi-bandit feedback, it is often more realistic to obtain such multi-point feedback\textendash for instance from several users making different choices. Likewise, in routing and path-finding applications, one may send several drivers along different routes and observe their corresponding total travel times.
Importantly, our feedback model does not provide free exploration; the losses of all chosen actions are accounted for in the regret.

In the final part of this work, we show that, beyond the tree-structured model, multi-point feedback can also be advantageous in the more familiar Lipschitz bandits problem \citep{kleinberg2008multi}.
In particular, under two-point feedback, we obtain new regret guarantees that, in lower dimensions, are significantly better than what can be achieved under one-point feedback.
These new guarantees are obtained by applying our techniques to a hierarchical discretization of the action space.

We now summarize our contributions.

\subsection{Contributions}

Our contributions can be summarized as showing that action similarities can be leveraged, under a relatively weak feedback model, to obtain  strictly sharper best-of-both-worlds bounds compared to standard bandit guarantees.

\paragraph{(1) A lower bound for one-point bandit feedback.}
We prove that one-point (bandit) feedback cannot, in general, exploit tree-induced similarity constraints: for any such similarity structure, there always exist loss sequences that force $\Omega(\sqrt{KT})$ regret. This motivates considering feedback models that reveal information from at least two actions per round.

\paragraph{(2) Best-of-both-worlds guarantees under tree-induced similarities.}
We adopt a follow-the-regularized-leader algorithmic template with a tree-aware regularizer that achieves logarithmic-type guarantees in stochastic regimes and $\sqrt{T}$-type guarantees in adversarial regimes.
In both cases, the similarity structure yields improvements over structure-agnostic baselines by replacing the ambient dependence on $K$ with an \emph{effective} number of actions $K_{\mathrm{eff}}$ (and, in the stochastic case, an effective gap-dependent complexity), which can be dramatically smaller than $K$ when losses vary smoothly across the action set.

\paragraph{(3) Multi-point feedback relaxations, up to a minimal two-point protocol.}
We consider multi-point bandit protocols ($m \ge 2$) ranging from an $(L+1)$-point scheme (where $L$ is the depth of the tree) aligned with the hierarchy down to a \emph{minimal} two-point feedback model in which only the losses of two carefully coupled actions are observed per round. We show that we can achieve guarantees comparable to those that can be obtained under the semi-bandit model of \citep{martin2022nested}.

\paragraph{(4) Application to Lipschitz bandits.}
As an application, we show that under two-point feedback, our techniques can yield interesting results in the Lipschitz bandits problem, where we show that it is possible to achieve $\sqrt{T}$ regret when $d \leq 2$.

\subsection{Related Work}

Multi-armed bandits are a standard framework for sequential learning under uncertainty and have been extensively studied. We refer to \cite{lattimore2020bandit} for a comprehensive overview and focus here on the most relevant works for our setting.

\paragraph{Exploiting structure in multi-armed bandits.}
 Classical bandit models typically assume a finite set of independent arms. However, many modern applications involve structured action sets in which arms are related through geometry, correlations, or combinatorial constraints. This has motivated a substantial body of work on bandits with structured action spaces.
Structured bandits extend the basic multi-armed bandit framework to large or infinite arm sets by imposing known relationships on the reward function across arms, which can be exploited to improve learning performance. Prominent examples include linear bandits \citep{abbasi2011improved,agrawal2013thompson,ito2020tight} and combinatorial bandits \citep{wen2015efficient,combes2015combinatorial}. A unifying treatment of several such structures in the stochastic setting was proposed by \cite{combes2017minimal}. Another important class is given by Lipschitz or Hölder bandits, where arms lie in a metric (possibly continuous) space and regret bounds depend on the covering or packing dimension of this space \citep{kleinberg2008multi,slivkins2011multi,kleinberg2019bandits,podimata2021adaptive}.

In contrast to these parametric or metric-based models, we consider a setting where the similarity structure of the action space is encoded by a tree. This representation is motivated both by Lipschitz bandits (see Sec.~\ref{sec:build-tree}) and by applications involving catalogues of objects that share hierarchical attributes (see Sec.~\ref{sec:preliminaries}). Such implicit tree structures have previously been studied by \cite{slivkins2011multi}, who refer to them as taxonomies of arms, in the stochastic setting, and by \cite{martin2022nested} for adversarial bandits with semi-bandit feedback.
Another relevant line of work is hierarchical stochastic bandits (e.g., \citealt{hong2022deep,hong2022,aouali23,nguyen2025}), where correlations between rewards are imposed via hierarchical Bayesian models.
\cite{hong2022deep}, in particular, consider a tree-structured action set where nearby actions can have strongly correlated rewards.
When the structure is favorable, they obtain better regret bounds compared to structure-agnostic algorithms.
However, they only consider a Bayesian setting and do not provide $\ln T$ bounds.

Finally, structural assumptions cannot always be fully exploited under standard bandit feedback. In particular, \cite{gerchinovitz2016refined} show that adaptation to the intrinsic loss range is impossible. We extend this limitation to our tree-structured setting in Section~\ref{sec:lowerbound}, motivating our multi-point feedback model.

\paragraph{Multi-point bandit feedback.}
Several existing works have shown that improved regret guarantees can be obtained when the learner observes the loss at additional points at each round. Such improvements date back to \cite{agarwal2010optimal}, which improves the regret from $O(T^{3/4})$ to $O(\sqrt{T})$ compared to the original one-point feedback gradient descent method of \citep{flaxman2004online, kleinberg2004} for bandit convex optimization. Subsequent works \citep{duchi2015optimal, shamir2017optimal, lu2024adaptive} similarly demonstrate the benefits of two-point feedback for reducing the variance of gradient estimates or compete with non-stationary bandit environments. However, these approaches rely on gradient estimation techniques and assume convex losses, which is not the case in our setting. In the multi-armed bandit literature, observing an additional point has been studied by \cite{sani2014exploiting} to adapt to ``easy data'' and by \cite{thune2018adaptation} to adapt to the effective range of the losses, thereby circumventing the impossibility result of \cite{gerchinovitz2016refined} for one-point feedback.
Additionally, \cite{degenne2018} study a stochastic bandit problem with occasional (free) additional observations, which allows obtaining constant regret bounds.
In our case, we show that two-point feedback allows us to exploit the tree structure of the action set. 
However, unlike in \citep{sani2014exploiting}, \citep{thune2018adaptation}, and \citep{degenne2018}, this additional observation does not constitute free exploration; rather, it is explicitly accounted for in our regret definition, which is better motivated by practical considerations.

%% file: 2_Preliminaries.tex
\paragraph{A Tree-Based Action Set}
We consider an action set $\cA \coloneqq \{1,\dots,K\} $.
We will represent $\cA$ as the set of leaves of a given, fixed rooted tree $\cT$ of depth $L$.
We summarize now some tree-related notation.
For each level $j\in\{0,1,\dots,L\}$, let $V_j$ be the set of nodes at depth $j$, with
$V_0=\{v_0\}$ (the root) and $V_L=\cA$.
We define
$\cV \coloneqq \cup_{j=1}^L V_j$ and
$\overbar{\cV} \coloneqq \cup_{j=0}^L V_j$.
We use $\prtsym$ to denote the parenthood relation: $\prtof{v}{w}$ means that $w$ is a child of $v$.
This can only happen if $v\in V_{j-1}$ and $w\in V_j$ for some $j\in[L]$.
We assume that every node has at least one child until level $L$:
for any $j\in\{0,\dots,L-1\}$ and $v\in V_j$, there exists at least one $w\in V_{j+1}$ such that $\prtof{v}{w}$.
In particular, $\prtof{v_0}{w}$ holds for every $w\in V_1$.

Let $\ancsym$ denote the reflexive and transitive closure of $\prtsym$ (ancestry relation),
so that $\ancof{v}{w}$ means that $v$ is an ancestor of $w$ (and every node is its own ancestor). Additionally, the function $\lvlfun \colon \overbar{\cV} \rightarrow [L]\cup\{0\}$ yields for any $v \in \overbar{\cV}$ the unique level $j$ such that $v \in V_j$.
Moreover, for every $j \in [L] \cup \{0\}$, the function $\ancfun_j \colon \bigcup_{l \geq j}\cV_l \rightarrow \cV_j$ maps every node at level $j$ or lower to its unique ancestor at level $j$. 
The function $\prtfun \colon \cV \rightarrow \overbar{\cV}$ maps a node $v$ to its unique parent $\prt{v}$.
For each $j\in[L]\cup\{0\}$, we define the equivalence relation $\comanc_j$ on $\bigcup_{l \ge j}V_l$ by
\[
v \comanc_j w \quad \Longleftrightarrow \quad \anc{v}{j} = \anc{w}{j}.
\]
When restricted to leaves, $a\comanc_j b$ means that $a$ and $b$ lie in the same subtree rooted at $\anc{a}{j}$. We encode the hierarchical proximity between actions through their divergence level, using a \emph{similarity} function $\similarity:\cA \times \cA \rightarrow \{1, \dots, L+1\}$ defined as
\[
\similarity(a,b) \coloneqq 1 + \max\{ j\in\{0,\dots,L\} \mid a \comanc_j b \},
\qquad a,b\in\cA.
\]
Equivalently, for two actions $a, b \in \cA$, $\similarity(a,b)$ is the first level at which $a$ and $b$ diverge ($\similarity(a,a)=L+1$), so the larger $s(a,b)$, the more similar $a$ and $b$ are.
We identify functions $x:\cA\to\R$ with vectors in $\R^K$ through an arbitrary (fixed) bijection between $\cA$ and $[K]$.
The same applies to probability distributions over $\cA$, which we view as vectors in $\Delta_K$, the probability simplex in $\R^K$.
For any vector $x\in\R^K$ and any node $v\in\overbar{\cV}$, we define the subtree aggregate
\[
x[v] \coloneqq \sum_{a\in \cA:\ \ancof{v}{a}} x(a),
\]
\ie the mass (or value) of $x$ restricted to the subtree rooted at $v$.

\paragraph{Tree-compatible loss model}
This work is framed under a sequential online bandit problem (described in more detail later) in which the learner faces a sequence of loss functions $(y_t)_{t\ge 1}$ with $y_t:\cA\to \R$ generated by the environment.
We will assume that the loss functions respect the similarity between actions in the following sense: 
we assume there exist parameters $(\scale_j)_{j\in[L+1]}$ (with $\sigma_{L+1}=0$) such that for all rounds $t$ and all $a,b\in\cA$, \vspace*{-5pt}
\begin{equation}
\label{eq:smoothness condition}
\abs{y_t(a)-y_t(b)} \leq \scale_{s(a,b)}.
\end{equation} 
That is, pairs sharing a deeper common ancestor have more similar losses.
We posit, naturally, that $\scale_1 \ge \scale_2 \ge \dots \ge \scale_L$.
Additionally, for the similarity structure to provide an advantage to the learner, these values need to decrease sufficiently fast compared to the growth of the number of nodes in deeper levels of the tree.
The regret guarantees provided in the coming sections are naturally sensitive to this aspect.
Their precise dependence on the tree structure can serve as a guiding metric when constructing the tree starting from a more general similarity model.
In this work, %
we will mostly focus on the regret minimization aspect, taking the tree structure for granted.
The application of \Cref{sec:build-tree} is the exception to this, as there, we also undertake the task of designing the tree.

\paragraph{Online bandit game and multi-point feedback}
We now describe the sequential protocol and the regret notion used throughout the paper. 
Fix an integer $m$ between $1$ and $L+1$.
At every round $t$, the environment secretly draws a tree-compatible loss function $y_t$, and simultaneously, the learner plays $m\ge 1$ actions
$A_{t,1},\dots,A_{t, m}\in\cA$, observes $y_t(A_{t,1}),\dots,y_t(A_{t,m})$, and incurs their average $\frac{1}{m}\sum_{i=1}^m y_t(A_{t,i})$. 
The regret at horizon $T$ is 
\[
\reg{T}{m}
:=
\E\left[\sum_{t=1}^T \frac{1}{m}\sum_{i=1}^m y_t\!\big(A_{t, i}\big)\right]
-
\min_{a\in\cA}\ \E\left[\sum_{t=1}^T y_t(a)\right] \,,
\]
which the learner aims to minimize.
Note that the case $m=1$ coincides with the standard (one-point) bandit setting.
The case of $m \geq 2$ is considered in the light of the impossibility result presented in \Cref{sec:lowerbound}.
We assume that $y_t$ is independent of the learner's actions at round $t$ or later, conditioned on the random events up to the end of round $t-1$.
Besides this, we make no further assumption about the loss generating mechanism.
However, in \Cref{sec:semi-bandit-feedback,sec:bandit-feedback} we achieve improved guarantees when the loss sequence is benign in the following sense.
\begin{condition}[Generalized Stochastic Setting]\label{condition:stochastic}
    There exists a vector $\Delta \in \R^K_+$ satisfying $|\{a\in \cA \colon \Delta(a)=0\}| \leq 1$ such that $\reg{T}{m} \geq \E \sum_{t=1}^T \frac{1}{m}\sum_{i=1}^m\Delta(A_{t,i})$ holds for any algorithm.
\end{condition}
Conditions of this form are common in the literature (see e.g., \cite{zimmert2021tsallis,ito-bobw-graphs,bobw-blackbox}), in particular, this subsumes the standard stochastic setting where the losses are drawn from a fixed distribution at each round.

%% file: 5_Alt_LowerBound.tex
In the absence of any structure, the minimax regret of the bandit problem is $\Theta(\sqrt{KT})$ \citep{auer2002finite}.
We now show that also for our problem, an improvement of this rate is \emph{information-theoretically impossible} under one-point bandit feedback: 
no matter how small the $\sigma$ values are,
the worst-case regret remains $\Omega(\sqrt{KT})$.

Fix a depth-$L$ tree with leaf set $\cA$ and scale parameters $(\scale_j)_{j\in[L]}$. 
Define the class 
\[
\cY^T_\scale
\coloneqq
\Big\{
(y_t)_{t=1}^T :
\forall t,\ y_t:\cA\to[0,1],\ \forall a,b\in\cA,\ \abs{y_t(a)-y_t(b)} \le \scale_{\similarity(a,b)}
\Big\}.
\]
Recalling that $\sigma_L = \min_{j \in [L]} \sigma_j$, we now formally state the lower bound.

\begin{restatable}[One-point lower bound under tree-induced similarity]{reprop}{proplowerbound} \label{prop:one-point-tree-lb}
    Let $\rho\coloneqq \scale_L$. For $K>2$, $T>32(K-1)\log(14)$, and $\rho > 0.22\sqrt{(K-1)/T}$, any randomized one-point bandit algorithm satisfies
    \[
    \sup_{(y_t)_{t=1}^T \in \cY^T_\scale}\ \E\!\left[\reg{T}{1}(y_{1:T})\right]
    \;>\; \frac{1}{504}\sqrt{T(K-1)}.
    \]
    Equivalently, for any such algorithm there exists a tree-compatible loss sequence $(y_t)_{t=1}^T\in\cY^T_\scale$ inducing $\Omega(\sqrt{KT})$ expected regret.
\end{restatable}

\paragraph{Proof sketch.}
The argument is a direct reduction to the range-impossibility result of \citet[Corollary~4]{gerchinovitz2016refined}.
Let
\[
C_\rho \coloneqq \Big\{ x\in[0,1]^K : \max_{a,b\in\cA}\abs{x(a)-x(b)} \le \rho \Big\}
\qquad \text{and} \qquad
C_\rho^T \coloneqq \Big\{ (y_t)_{t=1}^T : \forall t,\ y_t\in C_\rho \Big\}.
\]
Any $x\in C_{\rho}$ satisfies $\abs{x(a)-x(b)} \le \rho = \scale_L \le \scale_{\similarity(a,b)}$, hence $C_{\rho}^T \subseteq \cY^T_\scale$.
Applying \citep[Corollary~4]{gerchinovitz2016refined} with $\rho=\scale_L$ yields the claimed $\Omega(\sqrt{KT})$ bound over $\cY^T_\scale$. $\hfill \blacksquare$

\paragraph{Intuition and impact on the sequel.}
A convenient way to interpret $C_\rho$ is via a ``baseline + perturbation'' representation:
each $x\in C_\rho$ can be written as $x(a)=b+u(a)$ where $b=\min_{a}x(a)\in[0,1-\rho]$ and $u(a)=x(a)-b\in[0,\rho]$.
With one-point feedback, the learner observes only a single absolute value $y_t(A_t)=b_t+u_t(A_t)$ per round and cannot disentangle the unknown baseline $b_t$
from the informative part $u_t(A_t)$; an adversary may drift $b_t$ over time without violating the range constraint, making the observations informationally ambiguous even when $\rho$ is tiny.
This motivates our focus on multi-point feedback: contrasts such as $y_t(A_{t, 1})-y_t(A_{t, 2})$ cancel the common baseline and expose the effective range, making the structure of the action set exploitable.

%% file: BuildingTheTree.tex
Under the same interaction model as before, suppose we are now given an arbitrary action set $\actions$ and a set $\losses$ of admissible loss functions, whose members map $\actions$ to $\R$.
We can always equip $\actions$ with the following metric:
\begin{equation} \label{def:infty-metric}
    d_{\actions,\infty}(a, b) \coloneqq \sup_{\loss \in \losses} |\loss(a) - \loss(b)| \:\: \forall a,b \in \cA \,. 
\end{equation}
To be precise, this function is a metric only if no two actions behave identically on all loss functions in $\losses$, which is fair to impose.
All members of $\losses$ are obviously $1$-Lipschitz with respect to $d_{\actions,\infty}$.

In the tree-based model we have studied thus far, $\losses$ was taken as the set of tree-compatible loss functions described in \Cref{sec:preliminaries}.
In particular, the induced metric in this case is given by $d_{\actions,\infty}(a, b) = \scale_{\similarity(a,b)}$, where, to recall, $\similarity(a,b) \coloneqq 1 + \max\{ j\in\{0,\dots,L\} \mid a \comanc_j b \}$. %
More than a metric, this function satisfies a strong form of the triangle inequality: $\scale_{\similarity(a,b)} \leq \max\{\scale_{\similarity(a,c)},\scale_{\similarity(c,b)}\}$ for all $a,b,c \in \actions$ since $\scale_1 \ge \dots \ge \scale_L$; hence, it is an \emph{ultrametric}.
Note that the tree structure can be recovered from this distance function if the similarity scales are strictly decreasing.
A similar notion of distance was studied by \cite{slivkins2011multi} in the stochastic setting, and was referred to as \emph{implicit distance}.
Seen from this angle, the algorithmic techniques presented so far are designed to leverage this special form of distance and the implied restriction on the losses via \eqref{def:infty-metric}.

We aim to illustrate here that our results can still be adapted to cases in which $d_{\actions,\infty}$ is a generic metric (or pseudometric).
This is done, naturally, by constructing a tree over the action set and forcing a specific decay schedule for the resulting similarity scales.
This constitutes a ``truncation'' or ``compression'' of the original metric structure, but still leads to interesting results.
We will focus henceforth on the two-point feedback setting.
In fact, the arguments used to prove the one-point feedback lower bound of \Cref{prop:one-point-tree-lb} can be easily adapted to show that it still holds in this more general setting provided that $\actions$ is finite and the conditions of the proposition hold with $\rho = \min_{a,b \in \actions \colon a \neq b} d_{\actions,\infty}(a,b)$.

Towards employing \Cref{alg:2-point-feedback} and its regret bound, we start by constructing a tree whose nodes are members of $\actions$.
Unlike before, the nodes in the last level of this tree need not include the entire set $\actions$ (which we do not require to be finite here).
We specify now two parameters to be used in the sequel: a constant $\treecst > 0$ and a positive integer $\nlvls$ representing the number of levels as before.
We will assume that $\treecst \geq \max_{a,b \in \actions} d_{\actions,\infty}(a,b)$.
For $\lvl \in [\nlvls]$, let $\actions_\lvl \subseteq \actions$ be a subset of actions that form a $\treecst 2^{-\lvl}$ covering with respect to $d_{\actions,\infty}$.\footnote{This means that $\min_{a' \in \actions_\lvl} d_{\actions,\infty}(a,a') \leq \treecst 2^{-\lvl}$ for all $a \in \actions$.}
For convenience, let $\actions_0 \coloneqq \{a_0\}$ with some arbitrary action $a_0$. 
Then, for any $a \in \actions_\lvl$, define its parent as its closest action in $\actions_{\lvl-1}$ with ties broken arbitrarily;
that is, $\prt{a} \in \argmin_{a' \in \actions_{\lvl-1}} d_{\actions,\infty}(a,a')$ with ties broken arbitrarily.
If one action belongs to multiple covers, each of these occurrences is to be regarded as a distinct node.

In this manner, it is possible that certain nodes in intermediate levels end up with no children.
Hence, we define a pruned version of the covering at each level $\lvl$:
\[
    \prunactions_\lvl = \{ a \in \actions_\lvl \mid \exists a' \in \actions_\nlvls \colon a = \anc{a'}{\lvl}  \} \,,
\]
implying that $\prunactions_\lvl \subseteq \actions_\lvl$ for each $\lvl $ and that $\prunactions_\nlvls = \actions_\nlvls$.
Now the sequence $(\prunactions_\lvl)_{\lvl= 0, \dots, \nlvls}$ forms a tree, which we will refer to as $\cT\brb{(\prunactions_\lvl)_{\lvl=0}^{\nlvls}}$, whose leaves are all at level $\nlvls$.
This tree induces a nested sequence of partitions of the set $\actions_\nlvls$, which will be the action set in the eyes of the algorithm.
For any pair of actions $a,b \in \actions_\nlvls$ such that $\similarity(a,b)=\lvl$,\footnote{To recall, this means that the lowest common ancestor of $a$ and $b$ is at level $\lvl-1$.} it holds that
\begin{align*}
     d_{\actions,\infty}(a,b) &\leq  d_{\actions,\infty}(a,\anc{a}{\lvl-1}) +  d_{\actions,\infty}(\anc{b}{\lvl-1},b) \\
    &\leq \sum_{k=\lvl-1}^{\nlvls-1} d_{\actions,\infty}(\anc{a}{k+1},\anc{a}{k}) + \sum_{k=\lvl-1}^{\nlvls-1} d_{\actions,\infty}(\anc{b}{k+1},\anc{b}{k}) \\
    &\le 2 \treecst \sum_{k=\lvl-1}^{\nlvls-1} 2^{-k} \le \treecst  2^{3 - \lvl} %
\end{align*}
where we have used the triangle inequality and the fact that $a \comanc_{\lvl-1} b$.
Hence, via \eqref{def:infty-metric}, setting $\scale_\lvl = \treecst  2^{3 - \lvl}$ is consistent with the assumption in \eqref{eq:smoothness condition}, applied now to the new action set $\actions_\nlvls$ and the tree $\cT\brb{(\prunactions_\lvl)_{\lvl=0}^{\nlvls}}$.
We can now instantiate the regret guarantee of \Cref{alg:2-point-feedback} on this tree structure, yielding the following result (the proof can be found in \Cref{app:lipschitz}).
We will focus here on the adversarial setting.

\begin{restatable}{recor}{corgeneral}\label{cor:corgeneral}
Suppose we run Algorithm~\ref{alg:2-point-feedback} on the tree $\cT\brb{(\prunactions_\lvl)_{\lvl=0}^{\nlvls}}$ with parameters $\psi_t, b_j$ as defined in Theorem~\ref{thm:2-point-feedback}, $\scale_\lvl = \treecst 2^{3 - \lvl} $, and $\delta_j \propto 2^{-2\lvl/3} |\actions_\lvl|^{1/3}$.
Then, assuming that $\min_{\lvl \in [\nlvls]}\delta_\lvl \geq 1/T $, it holds that
\[
        \reg{T}{2} \leq 
        16 \cdot (9 + 4 \sqrt{6}) \sqrt{\keff\brb{(\actions_\lvl)_{\lvl=0}^{\nlvls}} T} + \treecst \, 2^{-\nlvls} T \,,
\]
where 
$
    \keff\brb{(\actions_\lvl)_{\lvl=0}^{\nlvls}} \coloneqq \treecst^2 \lrb{\sum_{\lvl \in [\nlvls]} 2^{-2\lvl/3} {|\actions_\lvl|}^{1/3}}^3 \,.
$
\end{restatable}
The last term is a bound on the approximation error resulting from using $\actions_\nlvls$ as the action set.
This holds because $\actions_\nlvls$ is a $\treecst 2^{-\nlvls}$-cover of $\actions$.
This term can be turned into a small constant by choosing $\nlvls = \floor{\log_2 T}$, though in some cases balancing this term with the first can be beneficial.
When the cardinality of $\actions$ is finite, denoted by $K$ as before, then $\keff\brb{(\actions_\lvl)_{\lvl=0}^{\nlvls}}$ is never larger than $K$ up to small constants, regardless of the choice of $\nlvls$ and the covers $(\actions_\lvl)_{\lvl \in [\nlvls]}$.
Of course, in interesting examples, ones where the covers can be made small at coarser levels, $\keff\brb{(\actions_\lvl)_{\lvl=0}^{\nlvls}}$ can be made significantly smaller than $K$.
Hence, one should strive to ensure that the covers are near-optimal, which can be achieved by a simple greedy allocation strategy.
We elaborate more on this in \Cref{app:tree-near-optimal}, where we also argue that no other tree-building strategy results, via the bound of \Cref{thm:2-point-feedback}, in a significantly smaller $\keff$.

Now, let $\actions \subset \mathbb{R}^d$ be a compact set with diameter $D$ (\wrt to the Euclidean metric), and let $\losses$ be the set of $G$-Lipschitz loss functions from $\actions$ to $\R$.
It holds then that
\[
    d_{\actions,\infty}(a,b) \leq G \norm{a-b} \: \forall a,b \in \actions \,.
\]
In the ordinary interaction model---when the learner selects at each round a single action, observes the associated loss, and is evaluated through the regret $\reg{T}{1}$--- 
the minimax optimal regret rate is $\smash{\Omega(T^{(d+1)/(d+2)})}$, as shown by \cite{kleinberg2008multi}. 
Subsequent works have focused on adapting to the effective (zooming) dimension $d$ of the metric space \citep{slivkins2011multi, kleinberg2019bandits, podimata2021adaptive}, designing more adaptive algorithms \citep{heliou2021zeroth}, or improving robustness to corruption \citep{kang2023robust}. However, all these approaches rely on one-point feedback and therefore suffer from the above minimax rate in the worst case. Under a richer feedback model (``one-sided'' full-information) motivated by second-price auctions, \cite{cesa2017algorithmic} showed that this rate can be improved. 

We show now that under two-point feedback, \Cref{cor:corgeneral} can be used to obtain better regret guarantees compared to the one-point feedback minimax rate. 
We will optimize the depth of the tree depending on the dimension. For each $\lvl = 1,\dots,\nlvls$, let $\epsilon_\lvl = D\sqrt{d}\,2^{-\lvl}$ be a precision level, and let $\actions_\lvl$ be an $\epsilon_\lvl$-cover of $\cA$ (\wrt to the Euclidean metric) with minimal cardinality. 
Using a uniform grid, one can show that $|\actions_\lvl| \le (D\sqrt{d}/\epsilon_\lvl)^d = 2^{\lvl d}$.
Note that $\actions_\lvl$ is a $GD\sqrt{d} 2^{-\lvl}$-cover \wrt $d_{\actions,\infty}$; hence, we can choose $\treecst = GD\sqrt{d}$, which is an upper bound for $\max_{a,b \in \actions} d_{\actions,\infty}(a,b)$ as required.
The following result is then easy to obtain from \Cref{cor:corgeneral} (the proof can be found in \Cref{app:lipschitz}).
\begin{restatable}{recor}{corlipschitznew}\label{cor:lipschitznew}
Let $T \geq 8$. Algorithm~\ref{alg:2-point-feedback} run on the tree $\cT\brb{(\prunactions_\lvl)_{\lvl=0}^{\nlvls}}$ induced by the covers described above with $\nlvls = \floor{\frac{1}{d}\log_2(T) }$ and parameters $\psi_t, b_j$ as defined in Theorem~\ref{thm:2-point-feedback}, $\scale_\lvl = GD\sqrt{d}\,2^{3-\lvl}$ and $\delta_\lvl \propto 2^{\lvl (d-2)/3}$, satisfies the following regret upper bound %
\[
	\reg{T}{2} \lesssim GD \cdot \left\{  
	\begin{array}{ll}
		  \sqrt{T}   & \text{if } d = 1 \\ %
	    (\ln T)^{3/2} \sqrt{T}  & \text{if } d = 2 \\ %
		 \sqrt{d} T^{\frac{d-1}{d}} & \text{if } d >2 %
		\end{array} \right.  \,,
\]
where $\lesssim$ denotes an inequality up to a universal constant.
\end{restatable}

Interestingly, when $d \le 2$, the above result shows that two-point feedback is sufficient to recover (up to logarithmic factors for $d=2$) the optimal $\Omega(\sqrt{T})$ full-information rate. %
This follows from the fact that the effective number of arms induced by the underlying tree structure 
is $ O((\log T)^3)$
for $d\leq 2$, even though the total number of leaves can be as large as $T$.

%% file: App-Similarities.tex
In this appendix, we elaborate more on the motivating example given in the introduction.
In particular, we give more details on how a tree structure can be induced from this general setting.

\subsection{From traits to trees}
\input{Traits}

\subsection{A logical structure for the traits}

In this section, our aim is to elaborate on certain aspects of priority rankings and their compatibility with the innate structure of the family of traits.

\paragraph{Binary relations between traits}

To that end, we begin by considering certain natural binary relations among a family of traits $\traits = \{\trait_{\iTrait}\}_{\iTrait=1}^{\nTraits}$ on $\elems$.
The first such relation is that of \textit{refinement}:
given two traits $\trait, \traitalt \in \traits$, we say that $\trait$ is \emph{finer} than $\traitalt$\textemdash and we write $\trait \refines \traitalt$\textemdash if every element of $\trait$ is a subset of some element of $\traitalt$.
Another natural relation arises when alternatives are categorized according to different traits, in which case some traits might be subordinate to other more ``primitive'', higher-order traits that precede them in any chain of logical reasoning.
For instance, the \model of a car cannot be defined without implicitly specifying the car's \brand:
after all, an Impala is, by definition, a Chevrolet.

To capture this, we define a \emph{logical structure} on a family of traits $\traits$ as a binary relation $\requires$ (read: ``\emph{requires}'' or ``\emph{is a primitive of}'') satisfying the following conditions for all $\trait,\traitalt,\traitaltalt \in \traits$:
\begin{enumerate}
\item
\label[condition]{cond:irreflexivity}
\emph{Irreflexivity:}
$\trait\nrequires\trait$.
\item
\label[condition]{cond:transitivity}
\emph{Transitivity:}
If $\trait \requires \traitalt$ and $\traitalt \requires \traitaltalt$, then $\trait \requires \traitaltalt$.
\item
\label[condition]{cond:consistency}
\emph{Consistency:}
If $\trait \requires \traitalt$, then $\trait \refines \traitalt$.
\end{enumerate}
\smallskip
In short, \cref{cond:irreflexivity} states that a trait $\trait$ is not a primitive of itself,
while \cref{cond:transitivity} requires that primitives of primitives of $\trait$ are themselves primitives of $\trait$;
thus, taken together, \cref{cond:irreflexivity,cond:transitivity} simply say that $\requires$ is a strict partial order on $\traits$.
The last condition (consistency) posits that two alternatives that differ by a primitive of $\trait$ also differ by $\trait$ itself:
for example, since \brand is a primitive of \model, a Toyota and a Chevrolet can't be the same \model.
\footnote{%
This implication does not go the other way round:
in particular, if $\trait$ is finer than $\traitalt$, $\traitalt$ need not be a primitive of $\trait$.
For example, if $\elems = \{\text{a blue Ford}, \text{a yellow Chevy}, \text{a red Chevy}\}$ and $\trait_{\kolor}$ is a partition of $\elems$ by \kolor and $\trait_{\brand}$ is a partition by \brand, the partition $\trait_{\kolor}$ is (strictly) finer than $\trait_{\brand}$, even though \brand is not a primitive of \kolor:
in general, a car's \kolor does not determine its \brand, even though it happens to do so for this \emph{particular} set of cars.
This shows that the prerequisite relation $\requires$ is about the traits themselves as abstract entities, not about the set of alternatives embodying those traits.%
}
For posterity, we will write the set of primitives of $\trait \in \traits$ as
\begin{equation}
\label{eq:prim}
\req(\trait)
	= \setdef{\traitalt \in \traits}{\trait \requires \traitalt},
\end{equation}
For example, if $\traits = \{\brand, \kolor, \model, \type\}$, the implied logical structure readily gives
$\req(\model) = \{\brand, \type\}$ (an Impala is \emph{de facto} a Chevrolet sedan),
and
$\req(\type) = \req(\brand) = \req(\kolor) = \varnothing$ (since none of these traits implicitly requires another).
A family of traits $\traits$ equipped with a logical structure $\requires$ will be called here a \emph{similarity structure} on $\elems$, and we will denote it as $\lattice \equiv \lattice(\traits,\requires)$.

\paragraph{Combining traits.}

We now proceed to detail the procedure of combining two or more traits, leading to clusters like ``Ford compacts'', ``Toyota sedans'', etc.
In this case, a priority ranking gives rise to a nested succession of ``composite'' traits, which can be seen as a \emph{decision tree:}
at each step an additional trait is added to the mix, resulting in finer and finer partitions of $\elems$ until all traits are present.
Formally, define the \emph{combination} (or \emph{logical conjunction}) of two traits $\trait,\traitalt \in \traits$ as
\begin{equation}
\label{eq:conjunction}
\trait \wedge \traitalt
	= \setdef{\class\cap\alt\class}{\class \in \trait, \alt\class \in \traitalt},
\end{equation}
\ie as the coarsest partition of $\elems$ that refines both $\trait$ and $\traitalt$.
Intuitively, this means that $\trait \wedge \traitalt$ groups together all alternatives that combine an instance of $\trait$ with an instance of $\traitalt$.%
\footnote{%
For book-keeping reasons, traits that are obtained this way will be called \emph{composite traits}, to differentiate them from their ``primitive'' counterparts $\trait \in \traits$:
for example, ``Toyota sedan'' is the instance of the composite trait ``\type and \brand'' which describes all cars that are of \brand ``Toyota'' and \type ``sedan'' (the two primitives).%
}
Then, any subset $\subtraits$ of $\traits$ gives rise to a composite trait via the operation 
\begin{equation}
\label{eq:subs2traits}
\txs
\subtraits
	\mapsto \conj(\subtraits) \equiv \bigwedge_{\trait \in \subtraits}\trait.
\end{equation}
Under $\conj$, the empty set $\varnothing$ is mapped to the combination of \emph{no} traits, \ie
the partition of $\elems$ that refines no other partition, namely
\(
\conj(\varnothing)
	= \bigwedge_{\trait \in \varnothing} \trait
	\equiv \one.
\)
Then, including more traits in $\subtraits$ makes the resulting partition $\conj(\subtraits)$ finer and finer, until there are no more traits left to tally.
Hence, the finest partition induced by a family of traits $\traits = \{\trait_{\iTrait}\}_{\iTrait=1}^{\nTraits}$ is
\begin{equation}
\txs
\conj(\traits)
	= \setdef
		{\class_{1} \cap \dotsm \cap \class_{\nTraits}}
		{\class_{1} \in \trait_{1},\dotsc, \class_{\nTraits} \in \traits_{\nTraits}},
\end{equation}
\ie the combination of all primitive traits $\trait \in \traits$.%
\footnote{%
A priori, the partition $\conj(\traits)$ need not coincide with the finest possible partition of $\elems$.
For example, if the set of alternatives contains a yellow 2012 Chevy Impala and a yellow 2016 Impala, these two Impalas cannot be distinguished by either \type, \brand, \kolor, or \model;
one would need to consider a new trait, \yeer, to tell them apart.
That said, if two alternatives are indistinguishable relative to \emph{all} possible traits considered by an agent, we posit that they are, in fact, equivalent for said agent.
By this token, we will be assuming in what follows that the set of primitive traits $\traits = \{\trait_{\iTrait}\}_{\iTrait=1}^{\nTraits}$ provides a \emph{complete description} of $\elems$, \ie
\(
\conj(\traits)
	= \bigwedge_{\trait\in\traits} \trait
	= \zero.
\)
}

All this leads to a more explicit description:
Given a priority ranking ``$\higher$'' on a set of traits $\traits$, we can define a sequence of \emph{nested} partitions:
At the $\iLvl$-th level of the hierarchy we have the combination of $\iLvl$ traits, while a path connecting the root node of $\lattice$ to said node represents the order in which these traits were introduced.
For example, the instances of the composite trait ``\type and \brand'' (\eg ``Ford wagons'' or ``Toyota sedans'') could be reached by first prescribing the car's \type and then its \brand, or the other way around.
Either order reflects a certain priority so, more broadly, a priority ranking corresponds to the order in which traits are progressively combined to form composite ones.
In more precise language, since $\conj(\subtraits) \refines \conj(\alt\subtraits)$ when $\alt\subtraits \subseteq \subtraits$ (see \cref{thm:conjunction} for a formal statement and proof), we conclude that
\begin{equation}
\label{eq:nodes2traits}
\one
	= \conj(\subtraits_{0})
	\refined \dotsm
	\refined \conj(\subtraits_{\iLvl})
	\refined \dotsm
	\refined \conj(\subtraits_{\nTraits})
	= \zero.
\end{equation}
\ie
\begingroup
\itshape
a family of traits $\traits$ on $\elems$ equipped with a priority ranking respecting its logical structure
gives rise to a nested sequence of partitions of $\elems$ which also respects the logical structure of $\traits$.
\endgroup
This reduction of a family of traits to a sequence of nested partitions\textemdash a \emph{tree structure} on $\elems$\textemdash is what forms the core of our reduction to the nested model.

\paragraph{Further properties of combinations of traits}

Recall here that the combination $\bigwedge_{\trait \in \subtraits} \trait$ of a subset of traits $\subtraits \subseteq \traits$ is defined as the coarsest partition of $\elems$ that refines all traits $\trait \in \subtraits$.
Under this definition, combining traits is easily seen to be commutative and associative;
moreover, if we set
\begin{alignat}{3}
\label{eq:identities}
&\one
	= \{\elems\}
	\quad
	&\text{and}
	\quad
&\zero
	= \setdef{\{\elem\}}{\elem \in \elems},
\intertext{we also have}
&\one \wedge \trait
	= \trait
	\quad
	&\text{and}
	\quad
&\zero \wedge \trait
	= \zero.
\end{alignat}
for all $\trait \in \elems$.
Thus, from an algebraic viewpoint, combining traits behaves much like multiplication:
the coarsest partition of $\elems$ acts as an identity element for $\wedge$, while its finest partition acts as an absorbing element.
Finally, it is also easy to check that ``$\conj$'' preserves unions in the sense that
\begin{equation}
\label{eq:union2conjunction}
\conj(\subtraits_{1}\cup\subtraits_{2})
	= \conj(\subtraits_{1}) \wedge \conj(\subtraits_{2})
\end{equation}
for all subsets $\subtraits_{1}, \subtraits_{2}$ of $\traits$.
In other words, combining traits satisfies the following properties for all $\trait, \traitalt, \traitaltalt \in \traits$:
\begin{enumerate}
\item
\emph{Commutativity:}
$\trait \wedge \traitalt = \traitalt \wedge \trait$.
\item
\emph{Associativity:}
$\trait \wedge \traitalt \wedge \traitaltalt = \trait\wedge(\traitalt \wedge \traitaltalt) = (\trait \wedge \traitalt) \wedge \traitaltalt$.
\item
\emph{Idempotency:}
$\trait \wedge \trait = \trait$.
\end{enumerate}

A key feature of the above is the interplay between primitives and conjunctions.
Since a trait refines all its primitives ($\trait \refines \traitalt$ whenever $\trait \requires \traitalt$), we also have $\trait \wedge \traitalt = \trait$ whenever $\trait \requires \traitalt$;
in other words, the conjunction of a trait with any of its primitives returns the original trait unchanged.
This suggests that the set of distinct composite traits that can be obtained from a set of ``primitive'' traits $\traits$ is intimately linked to the prerequisite structure of $\traits$.
To formalize this, fix a logical structure $\requires$ on $\traits = \{\trait_{\iTrait}\}_{\iTrait=1}^{\nTraits}$, and write
\begin{align}
\label{eq:k-span}
\cspan^{\iLvl}(\traits)
	&= \setdef
	{\conj(\{\trait_{\iTrait_{1}}, \dotsm , \trait_{\iTrait_{\iLvl}}\})}
		{\iTrait_{1},\dotsc, \iTrait_{\iLvl} = 1,\dotsc, \nTraits}\\
	&= \setdef
		{\trait_{\iTrait_{1}} \wedge \dotsm \wedge \trait_{\iTrait_{\iLvl}}}
		{\iTrait_{1},\dotsc, \iTrait_{\iLvl} = 1,\dotsc, \nTraits}\notag
\end{align}
for the set of composite traits obtained by taking $\iLvl$-tuple conjunctions of elements of $\traits$ ($\iLvl = 0,\dotsc, \nTraits$).
Subsequently, putting all these traits together, write
\begin{equation}
\label{eq:span}
\cspan
	\equiv \cspan(\traits)
	= \union_{\iLvl=0}^{\nTraits} \cspan^{\iLvl}(\traits)
\end{equation}
for the collection of \emph{all} composite traits generated by the primitive traits of $\traits$.
Of course, as we noted above, it is possible that different combinations of traits of $\traits$ might lead to the \emph{same} composite trait when primitives are involved (since adding primitives of already-included traits does not refine a composite trait).

In principle, the logical structure of $\traits$ might preclude some element of $\cspan$\textemdash i.e., a conjunction of an \emph{arbitrary} group of traits\textemdash from being obtained as the conjunction $\conj(\subtraits)$ of the traits in a node $\subtraits\in \lattice$ of the decision lattice.
In fact, our next result shows that this possibility is never realized:
 the set of composite traits $\cspan \equiv \cspan(\traits)$ obtained from primitive traits is naturally embedded in the associated similarity structure $\lattice \equiv \lattice(\traits,\requires)$.

\begin{theorem}
\label{thm:conjunction}
Let $\traits = \{\trait_{\iTrait}\}_{\iTrait=1}^{\nTraits}$ be a family of traits equipped with a logical structure ``$\requires$''.
Then, with notation as above, we have:
\begin{enumerate}
\addtolength{\itemsep}{\smallskipamount}
\item
\label[part]{part:lattice}
Every element of $\cspan$ is of the form $\conj(\subtraits)$ for some $\subtraits \in \lattice$;
concretely,
\begin{equation}
\label{eq:lattice2span}
\cspan
	= \setdef{\conj(\subtraits)}{\subtraits \in \lattice}
\end{equation}

\item
\label[part]{part:hom-order}
For all $\subtraits, \alt\subtraits \in \lattice$, we have
\begin{equation}
\label{eq:hom-order}
\conj(\subtraits) \refines \conj(\alt\subtraits)
	\quad
	\text{whenever}
	\quad
	\subtraits \supseteq \alt\subtraits.
\end{equation}
\end{enumerate}
In words, the conjunction operator $\conj$ is a surjective, order-preserving map \textendash\ a lattice epimorphism \textendash\ from $(\lattice,\supseteq)$ to $(\cspan,\refines)$.
\end{theorem}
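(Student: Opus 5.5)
We will prove both parts directly from the definitions of refinement, conjunction and the logical structure. We use the description of the similarity structure $\lattice$ from the preceding subsection, and take its nodes to be the subsets $\subtraits \subseteq \traits$ that are closed under primitives, i.e., $\trait \in \subtraits$ and $\trait \requires \traitalt$ imply $\traitalt \in \subtraits$. (If the definition there differs, only the closure step in part (a) needs adjusting.) Throughout, partitions discard empty intersections, so $\trait \wedge \traitalt$ consists of the nonempty sets $\class \cap \alt\class$.

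We first record two elementary facts. \emph{Fact 1:} for any partitions $P, Q$ of $\elems$, $P \wedge Q \refines P$, since every element $\class \cap \alt\class$ is contained in $\class \in P$. \emph{Fact 2:} if $\trait \refines \traitalt$, then $\trait \wedge \traitalt = \trait$. Indeed, each $\class \in \trait$ lies in a unique $\alt\class \in \traitalt$ (uniqueness because $\traitalt$ is a partition). Hence $\class \cap \alt\class'$ equals $\class$ when $\alt\class' = \alt\class$ and is empty otherwise, so the nonempty intersections are exactly the elements of $\trait$. Combined with the consistency axiom, Fact 2 gives $\trait \wedge \traitalt = \trait$ whenever $\trait \requires \traitalt$.

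\emph{Part (b).} Let $\subtraits \supseteq \alt\subtraits$ be nodes of $\lattice$. By the union property \eqref{eq:union2conjunction},
\[
\conj(\subtraits) = \conj(\alt\subtraits) \wedge \conj(\subtraits \setminus \alt\subtraits).
\]
Fact 1 then gives $\conj(\subtraits) \refines \conj(\alt\subtraits)$. The edge case $\alt\subtraits = \varnothing$ holds trivially because every partition refines $\one$.

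\emph{Part (a).} The inclusion $\supseteq$ is immediate. For a node $\subtraits = \{\trait_{\iTrait_1},\dots,\trait_{\iTrait_\iLvl}\}$ with $\iLvl \le \nTraits$ distinct traits, $\conj(\subtraits) \in \cspan^{\iLvl}(\traits)$, and $\conj(\varnothing) = \one \in \cspan^0$.

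For the inclusion $\subseteq$, take an arbitrary element $\trait_{\iTrait_1} \wedge \dots \wedge \trait_{\iTrait_\iLvl}$ of $\cspan$, and let $\alt\subtraits$ be the set of traits appearing in it. Duplicates are harmless by idempotency, so this element equals $\conj(\alt\subtraits)$. Define the closure
\[
\subtraits \coloneqq \alt\subtraits \cup \textstyle\bigcup_{\trait \in \alt\subtraits} \req(\trait).
\]
By transitivity of $\requires$, $\subtraits$ is closed under primitives, so $\subtraits \in \lattice$.

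It remains to show $\conj(\subtraits) = \conj(\alt\subtraits)$. Every $\traitalt \in \subtraits \setminus \alt\subtraits$ is a primitive of some $\trait \in \alt\subtraits$, so by Fact 2 adjoining it leaves the conjunction unchanged. We use commutativity and associativity to regroup $\traitalt$ next to $\trait$, so that $\trait \wedge \traitalt = \trait$. Adding the finitely many elements of $\subtraits \setminus \alt\subtraits$ one at a time (an induction on $|\subtraits \setminus \alt\subtraits|$) completes the argument.

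Surjectivity and order preservation together give the final sentence of the theorem. The main obstacle is the bookkeeping in part (a): confirming that the closure under primitives really lands in $\lattice$ as defined, and that absorbing primitives does not change the partition. The second point is exactly Fact 2, whose proof relies on discarding empty intersections.
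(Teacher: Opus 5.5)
Your proof is correct and takes essentially the same route as the paper. Part (b) uses the union property \eqref{eq:union2conjunction} together with the fact that a conjunction refines each of its factors. Part (a) enlarges the given traits by their primitives and absorbs them using consistency and the rule that $\trait \wedge \traitalt = \trait$ whenever $\trait \refines \traitalt$. The differences are minor. The paper closes each trait separately, $\subtraits_k = \req(\trait_{i_k}) \cup \{\trait_{i_k}\}$, and appeals to $\lattice$ being closed under unions, whereas you close the whole set at once and check membership in $\lattice$ directly via transitivity. You also spell out the inclusion $\supseteq$, which the paper leaves implicit.
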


\begin{proof}
To begin, we claim that $\req(\trait) \wedge \trait = \trait$ for all $\trait \in \traits$:
indeed, since $\trait \refines \traitalt$ for all $\traitalt \in \req(\traits)$, this is a simple consequence of 
the associativity of $\wedge$ and 
the fact that $\trait \wedge \traitalt = \trait$ whenever $\trait \refines \traitalt$.

With this auxiliary result at hand, let $\trait_{\iTrait_{1}} \wedge \dotsm \wedge \trait_{\iTrait_{\iLvl}} \in \cspan$ be a composite trait generated by the primitive traits of $\traits$.
Then, letting $\subtraits_{\jTrait} = \req(\trait_{\iTrait_{\jTrait}})\cup\{\trait_{\iTrait_{\jTrait}}\}$ for all $\jTrait=1,\dotsc,\iLvl$, we get
\begin{flalign}
\trait_{\iTrait_{1}} \wedge \dotsm \wedge \trait_{\iTrait_{\iLvl}}
	&= (\req(\trait_{\iTrait_{1}}) \wedge \trait_{\iTrait_{1}})
	\wedge
	\dotsm
	\wedge
	(\req(\trait_{\iTrait_{\iLvl}}) \wedge \trait_{\iTrait_{\iLvl}})
	\notag\\
	&= \conj(\req(\trait_{\iTrait_{1}})\cup\{\trait_{\iTrait_{1}}\})
		\wedge \dotsm \wedge
		\conj(\req(\trait_{\iTrait_{\iLvl}})\cup\{\trait_{\iTrait_{\iLvl}}\})
	\notag\\
	&= \conj(\subtraits_{1}) \wedge \dotsm \wedge \conj(\subtraits_{\iLvl})
	\notag\\
	&= \conj(\subtraits_{1}\cup\dotsm\cup\subtraits_{\iLvl}),
\end{flalign}
where the second and fourth lines follow directly from \eqref{eq:union2conjunction}.
Note also that we have $\req(\trait_{\iTrait_{\jTrait}}) \in \lattice$ for all $\jTrait = 1,\dotsc,\iLvl$;
moreover, since $\trait_{\iTrait_{\jTrait}} \in \new{\req(\trait_{\iTrait_{\jTrait}})}$ by definition, it follows that $\subtraits_{\jTrait} \in \lattice$ for all $\jTrait = 1,\dotsc,\iLvl$.
Thus, with $\lattice$ closed under unions, we get $\subtraits \equiv \subtraits_{1}\cup\dotsm\cup\subtraits_{\iLvl}$, and our claim follows.

Finally, for \cref{part:hom-order}, let $\alt\subtraits = \{\trait_{\iTrait_{1}},\dotsc,\trait_{\iTrait_{\alt\iLvl}}\} \subseteq  \{\trait_{\iTrait_{1}},\dotsc,\trait_{\iTrait_{\iLvl}}\} = \subtraits$ for some $\alt\iLvl \leq \iLvl$.
Then, by \eqref{eq:union2conjunction}, we obtain
\begin{flalign}
\conj(\subtraits)
	&= \trait_{\iTrait_{1}} \wedge \dotsc \wedge \trait_{\iTrait_{\iLvl}}
	= (\trait_{\iTrait_{1}} \wedge \dotsc \wedge \trait_{\iTrait_{\alt\iLvl}})
		\wedge (\trait_{\iTrait_{\alt\iLvl+1}} \wedge \dotsc \wedge \trait_{\iTrait_{\iLvl}})
	\notag\\
	&\refines \trait_{\iTrait_{1}} \wedge \dotsc \wedge \trait_{\iTrait_{\alt\iLvl}}
	= \conj(\alt\subtraits),
\end{flalign}
and our proof is complete.
\end{proof}

This theorem serves as a sanity check that the nested presentation of a similarity structure is, indeed, faithful.

%% file: Traits.tex
Consider again a finite set of \emph{alternatives} (or \emph{actions}) indexed by $\elem \in \elems = \{1,\dotsc, \nElems\}$.
Based on their innate traits, these alternatives can be clustered in different ways so that each subgroup shares a common attribute:
in our running example from the introduction, a collection of cars 
could be categorized
by brand into Fords, Chevrolets, Toyotas, etc.;
by type into sedans, compacts, SUVs, and so forth;
by powertrain into gas, hybrid, or electric;
etc. 
To formalize this idea, a \emph{trait} is defined to be a partition $\trait$ of $\elems$ into pairwise disjoint subsets\textemdash called \emph{classes}\textemdash each corresponding to a different \emph{instance} of said trait.
Thus, in the catalogue of cars considered above, \type and \brand would represent different traits, while Ford and Toyota would be different instances of the trait \brand.

Accordingly, to define a similarity structure on $\elems$, we will assume %
that $\elems$ is equipped with a family of traits $\traits = \{\trait_{\iTrait}\}_{\iTrait=1}^{\nTraits}$.
Of course, two or more traits can be combined, leading to clusters like ``Ford compacts'', ``Toyota sedans'', etc.
Formally, the \emph{combination} (or \emph{logical conjunction})
of two traits $\trait,\traitalt \in \traits$ is defined to be the ``composite trait''
$\trait \wedge \traitalt
	= \setdef{\class\cap\alt\class}{\class \in \trait, \alt\class \in \traitalt}\,; $
in words, $\trait \wedge \traitalt$ simply groups together all alternatives that combine an instance of $\trait$ with an instance of $\traitalt$. %
More generally, any subset $\subtraits$ of $\traits$ gives rise to a composite trait via the operation 
$\txs
\subtraits
	\mapsto \conj(\subtraits) \equiv \bigwedge_{\trait \in \subtraits}\trait \,.$
Then, to quantify all this, we will associate to each subset $\subtraits$ of $\traits$ a non-negative \emph{heterogeneity score} $\scalegeneral_{\subtraits}$:
the smaller this score, the more similar any two alternatives agreeing on the traits in $\subtraits$ are.
We naturally posit that $\scalegeneral_{\traits} = 0$ and that $\scalegeneral_{\subtraits} \leq \scalegeneral_{\subtraits'} $ when $\subtraits' \subseteq \subtraits$.

Moving forward, when selecting an alternative, one may first focus on the type of a car before worrying about its brand, or one could do the reverse.
To model this, we define a \emph{priority ranking} on a family of traits $\traits$ to be a strict total order $\higher$ (read: ``\emph{has higher priority than}'').
Given a priority ranking, we can define a sequence $(\subtraits_{\iLvl})_{\iLvl = 0}^{\nTraits}$, where $\subtraits_{0} = \varnothing$, $\subtraits_{\nTraits} = \traits$, and $\subtraits_{\iLvl}$ is the set composed of the first $\iLvl$ traits.
We thus obtain a nested sequence of partitions $\conj(\subtraits_{0}), \dots, \conj(\subtraits_{\nTraits})$, each successively refining its predecessor.
This sequence is nothing but a \emph{tree structure} on $\elems$, and, taken together, the family of traits, heterogeneity scores and the priority rank comprise what we call a \emph{similarity structure} on $\elems$.

In reference to the tree compatible loss model described in \Cref{sec:preliminaries},
if $\subtraits_j$ is the combination of traits associated with the $j$-th level of the tree, then $\sigma_{j+1}$ is simply $\scalegeneral_{\subtraits_j}$.\footnote{The $+1$ increment is due to the way $\similarity(\cdot,\cdot)$ is defined.}
Hence, the similarity scales depend on the priority ranking, and can hence affect the regret bounds.
In the next section, we provide more technical details involving the aforementioned process of constructing the tree, and show how the number of logically admissible trees can be reduced.